\newcommand{\la}{\langle}
\newcommand{\ra}{\rangle}
\def\poly{\mathrm{poly}}
\def\bbvec{\mathrm{\mathbf{b}}}
\author
{
    Yuan Cao\thanks{Department of Statistics and Actuarial Science and Department of Mathematics, The University of Hong Kong, Hong Kong; e-mail:  {\tt yuancao@hku.hk}} 
    ~~~and~~~
    Difan Zou\thanks{Department of Computer Science and Institute of Data Science, The University of Hong Kong, Hong Kong; e-mail:  {\tt dzou@cs.hku.hk}}
    ~~~and~~~
    Yuanzhi Li\thanks{Machine Learning Department, Carnegie Mellon University, Pittsburgh, PA, USA; e-mail: {\tt yuanzhil@andrew.cmu.edu}} 
    ~~~and~~~
    Quanquan Gu\thanks{Department of Computer Science, University of California, Los Angeles, CA, USA; e-mail: {\tt qgu@cs.ucla.edu}}
}
\date{}
\title{The Implicit Bias of Batch Normalization in Linear Models and Two-layer Linear Convolutional Neural Networks}
\begin{document}

\maketitle

\begin{abstract}%
  We study the implicit bias of batch normalization trained by gradient descent. We show that when learning a linear model with batch normalization for binary classification, gradient descent converges to a \textit{uniform margin classifier} on the training data with an $\exp(-\Omega(\log^2t))$ convergence rate. This distinguishes linear models with batch normalization from those without batch normalization in terms of both the type of implicit bias and the convergence rate. We further extend our result to a class of two-layer, single-filter linear convolutional neural networks, and show that batch normalization has an implicit bias towards a \textit{patch-wise uniform margin}. Based on two examples, we demonstrate that patch-wise uniform margin classifiers can outperform the maximum margin classifiers in certain learning problems. Our results contribute to a better theoretical understanding of batch normalization.
\end{abstract}


\section{Introduction}
Batch normalization (BN) is a popular deep learning technique that normalizes network layers by re-centering/re-scaling within a batch of training data  \citep{ioffe2015batch}. It has been  empirically demonstrated to be helpful for training and often leads to better generalization performance. A series of works have attempted to understand and explain the empirical success of BN from different perspectives. \citet{santurkar2018does,bjorck2018understanding,arora2019theoretical} showed that BN enables a more feasible learning rate in the training process, thus leading to faster convergence and possibly finding flat minima. A similar phenomenon has also been discovered in \citet{luotowards}, which studied BN by viewing it as an implicit weight decay regularizer. \citet{cai2019quantitative,kohler2019exponential} conducted  studies on the benefits of BN in training linear models with gradient descent: they quantitatively demonstrated the accelerated convergence of gradient descent with BN compared to that without BN. Very recently, \citet{wu2023training} studied how SGD interacts with batch normalization and can exhibit undesirable training dynamics. However, these works mainly focus on the convergence analysis, and cannot fully reveal the role of BN in model training and explain its effect on the generalization performance.  

To this end, a more important research direction is to study the \textit{implicit bias} \citep{neyshabur2014search} for BN, or more precisely, identify the special properties of the solution obtained by gradient descent with BN. When the BN is absent, the implicit bias of gradient descent for the linear model (or deep linear models) is widely studied \citep{soudry2017implicit,ji2018gradient}: when the data is linearly separable, gradient descent on the cross-entropy loss will converge to the \textit{maximum margin solution}. However, when the BN is added, the implicit bias of gradient descent is far less understood. Even for the simplest linear classification model, there are nearly no such results for the gradient descent with BN except \citet{lyu2019gradient}, which established an implicit bias guarantee for gradient descent on general homogeneous models (covering homogeneous neural networks
with BN, if ruling out the zero point). However, they only proved that gradient descent will converge to a Karush–Kuhn–Tucker (KKT) point of the maximum margin problem, while it remains unclear whether the obtained solution is guaranteed to achieve maximum margin, or has other properties while satisfying the KKT conditions.

In this paper, we aim to systematically study the implicit bias of batch normalization in training linear models and a class of linear convolutional neural networks (CNNs) with gradient descent. Specifically, consider a training data set $\{(\xb_i , y_i)\}_{i=1}^n$ and assume that the training data inputs are centered without loss of generality. Then we consider the linear model and linear single-filter CNN model with batch normalization as follows
\begin{align*}
    f(\wb,\gamma,\xb) = \gamma\cdot \frac{\la \wb, \xb \ra }{ \sqrt{ n^{-1}\sum_{i=1}^n \la \wb, \xb_i\ra^2 } },~ g(\wb,\gamma,\xb) = \sum_{p=1}^P\gamma\cdot \frac{\la \wb, \xb^{(p)} \ra}{ \sqrt{ n^{-1}P^{-1}\sum_{i=1}^n\sum_{p=1}^P \la \wb, \xb_i^{(p)}\ra^2 } },
\end{align*}
where $\wb\in \RR^d$ is the parameter vector, $\gamma$ is the scale parameter in batch normalization, and $\xb^{(p)}$, $\xb_i^{(p)}$ denote the $p$-th patch in $\xb$, $\xb_i$ respectively. $\wb,\gamma$ are both trainable parameters in the model. To train $f(\wb,\gamma,\xb)$ and $g(\wb,\gamma,\xb)$, we use gradient descent starting from $\wb^{(0)}$, $\gamma^{(0)}$ to minimize the cross-entropy loss. The following informal theorem gives a simplified summary of our main results.
\begin{theorem}[Simplification of Theorems~\ref{thm:linearBN} and \ref{thm:CNNBN}]\label{thm:informal}
Suppose that $\gamma^{(0)} = 1$, and that the initialization scale $\| \wb^{(0)} \|_2$ and learning rate of gradient descent are sufficiently small. Then the following results hold:
\begin{enumerate}[leftmargin = *]
    \item \textbf{(Implicit bias of batch normalization in linear models)} When training $f(\wb,\gamma,\xb)$, if the equation system $\la \wb, y_i\cdot \xb_i \ra = 1$, $i\in [n]$ has at least one solution, then the training loss converges to zero, and the iterate $\wb^{(t)}$ of gradient descent satisfies that
    \begin{align*}
        \frac{1}{n^2}\sum_{i,i'=1}^n \big[\mathrm{margin}\big(\wb^{(t)}, (\xb_{i'},y_{i'})\big) - \mathrm{margin}\big(\wb^{(t)}, (\xb_i,y_i)\big) \big]^2 = \exp(-\Omega(\log^2t)),
    \end{align*}
    where $\mathrm{margin}\big(\wb, (\xb,y)\big):= y\cdot \frac{ \la \wb, \xb \ra }{ \sqrt{ n^{-1}\sum_{i=1}^n \la \wb, \xb_i\ra^2 } }$.
    \item \textbf{(Implicit bias of batch normalization in single-filter linear CNNs)} When training $g(\wb,\gamma,\xb)$, if the equation system $\la \wb, y_i\cdot \xb_i^{(p)} \ra = 1$, $i\in [n]$, $p\in [P]$ has at least one solution, then the training loss converges to zero, and the iterate $\wb^{(t)}$ of gradient descent satisfies that
    \begin{align*}
        \frac{1}{n^2}\sum_{i,i'=1}^n \big[\mathrm{margin}_{\mathrm{patch}}\big(\wb^{(t)}, (\xb_{i'}^{(p')},y_{i'})\big) - \mathrm{margin}_{\mathrm{patch}}\big(\wb^{(t)}, (\xb_i^{(p)},y_i)\big) \big]^2 = \exp(-\Omega(\log^2t)),
    \end{align*}
    where $\mathrm{margin}_{\mathrm{patch}}\big(\wb, (\xb^{(p)},y)\big):= y\cdot \frac{\la \wb, \xb^{(p)} \ra}{ \sqrt{ n^{-1}P^{-1}\sum_{i=1}^n\sum_{p=1}^P \la \wb, \xb_i^{(p)}\ra^2 } }$.
\end{enumerate}
\end{theorem}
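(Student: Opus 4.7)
The plan is to exploit two structural features of the BN model: its scale invariance in $\wb$ (both numerator and denominator of $f(\wb,\gamma,\xb)$ are homogeneous of degree one in $\wb$), and the separate trainable scale $\gamma$. Together these decouple the direction of $\wb$ from its magnitude. Setting $\|\wb\|_D := \sqrt{n^{-1}\sum_i \la \wb, \xb_i\ra^2}$ and $\hat\wb := \wb/\|\wb\|_D$, the model reduces to $\gamma \la \hat\wb, \xb\ra$ on the ellipsoid $n^{-1}\sum_i \la \hat\wb, \xb_i\ra^2 = 1$. The feasibility hypothesis $\la \wb, y_i \xb_i\ra = 1$ for all $i$ says exactly that a direction on this surface achieves $\hat m_i := y_i \la \hat\wb, \xb_i\ra \equiv 1$, so the target is to show $\hat\wb^{(t)}$ converges to such a uniform-margin direction.

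First, I would use scale invariance to conclude $\la \nabla_\wb L^{(t)}, \wb^{(t)}\ra = 0$, giving $\|\wb^{(t+1)}\|_2^2 = \|\wb^{(t)}\|_2^2 + \eta^2\|\nabla_\wb L^{(t)}\|_2^2$, so the norm is monotonically nondecreasing. Tracking the $\gamma$ update in parallel, a Soudry-style argument for cross-entropy on separable data (with feasibility playing the role of separability) should yield $L^{(t)} \to 0$, $\gamma^{(t)} \to \infty$, and the asymptotic growth $\gamma^{(t)} = \Theta(\log t)$, while $\hat\wb^{(t)}$ remains on a compact manifold. The small-initialization hypothesis ensures the loss is small enough at $t=0$ that $L^{(t)} < 1$ uniformly, so exponential-tail asymptotics of $\ell'$ can be used throughout.

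The heart of the proof is to introduce a potential function matching the statement, essentially the margin-variance proxy $\Phi^{(t)} := \tfrac{1}{n^2}\sum_{i,i'}(\hat m_i^{(t)} - \hat m_{i'}^{(t)})^2$, and derive a one-step recursion of the form $\Phi^{(t+1)} \le (1 - c_t)\Phi^{(t)} + \varepsilon_t$. The contraction comes from the softmax-like weighting $\ell'(\gamma \hat m_i) \approx \exp(-\gamma \hat m_i)$, which places exponentially more weight on the smallest-margin examples, combined with the tangent-space projection of $\nabla_\wb L$ onto $\|\hat\wb\|_D = 1$, which moves $\hat\wb$ in a direction that lifts small margins relative to large ones. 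The $\exp(-\Omega(\log^2 t))$ rate then emerges from pairing $\gamma^{(t)} \sim \log t$ with the gradient magnitude $\|\nabla_\wb L^{(t)}\| \sim \gamma^{(t)} e^{-\gamma^{(t)}} \sim \log(t)/t$: the per-step contraction $c_t$ scales like $\log(t)/t$, and $\sum_{s \le t}\log(s)/s \sim \log^2 t$ in the exponent yields the squared-log bound.

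The main obstacle, I expect, is closing this loop rigorously. Controlling $\Phi^{(t)}$ requires simultaneous control of $L^{(t)}$, $\|\wb^{(t)}\|_D$, $\gamma^{(t)}$, and the direction $\hat\wb^{(t)}$, all of which are coupled through the nonlinear BN denominator; the small-initialization and small-step-size conditions are used to absorb higher-order discretization terms and to keep the one-step contraction bound valid uniformly in $t$. For the two-layer linear CNN, I would treat the $nP$ patches $(\xb_i^{(p)}, y_i)$ as an enlarged sample set sharing the filter $\wb$, with BN averaging over both $i$ and $p$. The model $g(\wb,\gamma,\xb)$ is again scale-invariant in $\wb$, and the same potential-function analysis applies to the patch-wise margins $\hat m_{i,p} := y_i \la \hat\wb, \xb_i^{(p)}\ra$; the only new element is the extra summation over $p$ inside the logit, which the same tangent-space decomposition handles cleanly, yielding the stated patch-wise uniform-margin conclusion.
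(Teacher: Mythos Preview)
Your high-level plan matches the paper's: a margin-variance potential, a per-step contraction with factor $c_t\sim\log(t)/t$ obtained by pairing $\gamma^{(t)}\sim\log t$ with $|\ell'|\sim e^{-\gamma}$, and $\sum_{s\le t}\log(s)/s\sim\log^2 t$ in the exponent. Two points need correction.

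First, your stated role for small $\|\wb^{(0)}\|_2$ is wrong. Since $f$ is $0$-homogeneous in $\wb$, the scale $\|\wb^{(0)}\|_2$ has \emph{no} effect on $L^{(0)}$ whatsoever; the loss at initialization depends only on the direction of $\wb^{(0)}$ and on $\gamma^{(0)}$. What small initialization actually buys is rate: the inner product $\la\wb^{(t)},\wb^*\ra$ increases at rate $\Theta(\eta/\|\wb^{(t)}\|_\Sigma^2)$ per step while $\|\wb^{(t)}\|_2^2$ increases only at rate $O(\eta^2)$, so $\|\wb^{(t)}\|_2$ stays near $\|\wb^{(0)}\|_2$ throughout, and a small initial norm makes the effective directional step large. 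Consequently $\|\wb^{(0)}\|_2^2$ appears in the \emph{denominator} of the exponent in the final bound.

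Second, the softmax-weighting heuristic is not by itself a contraction argument on $\Phi$, and the paper does not proceed that way. Its key device is an exact identity: taking $\wb^*$ to be any solution of $\la\wb^*,y_i\xb_i\ra=1$, one computes $\la -\nabla_\wb L,\wb^*\ra$ in closed form and finds it equals a nonnegative weighted sum of squared pairwise margin gaps. Lower-bounding this by $c\,\exp(-\gamma^{(t)})\,D(\wb^{(t)})$---rather than the naive $c\,\min_i|\ell'_i|\,D(\wb^{(t)})$, which can be as small as $\exp(-\gamma^{(t)}\sqrt{n})$ before the margins equalize---requires a separate Chebyshev-sum-type inequality that the paper isolates as a lemma. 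The paper then tracks the Euclidean distance from $\wb^{(t)}$ to $\mathrm{span}\{\wb^*\}$ rather than $\Phi^{(t)}$ directly, connecting the two via a metric-equivalence result between $D(\wb)$, the $\|\cdot\|_\Sigma$-distance, and the $\|\cdot\|_2$-distance to that span. The circularity you flag (margin uniformity $\leftrightarrow$ $\gamma^{(t)}\sim\log t$) is broken by a two-stage argument: the $\exp(-\gamma)$ bound alone drives a first stage to an $\epsilon$-uniform warm start in finitely many steps, after which a five-hypothesis induction maintains all the needed estimates simultaneously. For the CNN, the analog of the key identity picks up an additional nonnegative term measuring \emph{within-sample} patch variance; the across-$i$ and across-$p$ sources of non-uniformity enter the identity asymmetrically, so the argument is not literally ``treat the $nP$ patches as an enlarged sample,'' though the final conclusion is as you state.
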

Theorem~\ref{thm:informal} indicates that batch normalization in linear models and single-filter linear CNNs has an implicit bias towards a \textit{uniform margin}, i.e., gradient descent eventually converges to such a predictor that all training data points are on its margin. Notably, for CNNs, the margins of the predictor are uniform not only over different training data inputs, but also over different patches within each data input. 



The major contributions of this paper are as follows:
\begin{itemize}[leftmargin = *]
    \item By revealing the implicit bias of batch normalization towards a uniform margin, our result distinguishes linear models and CNNs with batch normalization from those without batch normalization. The sharp convergence rate given in our result also allows us to make comparisons with existing results of the implicit bias for linear models and linear networks. In contrast to the $1/ \log(t)$ convergence rate of linear logistic regression towards the maximum margin solution \citep{soudry2017implicit}, the linear model with batch normalization converges towards a uniform margin solution with convergence rate $\exp(-\Omega(\log^2t))$, which is significantly faster. Therefore, our result demonstrates that batch normalization can significantly increase the directional convergence speed of linear model weight vectors and linear CNN filters.
    
    \item Our results can also serve as important examples in the literature of implicit bias of homogeneous models. Although \citet{lyu2019gradient} showed that gradient descent converges to a KKT point of the maximum margin problem, the KKT points are usually not unique. Recently, it has been shown by \citet{vardi2022margin} that for certain neural network architectures, there exist data sets such that the actual solution given by gradient descent is not the maximum margin solution. Our result also provides novel, concrete and practical examples -- linear models and linear CNNs with batch normalization -- whose implicit bias are not maximum margin, but uniform margin.
    
    \item We note that the uniform margin in linear models can also be achieved by performing linear regression minimizing the square loss. However, the patch-wise uniform margin result for single-filter linear CNNs is unique and cannot be achieved by regression models without batch normalization. To further demonstrate the benefit of such an implicit bias for generalization, we also construct two example learning problems, for which the patch-wise uniform margin classifier is guaranteed to outperform the maximum margin classifier in terms of test accuracy. 
    \item From a technical perspective, our analysis gives many novel proof techniques that may be of independent interest. To derive tight bounds, we prove an inequality that is similar to the Chebyshev's sum inequality, but for pair-wise maximums and differences. We also establish an equivalence result connecting a quantity named margin discrepancy to the Euclidean metric and the metric induced by the data sample covariance matrix. At last, we develop an induction argument over five induction hypotheses, which play a key role in developing the sharp convergence rate towards the uniform margin. 
\end{itemize}



\section{Batch Normalization in Linear Models}\label{section:linear_model}
Suppose that $(\xb_1,y_1),(\xb_2,y_2),\ldots, (\xb_n,y_n)$ are $n$ arbitrary training data points, where $\xb_i\in\RR^{d}$ and $y_i\in \{\pm 1\}$ for $i\in [n]$. We consider using a linear model with batch normalization to fit these data points. During training, the prediction function is given as 
\begin{align*}
f(\wb,\gamma,\xb) = \gamma\cdot \frac{\la \wb, \xb \ra - \la \wb, \overline\xb \ra}{ \sqrt{ n^{-1}\sum_{i=1}^n (\la \wb, \xb_i\ra -  \la \wb, \overline\xb \ra)^2 } },  
\end{align*}
where $\overline{\xb} = n^{-1}\sum_{i=1}^n \xb_i$ is the mean of all the training data inputs, $\wb\in \RR^d$ is the linear model parameter vector, and $\gamma$ is the scale parameter in batch normalization. $\wb,\gamma$ are both trainable parameters in the model. Clearly, the above definition exactly gives a linear model with batch normalization during the training of full-batch gradient descent. Note that we can assume the data are centered (i.e., $\overline{\xb}= \mathbf{0}$) without loss of generality: if $\overline{\xb} \neq \mathbf{0}$, we can simply consider new data inputs $\tilde\xb_i = \xb_i - \overline{\xb}$. Therefore, assuming $\overline{\xb}= \mathbf{0}$, we have
\begin{align*}
   f(\wb,\gamma,\xb) = \gamma\cdot \frac{\la \wb, \xb \ra}{\| \wb \|_{\bSigma} },~~\text{where}~~\bSigma = \frac{1}{n} \sum_{i=1}^n \xb_{i}\xb_{i}^{\top}.
\end{align*}
Consider training $f(\wb,\gamma,\xb)$ by minimizing the cross-entropy loss
\begin{align*}
    L(\wb,\gamma) = \frac{1}{n}\sum_{i=1}^n \ell[y_i\cdot f(\wb,\gamma,\xb_i)]
\end{align*}
with gradient descent, where $\ell(z) = \log(1 + \exp(-z))$ is the logistic loss. Then starting from the initial $\wb^{(0)}$ and $\gamma^{(0)}$, gradient descent with learning rate $\eta$ takes the following update
\begin{align}
    &\wb^{(t+1)} = \wb^{(t)} - \frac{\eta  \cdot \gamma^{(t)} }{n\cdot \| \wb^{(t)} \|_{\bSigma}}\sum_{i=1}^n  \ell'[y_i\cdot f(\wb^{(t)},\gamma^{(t)},\xb_i)] \cdot y_i \cdot \bigg( \Ib -  \frac{\bSigma \wb^{(t)} \wb^{(t)\top}}{\| \wb^{(t)} \|_{\bSigma}^2} \bigg)\xb_i, \label{eq:gdupdate_w}\\
    &\gamma^{(t+1)} = \gamma^{(t)} - \frac{\eta }{n}\sum_{i=1}^n  \ell'[y_i\cdot f(\wb^{(t)},\gamma^{(t)},\xb_i)] \cdot y_i \cdot \frac{\la \wb^{(t)}, \xb_i \ra}{\| \wb^{(t)} \|_{\bSigma}}. \label{eq:gdupdate_gamma}
\end{align}
Our goal is to show that $f(\wb^{(t)},\gamma^{(t)},\xb)$ trained by \eqref{eq:gdupdate_w} and \eqref{eq:gdupdate_gamma} eventually achieves the same margin on all training data points. Note that $f(\wb,\gamma,\xb)$ is $1$-homogeneous in $\gamma$ and $0$-homogeneous in $\wb$, and the prediction of $f(\wb,\gamma,\xb)$ on whether a data input $\xb$ belongs to class $+1$ or $-1$ does not depend on the magnitude of $\gamma$. Therefore we focus on the normalized margin $y_i\cdot \la \wb, \xb \ra/ \| \wb \|_{\bSigma}$. Moreover, we also introduce the following quantity which we call margin discrepancy:
\begin{align*}
    D(\wb) := \frac{1}{n^2}\sum_{i,i'=1}^n \bigg(y_{i'}\cdot \frac{\la \wb, \xb_{i'}\ra}{\|\wb \|_{\bSigma}} - y_{i}\cdot\frac{\la \wb, \xb_{i}\ra}{ \| \wb \|_{\bSigma}} \bigg)^2.
\end{align*}
The margin discrepancy $D(\wb)$ measures how uniform the margin achieved by $f(\wb,\gamma,\xb)$ is. When $D(\wb)$ is zero, $f(\wb,\gamma,\xb)$ achieves the exact same margin on all the training data points. Therefore, we call $f(\wb,\gamma,\xb)$ or the corresponding linear classifier $\wb$ the \textit{uniform margin classifier} if $D(\wb) = 0$.

\subsection{The Implicit Bias  of Batch Normalization in Linear Models}\label{section:linear_model_result}
In this subsection we present our main result on the implicit bias of batch normalization in linear models. We first state the following assumption. 
\begin{assumption}\label{assump:uniformly_separable}
The equation system $\la \wb, y_i\cdot \xb_i \ra = 1$, $i\in [n]$ has at least one solution. 
\end{assumption}
Assumption~\ref{assump:uniformly_separable} is a very mild assumption which commonly holds under over-parameterization. For example, when $d\geq n$, Assumption~\ref{assump:uniformly_separable} holds almost surely when $\xb_i$ are sampled from a non-degenerate distribution. Assumption~\ref{assump:uniformly_separable} can also hold in many low-dimensional learning problems as well. Note that Assumption~\ref{assump:uniformly_separable} is our \textit{only} assumption on the data: we do not make any distribution assumption on the data, and we do not make any assumption on $d$ and $n$ either. 

In order to present the most general implicit bias result for batch normalization that covers both the $d\geq n$ case and the $d < n$ case, we introduce some additional notations to unify the argument. We note that by the gradient descent update formula \eqref{eq:gdupdate_w}, $\wb^{(t)}$ is only updated in the space $ \cX := \mathrm{span}\{\xb_1,\ldots. \xb_n\}$, and the component of $\wb^{(t)}$ in $\cX^{\perp}$ is unchanged during training (of course, if $\cX = \RR^d$, then there is no such component). This motivates us to define
\begin{align*}
    \lambda_{\max} = \sup_{\ub \in \cX \backslash \{\mathbf{0}\}} \frac{\ub^\top \bSigma \ub}{\| \ub \|_2^2}, \qquad \lambda_{\min} = \inf_{\ub \in \cX \backslash \{\mathbf{0}\}} \frac{\ub^\top \bSigma \ub}{\| \ub \|_2^2}.
\end{align*}
Moreover, we let $\Pb_{\cX}$ be the projection matrix onto $\cX$ (if $\cX = \RR^d$, then $\Pb_{\cX} = \Ib$ is simply the identity matrix). We can briefly check the scaling of $\lambda_{\max}$ and $\lambda_{\min}$ with a specific example: suppose that $\xb_i$, $i\in [n]$ are independently drawn from $N(\mathbf{0}, \Ib)$. Then when $ d/n \rightarrow c$ for some constant $c \neq 1$, with high probability, $\lambda_{\max}$ and $\lambda_{\max}$ are both of constant order \citep{vershynin2010introduction}.


Our main result for the implicit bias of batch normalization in linear models is given in the following theorem.


\begin{theorem}\label{thm:linearBN}
Suppose that Assumption~\ref{assump:uniformly_separable} holds. 
There exist $M = 1 / \poly( \lambda_{\min}^{-1}, \lambda_{\max}, \max_i \| \xb_i \|_2 )$, $\overline{\eta} =  1 / \poly( \lambda_{\min}^{-1}, \lambda_{\max}, \max_i \| \xb_i \|_2, \| \Pb_{\cX} \wb^{(0)} \|_2^{-1})$ and constants $C_1,C_2,C_3,C_4 > 0$, such that when $  \gamma^{(0)} = 1$, $\| \Pb_{\cX} \wb^{(0)} \|_2 \leq M$, and $\eta \leq \overline{\eta}$, there exists  
$t_0 \leq \eta^{-1}\cdot \poly( \lambda_{\min}^{-1}, \lambda_{\max}, \max_i \| \xb_i \|_2 ) $
and the following inequalities hold for all $t\geq t_0$:
\begin{align*}
    &\frac{C_1}{\eta\cdot (t - t_0 + 1)} \leq L(\wb^{(t)},\gamma^{(t)}) \leq \frac{C_2}{\eta\cdot (t - t_0 + 1)},\\
    &D(\wb^{(t)})\leq \frac{C_3\lambda_{\max}}{\lambda_{\min} } \cdot \exp\Bigg[ -\frac{ C_4\lambda_{\min}  }{\lambda_{\max}^{3/2} \cdot \| \Pb_{\cX}\wb^{(0)} \|_2^2} \cdot \log^2( (8/9) \eta \cdot (t - t_0)+ 1)\Bigg].
\end{align*}
\end{theorem}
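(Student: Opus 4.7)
The plan is to track the cross-entropy training in two phases, exploiting that $f(\wb,\gamma,\xb)$ is $0$-homogeneous in $\wb$ and $1$-homogeneous in $\gamma$. Accordingly I would work with the normalized margins $m_i^{(t)} := y_i\la \wb^{(t)}, \xb_i\ra / \|\wb^{(t)}\|_{\bSigma}$ and the effective outputs $\gamma^{(t)} m_i^{(t)}$, and use the identity $D(\wb) = 2\,\mathrm{Var}(m_1,\dots,m_n)$ to reduce the goal to bounding the empirical variance of the $m_i^{(t)}$. Under Assumption~\ref{assump:uniformly_separable}, a reference $\wb^*$ with $y_i\la \wb^*,\xb_i\ra = 1$ satisfies $\|\wb^*\|_{\bSigma} = 1$ and is the canonical uniform-margin target. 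Since \eqref{eq:gdupdate_w} keeps $\wb^{(t)} - \wb^{(0)}$ inside $\cX$, the spectral constants that actually enter the rate are $\lambda_{\min}$ and $\lambda_{\max}$ on $\cX$.

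In the warmup phase, $\gamma^{(0)} = 1$ and $\|\Pb_{\cX}\wb^{(0)}\|_2 \leq M$ make every output $\gamma^{(t)} m_i^{(t)}$ order-$M$ small, so $\ell'(\gamma^{(t)} m_i^{(t)}) = -1/2 + O(M)$. In this linearized regime the update \eqref{eq:gdupdate_gamma} pushes $\gamma^{(t)}$ upward at a rate proportional to the mean normalized margin, and the $\bSigma$-orthogonal projector in \eqref{eq:gdupdate_w} steers $\hat\wb^{(t)} := \wb^{(t)}/\|\wb^{(t)}\|_{\bSigma}$ toward the direction of $n^{-1}\sum_i y_i\xb_i$ before the logistic nonlinearity activates. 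A Lyapunov argument tracking $\gamma^{(t)}$ and the misalignment $\|\hat\wb^{(t)} - \wb^*\|_{\bSigma}$ would show that after $t_0 = \eta^{-1}\poly(\cdot)$ steps the system enters a balanced state with $\gamma^{(t_0)} m_i^{(t_0)} \geq 1$ for all $i$. From that state onward, a BN-adapted descent lemma together with a comparison against $\wb^*$ would give the two-sided bound $C_1/(\eta(t-t_0+1)) \leq L(\wb^{(t)},\gamma^{(t)}) \leq C_2/(\eta(t-t_0+1))$; summing the $\gamma$-update along the trajectory would then produce $\gamma^{(t)} = \Theta(\log(\eta(t-t_0)+1))$ while $\|\wb^{(t)}\|_{\bSigma}$ stays $\Theta(1)$. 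This logarithmic growth of $\gamma$ is what ultimately supplies the $\log^2 t$ exponent.

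The main step, and the main obstacle, is turning this control into the $\exp(-\Omega(\log^2 t))$ decay of $D(\wb^{(t)})$. Expanding $m_i^{(t+1)} - m_{i'}^{(t+1)}$ using \eqref{eq:gdupdate_w}, the BN projector $\Ib - \bSigma\wb\wb^\top/\|\wb\|_{\bSigma}^2$ contributes a per-pair contraction of order $(m_i - m_{i'})\bigl(1 - c\,\eta\,\gamma^{(t)}\,\lambda_{\min}/(\lambda_{\max}^{3/2}\|\Pb_{\cX}\wb^{(0)}\|_2^2)\cdot |\ell'(\gamma^{(t)}m_i^{(t)})|\bigr)$, together with cross terms coming from pairs $(j,j')$ whose signs may disagree. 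To dispose of the cross terms I would prove the pair-wise Chebyshev-type sum inequality mentioned in the introduction, which lower-bounds $\sum_{i,i'}(m_i - m_{i'})\bigl(\ell'(\gamma m_{i'}) - \ell'(\gamma m_i)\bigr)$ in terms of $D(\wb)$ and the largest pair-wise margin gap. A quadratic-form equivalence between $D(\wb)$ and $\|\hat\wb - \wb^*\|_{\bSigma}^2$ restricted to $\cX$ would then translate the per-pair contraction into a global contraction on $D(\wb^{(t)})$. Using $|\ell'(\gamma^{(t)}m_i^{(t)})|\asymp 1/(\eta t)$ on the trajectory and $\gamma^{(t)}\asymp \log(\eta t)$, each step multiplies $D(\wb^{(t)})$ by a factor $1 - \Omega(\log(\eta t)/t)$, whose cumulative product evaluates to $\exp(-\Omega(\log^2(\eta(t-t_0)+1)))$, matching the theorem.

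None of the preceding bounds can be established in isolation: the decay of $L$ controls $\ell'$, which controls $\gamma^{(t)}$ and $\|\wb^{(t)}\|_{\bSigma}$, which in turn control the per-step contraction of $D(\wb^{(t)})$, whose smallness is needed to maintain $\hat\wb^{(t)}\approx\wb^*$ and keep the descent lemma sharp. I would therefore carry the whole argument as a simultaneous induction over five hypotheses --- sharp bounds on $L(\wb^{(t)},\gamma^{(t)})$, on $\gamma^{(t)}$, on $\|\wb^{(t)}\|_{\bSigma}$, on $\|\hat\wb^{(t)} - \wb^*\|_{\bSigma}$, and on $D(\wb^{(t)})$ --- each extended from $t$ to $t+1$ using the others at step $t$ together with those already extended at step $t+1$. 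The hardest part is engineering this induction so that the cross terms produced by the BN projector do not spoil the per-pair contraction of $D$; the pair-wise Chebyshev inequality and the metric equivalence above are the two ingredients that prevent such spoilage and deliver the sharp $\exp(-\Omega(\log^2 t))$ rate.
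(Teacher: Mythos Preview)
Your proposal is essentially correct and mirrors the paper's structure: two-phase analysis, five-hypothesis induction in the second phase, the Chebyshev-type pairwise inequality, the metric equivalence $D(\wb)\asymp\|\wb-\mathrm{proj}_{\wb^*}\wb\|^2$, and the rate arising from a per-step factor $1-\Theta(\gamma^{(t)} e^{-\gamma^{(t)}})=1-\Theta(\log(\eta t)/(\eta t))$. The one substantive difference is the contraction mechanism. You propose to expand $m_i^{(t+1)}-m_{i'}^{(t+1)}$ directly; this produces data-Gram cross terms $\langle\zb_j,\zb_i\rangle$ that do not obviously fit the Chebyshev pattern and whose control you leave unspecified. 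The paper instead tracks the two scalars $\langle\wb^{(t)},\wb^*\rangle$ and $\|\wb^{(t)}\|_2^2$ separately and combines them only at the end via a Pythagorean identity. The key device is Lemma~\ref{lemma:gradient_inner_linear_model}: because $\langle\wb^*,\zb_i\rangle=1$ for every $i$, taking the inner product of $\nabla_{\wb}L$ with $\wb^*$ collapses all $\langle\zb_j,\zb_i\rangle$ cross terms and leaves exactly the pairwise margin sum on which Lemma~\ref{lemma:auxiliary_inequality} acts cleanly; Lemma~\ref{lemma:norm_upperbound} then controls $\|\wb^{(t)}\|_2^2$ using $\nabla_{\wb}L\perp\wb$. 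Your direct route may well be workable, but the inner-product-with-$\wb^*$ reduction is precisely what turns the heuristic per-pair contraction into a one-line inequality, and it is the step you would most likely end up reinventing.
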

By the first result in Theorem~\ref{thm:linearBN}, we see that the loss function converges to zero, which guarantees perfect fitting on the training dataset. Moreover, the upper and lower bounds together demonstrate that the $\Theta(1/t)$ convergence rate is sharp. The second result in Theorem~\ref{thm:linearBN} further shows that the margin discrepancy $D(\wb)$ converges to zero at a rate of $O(\exp(-\log^2t))$, demonstrating that batch normalization in linear models has an implicit bias towards the uniform margin. Note that Theorem~\ref{thm:linearBN} holds under mild assumptions: we only require that (i) a uniform margin solution exists (which is obviously a necessary assumption), and (ii) the initialization scale $\| \wb^{(0)} \|_2$ and the learning rate $\eta$ are small enough (which are common in existing implicit bias results \citep{gunasekar2017implicit,li2018algorithmic,arora2019implicit}). Therefore, Theorem~\ref{thm:linearBN} gives a comprehensive characterization on the implicit bias of batch normalization in linear models. 

\noindent\textbf{Comparison with the implicit bias for linear models without BN.}
When the BN is absent, it has been widely known that gradient descent, performed on linear models with cross-entropy loss,   converges to the maximum margin solution with a $O\big(1/\log(t)\big)$ rate \citep{soudry2017implicit}.
In comparison, our result demonstrates that when batch normalization is added to the model, the implicit bias is changed from maximum margin to uniform margin and the (margin) convergence rate can be significantly improved to $\exp(-\Omega(\log^2t))$, which is way faster than $O(1/\log(t))$.

\noindent\textbf{Comparison with the implicit bias for homogeneous models.} Note that when BN is added, the model function $f(\wb,\gamma,\xb)$ is  $1$-homogeneous for any $\wb\neq \boldsymbol{0}$, i.e., for any constant $c$, we have $f(c\wb,c\gamma,\xb) = c\cdot f(\wb,\gamma,\xb)$. Therefore, by the implicit bias result for general homogeneous models in \citet{lyu2019gradient}, we can conclude that the uniform margin solution is a KKT point of the maximum margin problem. On the other hand, it is clear that the uniform margin solution may not be able to achieve the maximum margin on the training data. This implies that for general homogeneous models (or homogeneous models with BN, which is still a class of homogeneous models), it is  possible that gradient descent will not converge to the maximum margin solution. 

\section{Batch Normalization in Two-layer Linear CNNs} \label{section:CNN_model}
In Section~\ref{section:linear_model}, we have shown that batch normalization in linear models has an implicit bias towards the uniform margin. This actually reveals the fact that linear predictors with batch normalization in logistic regression may behave more similarly to the linear regression predictor trained by minimizing the square loss. To further distinguish  batch normalization from other methods, in this section we extend our results in Section~\ref{section:linear_model} to a class of  linear CNNs with a single convolution filter. 

Suppose that $(\xb_1,y_1),\ldots, (\xb_n,y_n)$ are $n$ training data points, and each $\xb_i$ consists of $P$ patches $\xb_i = [\xb_i^{(1)},\xb_i^{(2)},\ldots, \xb_i^{(P)}]$, where $\xb_i^{(p)}\in \RR^d$, $p\in [P]$. We train a linear CNN with a single filter $\wb$ to fit these data points. During training, the CNN model with batch normalization is given as 
\begin{align*}
g(\wb,\gamma,\xb)  &= \sum_{p=1}^P \gamma\cdot \frac{\la \wb, \xb^{(p)} \ra}{\| \wb \|_{\bSigma} } ~~\text{with}~~\bSigma = \frac{1}{nP} \sum_{i=1}^n \sum_{p=1}^P \xb_{i}^{(p)}\xb_{i}^{(p)\top},
\end{align*}
where $\gamma$ is the scale parameter in batch normalization. Here $\wb,\gamma$ are both trainable parameters. Similar to the linear model case, the above definition can be obtained by centering the training data points. Moreover, note that here different patches of a data point $\xb^{(1)},\ldots,\xb^{(P)}$ can have arbitrary overlaps, and our theory applies even when some of the patches are identical. 
Note also that in our definition, batch normalization is applied not only over the batch of data, but also over patches, which matches the definition of batch normalization in CNN in \citet{ioffe2015batch}. 

We again consider training $g(\wb,\gamma,\xb)$ by minimizing the cross-entropy loss 
with gradient descent starting from initialization $\wb^{(0)}$, $\gamma^{(0)}$. 
As the counterpart of the margin discrepancy $D(\wb)$ defined in Section~\ref{section:linear_model}, here we define the patch-wise margin discrepancy as
\begin{align*}
    D_{\mathrm{patch}}(\wb) = \frac{1}{n^2P^2}\sum_{i,i'=1}^n \sum_{p,p'=1}^P   \bigg(y_{i'}\cdot \frac{\la \wb , \xb_{i'}^{(p')}\ra}{\| \wb \|_{\bSigma}} - y_{i}\cdot \frac{\la \wb , \xb_{i}^{(p)}\ra}{\| \wb \|_{\bSigma}}\bigg)^2.
\end{align*}
We call $g(\wb,\gamma,\xb)$ or the corresponding linear classifier defined by $\wb$ the \textit{patch-wise uniform margin classifier} if $D_{\mathrm{patch}}(\wb) = 0$.

\subsection{The Implicit Bias of Batch Normalization in Two-Layer Linear CNNs}
Similar to Assumption \ref{assump:uniformly_separable}, we make the following assumption to guarantee the existence of the patch-wise uniform margin solution.
\begin{assumption}\label{assump:uniformly_separable_CNN}
The equation system $\la \wb, y_i\cdot \xb_i^{(p)} \ra = 1$, $i\in [n]$, $p\in[P]$ has at least one solution. 
\end{assumption}
We also extend the notations in Section~\ref{section:linear_model_result} to the multi-patch setting as follows. We define $\cX=\mathrm{span}\{\xb_i^{(p)}, i\in[n], p\in[P]\}$, and 
\begin{align*}
    \lambda_{\max} = \sup_{\ub \in \cX \backslash \{\mathbf{0}\}} \frac{\ub^\top \bSigma \ub}{\| \ub \|_2^2}, \qquad \lambda_{\min} = \inf_{\ub \in \cX \backslash \{\mathbf{0}\}} \frac{\ub^\top \bSigma \ub}{\| \ub \|_2^2}.
\end{align*}
Moreover, we let $\Pb_{\cX}$ be the projection matrix onto $\cX$.

The following theorem states the implicit bias of batch normalization in two-layer linear CNNs.
\begin{theorem}\label{thm:CNNBN}
Suppose that Assumption~\ref{assump:uniformly_separable_CNN} holds. 
There exist $M = 1 / \poly( \lambda_{\min}^{-1}, \lambda_{\max}, \max_i \| \xb_i \|_2, P )$, $\overline{\eta} =  1 / \poly( \lambda_{\min}^{-1}, \lambda_{\max}, \max_i \| \xb_i \|_2, P, \| \Pb_{\cX} \wb^{(0)} \|_2^{-1})$ and constants $C_1,C_2,C_3,C_4 > 0$, such that when $  \gamma^{(0)} = 1$, $\| \Pb_{\cX} \wb^{(0)} \|_2 \leq M$, and $\eta \leq \overline{\eta}$, there exists  
$t_0 \leq \eta^{-1}\cdot \poly( \lambda_{\min}^{-1}, \lambda_{\max}, \max_i \| \xb_i \|_2 , P) $
and the following inequalities hold for all $t\geq t_0$:
\begin{align*}
    &\frac{C_1}{\eta  P\cdot (t - t_0 + 1)} \leq L(\wb^{(t)},\gamma^{(t)}) \leq \frac{C_2}{\eta  P\cdot (t - t_0 + 1)},\\
    &D_{\mathrm{patch}}(\wb^{(t)})\leq \frac{C_3\lambda_{\max}}{\lambda_{\min} } \cdot \exp\Bigg[ -\frac{ C_4\lambda_{\min}  }{\lambda_{\max}^{3/2} \cdot \| \Pb_{\cX}\wb^{(0)} \|_2^2} \cdot \log^2( (8/9) \eta P\cdot (t - t_0)+ 1)\Bigg].
\end{align*}
\end{theorem}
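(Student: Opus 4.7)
The plan is to adapt the proof of Theorem~\ref{thm:linearBN} to the CNN setting by exploiting the identity $g(\wb,\gamma,\xb) = \gamma\,\la \wb, \tilde\xb\ra / \|\wb\|_{\bSigma}$ with $\tilde\xb := \sum_{p=1}^P \xb^{(p)}$. This casts the CNN predictor as a linear-BN predictor on the effective input $\tilde\xb_i$, though with a covariance $\bSigma$ built from the finer $nP$-patch collection rather than from the summed inputs. The gradient descent updates for $\wb^{(t)}$ and $\gamma^{(t)}$ therefore have the same form as \eqref{eq:gdupdate_w}--\eqref{eq:gdupdate_gamma}, with each $\xb_i$ replaced by $\tilde\xb_i$. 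Under Assumption~\ref{assump:uniformly_separable_CNN}, any witness $\wb^*$ satisfies $\la \wb^*, y_i\xb_i^{(p)}\ra = 1$ for every pair $(i,p)$, so its margin at each data point is $\gamma P/\|\wb^*\|_{\bSigma}$, which already foreshadows the $1/P$ speedup in the stated loss rate.

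For the two-sided loss-convergence bound I would repeat the argument used for Theorem~\ref{thm:linearBN}, tracking the factor $P$ wherever patch-wise margins are summed. The descent lemma, the potential-function argument yielding the $\Theta(1/t)$ loss rate, and the warm-up bound defining $t_0$ all extend with the same structure; the only systematic change is that the margin achieved at the witness is $P$ times larger than in the linear case, producing the $1/(\eta P t)$ rate. The control on the growth of $\|\wb^{(t)}\|_{\bSigma}$ and $\gamma^{(t)}$, which in the linear proof is obtained from inductive bookkeeping, transfers essentially verbatim once $P$ is accounted for in the polynomial bounds on $M$ and $\overline{\eta}$.

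The more delicate ingredient is the patch-wise margin-discrepancy bound. Here I would view the training data as the expanded collection of $nP$ patch-label pairs $\{(\xb_i^{(p)}, y_i)\}_{i\in[n],\,p\in[P]}$, and observe that $\bSigma = (nP)^{-1}\sum_{i,p}\xb_i^{(p)}\xb_i^{(p)\top}$ is precisely the empirical covariance of this expanded set. Hence each normalized patch-wise margin $y_i\la \wb,\xb_i^{(p)}\ra/\|\wb\|_{\bSigma}$ is the natural $\bSigma$-normalized inner product for the expanded dataset, and $D_{\mathrm{patch}}(\wb)$ is the exact analogue of $D(\wb)$ for that $nP$-point problem. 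The goal is then to transplant the five induction hypotheses that drove the $\exp(-\Omega(\log^2 t))$ rate for $D(\wb^{(t)})$ in the linear proof into corresponding bounds on $D_{\mathrm{patch}}(\wb^{(t)})$; the Chebyshev-type pairwise-maximum inequality and the equivalence between margin discrepancy and the $\bSigma$-metric distance to the uniform-margin set both carry over after this patchwise rephrasing.

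The main obstacle is that the $\wb$-update carries the per-sample derivative $\ell'[y_i\, g(\wb^{(t)},\gamma^{(t)},\xb_i)]$, which depends only on the \emph{sum} of the patch-wise margins within input $i$ rather than on each patch individually. Thus the effective reweighting of the expanded $nP$-point dataset induced by the CNN update is not the natural per-patch weighting that a direct linear-BN proof would apply. I would close this gap by showing that, once the data-level margins enter the regime where $y_i g(\wb^{(t)},\gamma^{(t)},\xb_i)\to\infty$, the derivatives $\ell'[\cdot]$ are asymptotically equal to $-1/n$ with differences that are exponentially smaller than the leading order. In that regime the $\wb$-update reduces, up to exponentially small corrections, to $-\frac{\eta\gamma^{(t)}}{n^2 P \|\wb^{(t)}\|_{\bSigma}} \sum_{i,p} y_i\big(\Ib - \frac{\bSigma\wb^{(t)}\wb^{(t)\top}}{\|\wb^{(t)}\|_{\bSigma}^2}\big)\xb_i^{(p)}$, which is exactly the expanded linear-BN dynamics on the $nP$ patches. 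Re-running the linear-model discrepancy analysis on this effective dynamics then yields the announced $\exp(-\Omega(\log^2 t))$ bound on $D_{\mathrm{patch}}$.
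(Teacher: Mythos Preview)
Your overall plan---reduce to a linear-BN predictor on the summed inputs $\tilde\xb_i$ and then pass to the expanded $nP$-patch picture---is reasonable in spirit, but the execution has a genuine gap, and it is not how the paper proceeds.

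First, a factual slip: the derivatives $\ell'[y_i\, g(\wb^{(t)},\gamma^{(t)},\xb_i)]$ do not tend to $-1/n$; they tend to $0$. Presumably you mean that after normalizing by $\sum_j|\ell_j'|$ the weights tend to $1/n$, i.e.\ the ratios equalize. Even granting that reading, the final step---``re-running the linear-model discrepancy analysis on this effective dynamics''---does not go through as stated. Lemmas~\ref{lemma:gradient_inner_linear_model}--\ref{lemma:innerproduct_lowerbound} exploit that each sample carries its \emph{own} weight $|\ell_i'|$, a monotone function of that sample's margin; in your ``effective dynamics'' all $P$ patches of datum $i$ share the single weight $|\ell_i'|$, so the analogy with linear-BN run on $nP$ independent samples breaks. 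You would have to redo the inner-product computation from scratch with this block-constant weight structure and then quantitatively control the perturbation coming from unequal $|\ell_i'|$'s. You also need the first-stage warm start to land near the \emph{patch-wise} uniform direction $\wb^*$, whereas a data-level analysis on the summed inputs only guarantees closeness to some data-level uniform-margin direction, which is not a priori the same.

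The paper avoids all of this by computing $\la -\nabla_\wb L(\wb,\gamma),\wb^*\ra$ exactly and showing it splits into two nonnegative pieces (Lemma~\ref{lemma:gradient_inner_CNN}): a between-datum term identical in form to Lemma~\ref{lemma:gradient_inner_linear_model}, plus a new within-datum term
\[
\frac{\gamma}{2n^2P\|\wb\|_{\bSigma}^3}\Bigl(\sum_i|\ell_i'|\Bigr)\sum_{i'}\sum_{p,p'}\bigl(\la\wb,\zb_{i'}^{(p)}\ra-\la\wb,\zb_{i'}^{(p')}\ra\bigr)^2,
\]
which is automatically $\geq 0$ and captures precisely the intra-sample patch spread. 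A short algebraic regrouping (Lemma~\ref{lemma:innerproduct_lowerbound_CNN}) then shows that the sum of the two pieces is lower-bounded by a constant times the full patch-wise discrepancy $D_{\mathrm{patch}}$, at \emph{every} iteration---no asymptotics, no perturbation argument. With this in hand, Lemmas~\ref{lemma:key_identity}, \ref{lemma:norm_upperbound}, \ref{lemma:firststage}, \ref{lemma:linear_asymp} port over essentially verbatim, the only changes being extra factors of $P$ and the replacement of $\max_i\|\xb_i\|_2$ by $\max_{i,p}\|\xb_i^{(p)}\|_2$.

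So the idea you are missing is this exact two-term decomposition: the CNN gradient already contains, identically, a component pointing toward patch-wise uniformity. Once you have it, no separate ``data-level then patch-level'' staging is needed.
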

Theorem~\ref{thm:CNNBN} is the counterpart of Theorem~\ref{thm:linearBN} for two-layer single-filter CNNs. It further reveals that fact that batch normalization encourages CNNs to achieve the same margin on all data patches. It is clear that the convergence rate is $\exp(-\Omega(\log^2t))$, which is fast compared with the convergence towards the maximum classifier when training linear CNNs without batch normalization.

\subsection{Examples Where Uniform Margin Outperforms Maximum Margin}\label{subsection:data_examples}
Here we give two examples of learning problems, and show that when training two-layer, single-filter linear CNNs, the patch-wise uniform classifier given by batch normalization outperforms the maximum margin classifier given by the same neural network model without batch normalization.

We note that when making predictions on a test data input $\xb_{\mathrm{test}}$, the standard deviation calculated in batch normalization (i.e., the denominator $\| \wb \|_{\bSigma}$) is still based on the training data set and is therefore unchanged. Then as long as $\gamma > 0$, we have
\begin{align*}
    \mathrm{sign}[g(\wb,\gamma,\xb_{\mathrm{test}})] = \mathrm{sign}\Bigg[ \sum_{p=1}^P \gamma\cdot \frac{\la \wb, \xb^{(p)} \ra}{\| \wb \|_{\bSigma} } \Bigg] = \mathrm{sign}( \la \wb ,\overline\xb_{\mathrm{test}}\ra ),
\end{align*}
 where we define $\overline\xb_{\mathrm{test}} := \sum_{p=1}^P \xb_{\mathrm{test}}^{(p)}$. We compare the performance of the patch-wise uniform margin classifier $\wb_{\mathrm{uniform}}$ 
 with the maximum margin classifier, which is defined as
\begin{align}\label{eq:max_margin_def}
    \wb_{\max} = \argmin  \| \wb \|_2^2, ~~\text{subject to } y_i \cdot \la\wb , \overline\xb_i\ra \geq 1, i\in [n].
\end{align}
By \citet{soudry2017implicit}, $\wb_{\max}$ can be obtained by training a two-layer, single-filter CNN without batch normalization. We hence study the difference between $\wb_{\mathrm{uniform}}$ and $\wb_{\max}$ in two examples. Below we present the first learning problem example and the learning guarantees.

\begin{example}\label{def:data_example1}
Let $\ub\in \RR^d$ be a fixed vector. Then each data point $(\xb,y)$ with 
$$\xb_i = [\xb_i^{(1)},\xb_i^{(2)},\ldots, \xb_i^{(P)}]\in\RR^{d\times P}$$ 
and $y\in\{-1,1\}$ is generated from the distribution $\cD_1$ as follows: 
\begin{itemize}[leftmargin = *]
    \item The label $y$ is generated as a Rademacher random variable.
    \item For $p\in [P]$, the input patch $\xb^{(p)}$ is given as $y\cdot \ub + \bxi^{(p)}$, where $\bxi^{(p)}$, $p\in[n]$ are independent noises generated from $N(\mathbf{0}, \sigma^2 (\Ib - \ub \ub^\top / \|\ub \|_2^2) )$.  
\end{itemize}
\end{example}


\begin{theorem}\label{thm:data_example1}
Let $S = \{(\xb_i,y_i)\}_{i=1}^n$ be the training data set consisting of $n$ independent data points drawn from the distribution $\cD$ in Example~\ref{def:data_example1}. Suppose that $d = 2n$, $P \geq 4$ and $\sigma \geq  20 \| \ub \|_2 \cdot  P^{1/2} d^{1/2} $. 
Then with probability $1$, the maximum margin classifier $\wb_{\max}$ on $S$ exists and is unique, and the patch-wise uniform margin classifier $\wb_{\mathrm{uniform}}$ on $S$ exist, and is unique up to a scaling factor. Moreover, with probability at least $1 - \exp(-\Omega(d))$ with respect to the randomness in the training data, the following results hold:
\begin{itemize}[leftmargin = *]
    \item $\PP_{(\xb_{\mathrm{test}},y_{\mathrm{test}})\sim \cD_1}( y_{\mathrm{test}}\cdot \la \wb_{\mathrm{uniform}}, \overline\xb_{\mathrm{test}} \ra < 0 ) = 0$.
    \item $\PP_{(\xb_{\mathrm{test}},y_{\mathrm{test}})\sim \cD_1}( y_{\mathrm{test}}\cdot \la \wb_{\max}, \overline\xb_{\mathrm{test}} \ra < 0 ) = \Theta(1)$.
\end{itemize}
\end{theorem}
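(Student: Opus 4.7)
My plan is to exploit the structural fact that every noise vector $\bxi_i^{(p)}$ is supported inside $\ub^\perp$ (its covariance is $\sigma^2(\Ib - \ub\ub^\top/\|\ub\|_2^2)$), so contributions along $\ub$ are pure signal while contributions in $\ub^\perp$ are pure noise. For the patch-wise uniform margin classifier this immediately identifies the candidate $\wb^\star := \ub/\|\ub\|_2^2$: a direct computation using $\la \ub,\bxi_i^{(p)}\ra = 0$ gives $\la \wb^\star, y_i\xb_i^{(p)}\ra = 1$ for every $i,p$, which verifies Assumption~\ref{assump:uniformly_separable_CNN} and produces a uniform-margin solution. For uniqueness up to scaling I would write any candidate as $\wb' = \beta \ub + \wb'_\perp$ with $\wb'_\perp\in\ub^\perp$; the uniform-margin condition rearranges to $\la \wb'_\perp,\bxi_i^{(p)}\ra = y_i a$ for a single scalar $a$. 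Pairwise differencing across $(i,p)$ yields a homogeneous linear system of $nP-1\geq 4n-1$ equations on $\wb'_\perp$ inside the $(2n-1)$-dimensional subspace $\ub^\perp$; since the $\bxi_i^{(p)}$ are i.i.d.\ Gaussian on $\ub^\perp$ and $nP-1 > 2n-1$, with probability one these difference vectors span $\ub^\perp$, forcing $\wb'_\perp=\mathbf{0}$ and then $a=0$. Thus $\wb_{\mathrm{uniform}}\propto \ub$, and because $\la\bxi_{\mathrm{test}}^{(p)},\ub\ra=0$ deterministically, $\la\wb_{\mathrm{uniform}},\overline{\xb}_{\mathrm{test}}\ra = P\,y_{\mathrm{test}}$ and the test error is identically zero.

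For $\wb_{\max}$, existence and uniqueness follow from strict convexity of $\|\wb\|_2^2$ together with feasibility (supplied by $\wb^\star$). The substance of the proof is a sharp two-sided control on the decomposition $\wb_{\max} = \alpha\,\ub/\|\ub\|_2 + \wb_{\max,\perp}$ with $\wb_{\max,\perp}\in\ub^\perp$. Write $\overline{\bxi}_i := \sum_p \bxi_i^{(p)} \sim N(\mathbf{0}, P\sigma^2(\Ib - \ub\ub^\top/\|\ub\|_2^2))$ so that $\overline{\xb}_i = P y_i \ub + \overline{\bxi}_i$. The pure-noise interpolant $\tilde\wb$, defined as the minimum-norm solution in $\ub^\perp$ of $y_i\la \wb,\overline{\bxi}_i\ra = 1$, $i\in[n]$, is feasible for the max-margin program. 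In the critical regime $d = 2n$, standard Gaussian concentration for the Wishart-type Gram matrix $G_{ij} = \la\overline{\bxi}_i,\overline{\bxi}_j\ra$ shows that $G$ concentrates around $P\sigma^2 d\cdot \Ib$, yielding $\|\tilde\wb\|_2^2 = \Theta(1/(P\sigma^2))$ with probability at least $1-\exp(-\Omega(d))$. Minimality then gives $\|\wb_{\max}\|_2^2\leq O(1/(P\sigma^2))$, hence $|\alpha|\leq O(1/(\sigma\sqrt{P}))$, and so the signal on a fresh point is bounded by $|P\la\wb_{\max},\ub\ra| = P|\alpha|\|\ub\|_2 \leq O(\sqrt{P}\|\ub\|_2/\sigma) \leq O(1/\sqrt{d})$, using $\sigma \geq 20\|\ub\|_2 P^{1/2} d^{1/2}$.

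Because this signal contribution is $o(1)$, the feasibility constraints $y_i\la\wb_{\max},\overline{\xb}_i\ra\geq 1$ force $y_i\la\wb_{\max,\perp},\overline{\bxi}_i\ra \geq 1/2$ for every $i$, and running the same Gram-matrix estimate as a lower bound on the minimum-norm solution for that system gives the matching lower bound $\|\wb_{\max,\perp}\|_2^2\geq \Omega(1/(P\sigma^2))$. Conditional on the training data, the decision statistic $y_{\mathrm{test}}\la\wb_{\max},\overline{\xb}_{\mathrm{test}}\ra = P\la\wb_{\max},\ub\ra + y_{\mathrm{test}}\la\wb_{\max,\perp},\overline{\bxi}_{\mathrm{test}}\ra$ is therefore Gaussian with mean $o(1)$ and standard deviation $\sqrt{P}\sigma\|\wb_{\max,\perp}\|_2 = \Theta(1)$, so a Gaussian lower-tail estimate produces misclassification probability $\Theta(1)$.

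The main obstacle is the two-sided concentration of the noise Gram matrix $G$ in the critically over-parameterized regime $d=2n$: the largest-eigenvalue bound is routine, but the smallest-eigenvalue bound (needed to lower-bound $\|\wb_{\max,\perp}\|_2$) requires a tight smallest-singular-value estimate for $n$ Gaussian vectors in a $(2n-1)$-dimensional subspace at the exponential-in-$d$ probability level, and the constants in both directions must be sharp enough that the $o(1)$ signal bound and the $\Theta(1)$ noise bound coexist. Once this anti-concentration is in hand, the remaining chain---"signal small $\Rightarrow$ noise component large $\Rightarrow$ $\Theta(1)$ Gaussian test statistic with $o(1)$ mean $\Rightarrow$ constant misclassification"---is essentially algebraic.
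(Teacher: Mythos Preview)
Your proposal is correct and follows essentially the same route as the paper: the orthogonal decomposition along $\ub$ and $\ub^\perp$, the pure-noise interpolant $\tilde\wb=\bGamma(\bGamma^\top\bGamma)^{-1}\mathbf{1}$ to upper bound $\|\wb_{\max}\|_2$, the feasibility constraints to force $y_i\la\wb_{\max,\perp},\overline{\bxi}_i\ra\geq 1/2$, and the singular-value concentration for $\bGamma$ in the regime $d=2n$ (Corollary~5.35 in Vershynin) are exactly the paper's ingredients. One small correction: the lower bound on $\|\wb_{\max,\perp}\|_2$ actually comes from the \emph{largest} singular value via $\|\wb_{\max,\perp}\|_2\geq\|\bGamma^\top\wb_{\max,\perp}\|_2/\sigma_{\max}(\bGamma)$, not from the smallest one as you suggest; the smallest singular value is what controls the upper bound on $\|\tilde\wb\|_2$.
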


Below we give the second learning problem example and the corresponding learning guarantees for the patch-wise uniform margin classifier and the maximum margin classifier.

\begin{example}\label{def:data_distribution_feat_noise}
Let $\xb=[\xb^{(1)},\xb^{(2)}]$ be the data with two patches, where $\xb^{(p)}\in\RR^d$ is a $d$-dimensional vector.  Let $\ub$ and $\vb$ be two fixed vectors and $\rho\in[0,0.5)$. Then given the Rademacher label $y\in\{-1,1\}$, the data input $\xb$ is generated from the distribution $\cD_2$ as follows:
\begin{itemize}[leftmargin = *]
\item With probability $1-\rho$, one data patch $\xb^{(p)}$ is the strong signal $y\ub$ and the other patch is the random Gaussian noise $\bxi\sim N\big(0,\sigma^2(\Ib-\ub\ub^\top/\|\ub\|_2^2-\vb\vb^\top/\|\vb\|_2^2\big)\big)$.
\item With probability $\rho$, one data patch $\xb^{(p)}$ is the weak signal $y\vb$ and the other patch is the combination of random noise $\bxi\sim N(0,\sigma^2\Ib)$ and feature noise $\alpha\cdot\zeta \ub$, where $\zeta$ is randomly drawn from $\{-1,1\}$ equally and $\alpha\in(0,1)$.
\end{itemize}
The signals are set to be orthogonal to each other, i.e., $\la\ub,\vb\ra=0$. Moreover, we set $d=n^2\log(n)$, $\sigma_0=d^{-1/2}$, $\rho=n^{-3/4}$, $\alpha=n^{-1/2}$, $\|\ub\|_2=1$, and $\|\vb\|_2=\alpha^2$.
\end{example}
\begin{theorem}\label{thm:data_example2}
Suppose that the data is generated according to  Example \ref{def:data_distribution_feat_noise}, then let $ \wb_{\mathrm{uniform}}$  and $\wb_{\max}$ be the  uniform margin and maximum margin solution in the subspace $\cX$ respectively. Then with probability at least $1 - \exp(-\Omega(n^{1/4}))$ with respect to the randomness in the training data, the following holds:
\begin{itemize}[leftmargin = *]
    \item $\PP_{(\xb_{\mathrm{test}},y_{\mathrm{test}})\sim \cD_2}( y\cdot \la \wb_{\mathrm{uniform}}, \overline\xb_{\mathrm{test}} \ra < 0 ) \le \frac{1}{n^{10}}$.
    \item $\PP_{(\xb_{\mathrm{test}},y_{\mathrm{test}})\sim \cD_2}( y\cdot \la \wb_{\max}, \overline\xb_{\mathrm{test}} \ra < 0 ) \ge \frac{1}{4n^{3/4}}$.
\end{itemize}
\end{theorem}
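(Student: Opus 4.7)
The proof hinges on the very different way the two classifiers ``spend'' their $\ell_2$ budget on the weak-signal direction $\vb$. Because $\|\vb\|_2 = \alpha^2 = n^{-1}$ is tiny, any nonzero inner product with $\vb$ forces a $\vb$-component of squared norm $1/\|\vb\|_2^2 = n^2$. The patch-wise constraint $y_j\la\wb, y_j\vb\ra = 1$ forces $\wb_{\mathrm{uniform}}$ to pay this cost and attain $\la\wb_{\mathrm{uniform}}, \vb\ra = 1$, while $\wb_{\max}$ bypasses it by spending on ``cheap'' noise directions, so that $|\la\wb_{\max}, \vb\ra|$ is vanishingly small compared to $\alpha$. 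On weak-signal test points with $\zeta = -y$, the adversarial feature noise $\alpha\zeta\ub$ cancels the $\ub$-component of the margin, so only the $\vb$-component can rescue the prediction --- which is exactly where the two classifiers diverge.

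\textbf{Step 1: Concentration inventory and the uniform-margin classifier.} First record, via Chernoff and Gaussian concentration, that there are $|S_1| = (1\pm o(1))n$ strong-signal and $|S_2| = (1\pm o(1))\rho n$ weak-signal examples, that each weak-signal noise decomposes as $\bxi_j = c_{1,j}\ub + (c_{2,j}/\|\vb\|_2)\vb + \tilde\bxi_j$ with $c_{1,j}, c_{2,j}\sim \mathcal{N}(0,\sigma^2)$ and $\tilde\bxi_j\perp\{\ub,\vb\}$, and that the Gram matrix of $\{\tilde\bxi_i\}$ is $(1+o(1))\Ib$ in operator norm. Setting $a = \la\wb_{\mathrm{uniform}},\ub\ra$ and $\beta = \la\wb_{\mathrm{uniform}},\vb\ra$, the patch-wise constraints $y_i\la\wb,\xb_i^{(p)}\ra = 1$ pin $a = 1$ and $\beta = 1$; the remaining constraints in the orthogonal complement of $\mathrm{span}(\ub,\vb)$ form a linear system whose min-norm solution $\wb_{\perp}$ satisfies $\|\wb_{\perp}\|_2^2 = O(n)$. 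For a strong-signal test point, $\bxi_{\mathrm{test}}\perp\ub,\vb$, so $y\la\wb_{\mathrm{uniform}}, \overline\xb_{\mathrm{test}}\ra = 1 + \mathcal{N}(0, \sigma^2\|\wb_{\perp}\|_2^2)$; for a weak-signal test point the expression is $1 + y\zeta\alpha + \mathcal{N}(0, \sigma^2\|\wb_{\mathrm{uniform}}\|_2^2)$. Gaussian tail bounds combined with the parameter scaling yield a total test error at most $1/n^{10}$.

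\textbf{Step 2: The max-margin classifier and its test error.} By KKT, $\wb_{\max} = \sum_i \mu_i y_i \overline\xb_i$ with $\mu_i \ge 0$. To bound $|\la\wb_{\max},\vb\ra|$, I construct a feasible candidate $\tilde\wb$ with $\la\tilde\wb,\ub\ra = 1$, $\la\tilde\wb,\vb\ra = 0$, and $\tilde\wb - \ub \in \mathrm{span}(\{\bxi_i\})$ chosen to satisfy every margin constraint via the weak-signal noise patches; a direct computation gives $\|\tilde\wb\|_2^2 = O(\rho n + 1) \ll n^2$, far below the cost $n^2$ of inflating the $\vb$-component, so $\|\wb_{\max}\|_2^2 \le \|\tilde\wb\|_2^2 = O(\sqrt n)^2$. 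Using the identity $\la\wb_{\max},\vb\ra = \sum_{j\in S_2}\mu_j(y_j\|\vb\|_2^2 + \la\bxi_j,\vb\ra)$ together with $\mu_j = O(1)$ (from the norm bound and complementary slackness), one obtains $|\la\wb_{\max},\vb\ra| = O(\rho n\,\alpha^4) + O(\sigma\|\vb\|_2\sqrt{\rho n\log n}) = o(\alpha)$, and an analogous argument yields $\la\wb_{\max},\ub\ra = 1 + o(1)$. For a weak-signal test point with $\zeta = -y$ (unconditional probability $\rho/2$),
\[
y\la\wb_{\max}, \overline\xb_{\mathrm{test}}\ra = \la\wb_{\max}, \vb\ra - \alpha\la\wb_{\max},\ub\ra + y\la\wb_{\max},\bxi_{\mathrm{test}}\ra = -\alpha(1+o(1)) + \mathcal{N}\bigl(0, O(\sigma^2 n)\bigr);
\]
since $\sigma\sqrt n \ll \alpha$ under the given parameters, the prediction is negative with conditional probability at least $1/2$, giving unconditional test error at least $\rho/4 = 1/(4n^{3/4})$.

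\textbf{Main obstacle.} The hardest step will be the bound $|\la\wb_{\max},\vb\ra| = o(\alpha)$. The primal-dual comparison to $\tilde\wb$ cleanly yields the norm bound $\|\wb_{\max}\|_2 = O(\sqrt n)$, but extracting per-example control on the dual multipliers $\mu_j$, $j\in S_2$, requires a KKT / complementary-slackness argument that exploits the near-orthogonality of the $\bxi_j$'s in high dimension. A secondary subtlety is that the residual terms $c_{1,j} + c_{2,j}n + \alpha\zeta_j$ appearing in the weak-signal interpolation equations for $\wb_{\mathrm{uniform}}$ must not blow up $\|\wb_{\perp}\|_2$ beyond $O(\sqrt n)$; this is handled via the $(1+o(1))\Ib$ Gram structure of the noise patches established in Step 1.
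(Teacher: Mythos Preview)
Your overall plan is sound and tracks the paper's strategy closely: for the uniform-margin classifier you exploit $\la\wb_{\mathrm{uniform}},\ub\ra=\la\wb_{\mathrm{uniform}},\vb\ra$ together with the noise structure, and for the max-margin classifier you build a cheap feasible candidate to bound $\|\wb_{\max}\|_2$ and then show the weak-signal test margin is dominated by the adversarial term $-\alpha\la\wb_{\max},\ub\ra$. Your uniform-margin treatment (normalize margins to~$1$, bound $\|\wb_\perp\|_2$) is just a rescaling of the paper's (normalize $\|\wb\|_2=1$, lower-bound the common margin via $1\le |\la\wb,\ub\ra|^2+\alpha^{-4}|\la\wb,\vb\ra|^2+\sum_i|\la\wb,\bxi_i\ra|^2$); the two are equivalent.

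Where you diverge from the paper is in the max-margin bounds, and there you are working harder than necessary. The step you flag as the ``main obstacle''---controlling $|\la\wb_{\max},\vb\ra|$ through the KKT expansion and per-example dual bounds $\mu_j=O(1)$---is both the shakiest part of your outline (that dual bound does not follow from the norm bound and complementary slackness alone, since the $\overline\xb_i$'s share a common $\ub$ component and their Gram matrix is far from the identity) and entirely avoidable. The paper simply applies Cauchy--Schwarz: once $\|\wb_{\max}\|_2^2=O(n^{1/4})$ is established from the feasible candidate, one has $|\la\wb_{\max},\vb\ra|\le\|\wb_{\max}\|_2\,\|\vb\|_2=O(\alpha^2 n^{1/8})=o(\alpha)$ in one line. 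Similarly, the paper does not prove $\la\wb_{\max},\ub\ra=1+o(1)$ but only the weaker $\la\wb_{\max},\ub\ra>1/2$, via a short contradiction: if $\la\wb_{\max},\ub\ra\le 1/2$ then every strong-signal constraint forces $\la\wb_{\max},y_i\bxi_i\ra\ge 1/2$, and summing gives $\|\wb_{\max}\|_2\ge\Omega(n^{1/2})$, contradicting the norm bound. These two simplifications remove your main obstacle entirely, and the rest of your argument goes through.
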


We note that data models similar to Examples~\ref{def:data_example1} and \ref{def:data_distribution_feat_noise} have been considered in a series of works \citep{allen2020towards,zou2021understanding,cao2022benign}. According to Theorems~\ref{thm:data_example1} and \ref{thm:data_example2}, the patch-wise uniform margin classifier achieves better test accuracy than the maximum margin classifier in both examples. The intuition behind these examples is that by ensuring a patch-wise uniform margin, the classifier amplifies the effect of weak, stable features over strong, unstable features and noises. We remark that it is not difficult to contract more examples where patch-wise uniform margin classifier performs better. However, we can also similarly construct other examples where the maximum margin classifier gives better predictions. The goal of our discussion here is just to demonstrate that there exist such learning problems where patch-wise uniform margin classifiers perform well.





\section{Overview of the Proof Technique}\label{section:technique_overview}
In this section, we explain our key proof techniques in the study of implicit bias of batch normalization, and discuss the key technical contributions in this paper. For clarity, we will mainly focus on the setting of batch normalization in linear models as is defined in Section~\ref{section:linear_model}. 

We first introduce some simplification of notations. As we discussed in Subsection~\ref{section:linear_model_result}, the training of $\wb^{(t)}$ always happen in the subspace $\cX = \mathrm{span}\{\xb_1,\ldots,\xb_n\}$. And when $\cX \subsetneq \RR^d$, we need the projection matrix $\Pb_{\cX}$ in our result. In fact, under the setting where $\cX \subsetneq \RR^d$, we need to apply such a projection whenever $\wb$ occurs, and the notations  can thus be quite complicated. To simplify notations, throughout our proof we use the slight abuse of notation
\begin{align*}
    \| \ab \|_2 := \| \Pb_{\cX} \ab \|_2,\qquad \la \ab , \bbvec \ra = \la \Pb_{\cX}\ab , \Pb_{\cX}\bbvec \ra
\end{align*}
for all $\ab, \bbvec \in \RR^d$. Then by the definition of $\lambda_{\max},\lambda_{\min}$, we have $\lambda_{\min}\cdot \| \ab \|_2^2  \leq \| \ab \|_{\bSigma}^2\leq \lambda_{\max} \cdot \| \ab \|_2^2$ 
for all $\ab \in \RR^d$. 
Moreover, let $\wb^*\in \cX$ be the unique vector satisfying
\begin{align}\label{eq:definition_w*}
    \wb^*\in \cX, \quad \la \wb^*, y_i\cdot \xb_i \ra = 1,\quad i \in [n],
\end{align}
and denote $\zb_i = y_i\cdot \xb_i$, $i \in [n]$. Then it is clear that $\la \wb^*, \xb_i \ra = 1$ for all $i\in [n]$. In our analysis, we frequently encounter the derivatives of the cross entropy loss on each data point. Therefore we also denote $ \ell'_i = \ell'[y_i\cdot f(\wb,\gamma,\xb_i)]$, $\ell_i'^{(t)} = \ell'[y_i\cdot f(\wb^{(t)},\gamma^{(t)},\xb_i)]$, $i\in[n]$. 



\subsection{Positive Correlation Between $\wb^*$ and the Gradient Update}

In order to study the implicit bias of batch normalization, the first challenge is to identify a proper target to which the predictor converges. In our analysis, this proper target, i.e. the uniform margin solution, is revealed by a key identity which is presented in the following lemma.






\begin{lemma}\label{lemma:gradient_inner_linear_model}
Under Assumption~\ref{assump:uniformly_separable}, for any $\wb\in \RR^d$, it holds that 
\begin{align*}
    &\la -\nabla_{\wb} L(\wb,\gamma), \wb^* \ra \\
    &\qquad\quad = \frac{\gamma}{2 n^2 \| \wb \|_{\bSigma}^3}\sum_{i,i'=1}^n |\ell'_i|\cdot |\ell'_{i'}| \cdot (\la \wb , \zb_{i'}\ra - \la \wb , \zb_{i}\ra)\cdot ( |\ell'_{i'}|^{-1} \cdot \la \wb , \zb_{i'}\ra - |\ell'_{i}|^{-1} \cdot \la \wb , \zb_{i}\ra ).
\end{align*}
\end{lemma}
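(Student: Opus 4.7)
The plan is to verify this identity by direct computation, reducing both sides to the same closed-form expression in the scalars $s_i := \la \wb, \zb_i\ra$, $|\ell'_i|$ and $\|\wb\|_{\bSigma}$.

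For the left-hand side, I would start from the gradient formula underlying \eqref{eq:gdupdate_w}, which gives $-\nabla_\wb L(\wb,\gamma) = \frac{\gamma}{n\|\wb\|_{\bSigma}}\sum_i |\ell'_i|\, y_i\bigl(\Ib - \bSigma \wb \wb^\top / \|\wb\|_{\bSigma}^2\bigr)\xb_i$. Pairing this against $\wb^*$ and using the defining property \eqref{eq:definition_w*}, namely $\la \wb^*, y_i \xb_i\ra = 1$ (equivalently $\la \wb^*, \xb_i\ra = y_i$), the ``constant'' piece of the projector contributes $1$ to every term, and the left-hand side collapses to
\[
\la -\nabla_\wb L(\wb,\gamma), \wb^*\ra \;=\; \frac{\gamma}{n\|\wb\|_{\bSigma}}\sum_i |\ell'_i|\Bigl[1 - \frac{\la \bSigma \wb, \wb^*\ra \cdot s_i}{\|\wb\|_{\bSigma}^2}\Bigr].
\]
The key simplification is then $\la \bSigma \wb, \wb^*\ra = \frac{1}{n}\sum_{i'} \la \xb_{i'}, \wb\ra \la \xb_{i'}, \wb^*\ra = \frac{1}{n}\sum_{i'} s_{i'}$, once again invoking $\la \wb^*, \xb_{i'}\ra = y_{i'}$. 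This turns the LHS into a closed-form expression purely in $s_i$, $|\ell'_i|$, and $\|\wb\|_{\bSigma}^2 = \frac{1}{n}\sum_{i'} s_{i'}^2$.

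For the right-hand side, I would expand the product $(s_{i'} - s_i)\bigl(|\ell'_{i'}|^{-1} s_{i'} - |\ell'_i|^{-1} s_i\bigr)$ into four monomials, multiply through by $|\ell'_i||\ell'_{i'}|$, and sum over $i, i'$. After relabeling indices, the two ``cross'' terms combine to $-2\bigl(\sum_i |\ell'_i| s_i\bigr)\bigl(\sum_{i'} s_{i'}\bigr)$, while the two ``square'' terms combine to $2\bigl(\sum_i |\ell'_i|\bigr)\bigl(\sum_{i'} s_{i'}^2\bigr) = 2n\|\wb\|_{\bSigma}^2 \sum_i |\ell'_i|$. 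Dividing by the prefactor $2n^2\|\wb\|_{\bSigma}^3$ reproduces exactly the closed form derived for the LHS, completing the proof.

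There is no real structural obstacle here; the content of the lemma is an algebraic identity and the work is essentially bookkeeping. The one conceptual point worth highlighting is that the Jacobian of batch normalization produces the projector $\Ib - \bSigma \wb \wb^\top / \|\wb\|_{\bSigma}^2$, and the interpolation condition $\la \wb^*, \xb_i \ra = y_i$ is precisely what converts its two pieces into the pair $\bigl(1,\; n^{-1}\sum_{i'} s_{i'}\bigr)$ that then matches the symmetric pairwise-difference expansion on the right.
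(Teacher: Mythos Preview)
Your proposal is correct and takes essentially the same approach as the paper: both are direct algebraic verifications using the projector form of the BN Jacobian together with $\la \wb^*, \zb_i\ra = 1$. The only cosmetic difference is organizational---the paper works from the LHS, expands $\bSigma$ into a double sum, and symmetrizes by swapping $i,i'$ and adding, whereas you reduce both sides separately to the common closed form $\frac{\gamma}{n^2\|\wb\|_{\bSigma}^3}\bigl[(\sum_i |\ell'_i|)(\sum_{i'} s_{i'}^2) - (\sum_i |\ell'_i|s_i)(\sum_{i'} s_{i'})\bigr]$; these are the same computation in different orders.
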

Recall that $\wb^*$ defined in \eqref{eq:definition_w*} is a uniform margin solution. Lemma~\ref{lemma:gradient_inner_linear_model} thus gives an exact calculation on the component of the training loss gradient pointing towards a uniform margin classifier. More importantly, we see that the factors $|\ell'_{i}|^{-1}$, $|\ell'_{i'}|^{-1}$ are essentially also functions of $ \la \wb , \zb_{i}\ra$ and $\la \wb , \zb_{i'}\ra$ respectively. If the margins of the predictor on a pair of data points $(\xb_i,y_i)$ and $(\xb_{i'},y_{i'})$ are not equal, i.e.,
$ \la \wb , \zb_{i}\ra \neq \la \wb , \zb_{i'}\ra$, then we can see from Lemma~\ref{lemma:gradient_inner_linear_model} that a gradient descent step on the current predictor will push the predictor towards the direction of $\wb^*$ by a positive length. To more accurately characterize this property, we give the following lemma.

\begin{lemma}\label{lemma:innerproduct_lowerbound}
For all $t \geq 0$, it holds that
\begin{align*}
    &\la \wb^{(t+1)}, \wb^* \ra \geq \la \wb^{(t)}, \wb^* \ra + \frac{ \gamma^{(t)} \eta}{2 \| \wb^{(t)} \|_{\bSigma}^2} \cdot \max\bigg\{ \frac{\exp(-\gamma^{(t)} )}{8} , \min_i |\ell_i'^{(t)}| \bigg\} \cdot D(\wb^{(t)}),
\end{align*}
\end{lemma}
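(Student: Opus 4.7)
The plan is to reduce the claim to a pair-wise algebraic inequality via Lemma~\ref{lemma:gradient_inner_linear_model}, and then verify that inequality by a symmetrization argument combined with a case split that exhibits the two branches of the maximum. Starting from the update rule, $\la \wb^{(t+1)} - \wb^{(t)}, \wb^* \ra = \eta \la -\nabla_{\wb} L(\wb^{(t)},\gamma^{(t)}), \wb^*\ra$, and by Lemma~\ref{lemma:gradient_inner_linear_model} this equals
\[
\frac{\eta\,\gamma^{(t)}}{2 n^2 \| \wb^{(t)} \|_{\bSigma}^{3}} \sum_{i,i'=1}^n (a_{i'} - a_i)\bigl(c_i a_{i'} - c_{i'} a_i\bigr),
\]
after writing $a_i := \la \wb^{(t)}, \zb_i\ra$, $c_i := |\ell_i'^{(t)}|$, and using the cancellation $|\ell_i'||\ell_{i'}'|(|\ell_{i'}'|^{-1}a_{i'} - |\ell_i'|^{-1}a_i) = c_i a_{i'} - c_{i'} a_i$. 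Since $D(\wb^{(t)}) = \sum_{i,i'}(a_{i'} - a_i)^2/(n^2 \|\wb^{(t)}\|_{\bSigma}^2)$, matching the claim reduces to establishing a pair-wise lower bound whose right-hand side is proportional to $\max\{\exp(-\gamma^{(t)})/8,\, \min_i c_i\}$ times $\sum_{i,i'}(a_{i'} - a_i)^2$, with the correct $\|\wb^{(t)}\|_\bSigma$-scaling.

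Next I would apply the algebraic identity $(a_{i'}-a_i)(c_i a_{i'} - c_{i'}a_i) = c_i(a_{i'}-a_i)^2 + a_i(a_{i'}-a_i)(c_i - c_{i'})$ and symmetrize in $i \leftrightarrow i'$ to obtain
\[
\sum_{i,i'}(a_{i'}-a_i)(c_i a_{i'} - c_{i'}a_i) = \tfrac12\sum_{i,i'}(c_i+c_{i'})(a_{i'}-a_i)^2 + \tfrac12\sum_{i,i'}(a_{i'}^2 - a_i^2)(c_i - c_{i'}).
\]
The first summand is bounded below by $\min_i c_i \cdot \sum_{i,i'}(a_{i'}-a_i)^2$, supplying the $\min_i c_i$ branch of the maximum. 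For the second summand, I would use that $c(\cdot) = 1/(1+\exp(\gamma^{(t)}\cdot/\|\wb^{(t)}\|_{\bSigma}))$ is strictly decreasing, so $(c_i - c_{i'})$ and $(a_{i'} - a_i)$ always carry the same sign; a case split on the signs of $a_i, a_{i'}$ then shows that pairs with matching signs contribute non-negatively, while the possibly negative contribution of sign-mixed pairs can be absorbed into the slack of the first summand or handled by the alternate branch below.

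To extract the $\exp(-\gamma^{(t)})/8$ branch, I would treat the regime where $\min_i c_i < \exp(-\gamma^{(t)})/8$ separately. In this regime there must exist an index $j$ with $z_j := \gamma^{(t)} a_j/\|\wb^{(t)}\|_{\bSigma}$ large and positive, so that $1 - c_j \geq 1/2$ while $a_j$ is a sizeable positive outlier. Using the pointwise bound $1/(1 + e^z) \geq e^{-z}/2$ valid for $z \geq 0$ and tracking the contributions of pairs $(j,i')$ in the original sum, these pairs alone contribute an amount comparable to $\exp(-\gamma^{(t)})/8$ times $\sum_{i'}(a_{i'} - a_j)^2$, which combined with the first summand suffices to recover the alternative branch of the maximum.

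The main obstacle will be the sign-mixed pairs in the cross term $\sum(a_{i'}^2 - a_i^2)(c_i - c_{i'})$, where $(a_{i'}^2 - a_i^2)$ can flip sign independently of $(c_i - c_{i'})$, and cleanly stitching the two cases into a single $\max$-type inequality; the $\exp(-\gamma^{(t)})/8$ safety net is precisely what rescues the small-$\min_i c_i$ regime, in which the direct $\min_i c_i$ bound becomes too weak. Once the pair-wise inequality is established, inserting it into the expression from Lemma~\ref{lemma:gradient_inner_linear_model} and invoking the definition of $D(\wb^{(t)})$ yields the claimed lower bound on $\la \wb^{(t+1)}, \wb^*\ra$.
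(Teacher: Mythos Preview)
Your symmetrization identity is correct, but the proof plan built on it has a real gap. After writing
\[
2S \;=\; \sum_{i,i'}(c_i+c_{i'})(a_{i'}-a_i)^2 \;+\; \sum_{i,i'}(a_{i'}^2-a_i^2)(c_i-c_{i'}),
\]
you claim the first summand already supplies the $\min_i c_i$ branch. But the cross term can be strictly negative: writing it as $\sum_{i,i'}(a_{i'}+a_i)(a_{i'}-a_i)(c_i-c_{i'})$ and using that $(a_{i'}-a_i)(c_i-c_{i'})\geq 0$, each pair's sign is that of $a_{i'}+a_i$, so sign-mixed pairs pull the sum down. Your proposed fixes --- ``absorb into the slack of the first summand'' or ``handle by the alternate branch'' --- do not close this: the slack in $(c_i+c_{i'})-2\min_k c_k$ is not quantified, and invoking the $\exp(-\gamma^{(t)})/8$ branch does nothing toward establishing the $\min_i c_i$ branch (both inequalities must hold, since the statement has a $\max$). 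Likewise, your outlier argument for the $\exp(-\gamma^{(t)})/8$ branch only controls $\sum_{i'}(a_{i'}-a_j)^2$ for a single extreme index $j$, not the full double sum $\sum_{i,i'}(a_{i'}-a_i)^2$; and it presupposes the $\min_i c_i$ branch in the complementary regime, which you have not proved.

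The missing idea is a \emph{pairwise} inequality that exploits the specific form $c_i^{-1}=|\ell_i'^{(t)}|^{-1}=1+\exp(F_i)$ with $F_i=\gamma^{(t)}a_i/\|\wb^{(t)}\|_{\bSigma}$. The paper observes that
\[
\frac{c_{i'}^{-1}a_{i'}-c_i^{-1}a_i}{a_{i'}-a_i}
\;=\;1+\frac{F_{i'}e^{F_{i'}}-F_i e^{F_i}}{F_{i'}-F_i}
\;\geq\;1+e^{\max\{F_i,F_{i'}\}}
\;=\;\max\{c_i^{-1},c_{i'}^{-1}\},
\]
using the elementary fact $[ae^a-be^b]/(a-b)\geq e^{\max\{a,b\}}$. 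Multiplying through by $c_ic_{i'}(a_{i'}-a_i)^2$ yields the pairwise bound $(a_{i'}-a_i)(c_ia_{i'}-c_{i'}a_i)\geq \max\{c_i,c_{i'}\}(a_{i'}-a_i)^2$, which is exactly what your decomposition obscures. This immediately gives the $\min_i c_i$ branch. For the $\exp(-\gamma^{(t)})/8$ branch, the paper then applies the Chebyshev-type Lemma~\ref{lemma:auxiliary_inequality} (using that $a_i\mapsto c_i$ is order-reversing) to get $\sum_{i,i'}\max\{c_i,c_{i'}\}(a_{i'}-a_i)^2\geq \tfrac14\bar c\sum_{i,i'}(a_{i'}-a_i)^2$ with $\bar c=\tfrac1n\sum_i c_i$, and finally Jensen's inequality (convexity of $z\mapsto 1/(1+e^z)$ on $[0,\infty)$ together with $\tfrac1n\sum_i|F_i|\leq \gamma^{(t)}$) to obtain $\bar c\geq \tfrac12\exp(-\gamma^{(t)})$. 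Your outlier heuristic is an attempt to replace Lemma~\ref{lemma:auxiliary_inequality}, but as written it does not control the full double sum.
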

Lemma~\ref{lemma:innerproduct_lowerbound} is established based on Lemma~\ref{lemma:gradient_inner_linear_model}. It shows that as long as the margin discrepancy is not zero, the inner product $\la \wb^{(t)}, \wb^* \ra$ will increase during training. It is easy to see that the result in Lemma~\ref{lemma:innerproduct_lowerbound}  essentially gives two inequalities:
\begin{align}
    &\la \wb^{(t+1)}, \wb^* \ra \geq \la \wb^{(t)}, \wb^* \ra + \frac{ \gamma^{(t)} \eta}{2 \| \wb^{(t)} \|_{\bSigma}^2} \cdot \frac{\exp(-\gamma^{(t)} )}{8}  \cdot D(\wb^{(t)}),\label{eq:innerproduct_lowerbound1}\\
    &\la \wb^{(t+1)}, \wb^* \ra \geq \la \wb^{(t)}, \wb^* \ra + \frac{ \gamma^{(t)} \eta}{2 \| \wb^{(t)} \|_{\bSigma}^2} \cdot \min_i |\ell_i'^{(t)}|  \cdot D(\wb^{(t)}).\label{eq:innerproduct_lowerbound2}
\end{align}
In fact, inequality \eqref{eq:innerproduct_lowerbound2} above with the factor $ \min_i |\ell_i'^{(t)}|$ is relatively easy to derive -- we essentially lower bound each $ |\ell_i'^{(t)}|$ by the minimum over all of them. In comparison, inequality \eqref{eq:innerproduct_lowerbound1} with the factor $\exp(-\gamma^{(t)} )$ is highly nontrivial: in the early stage of training where the predictor may have very different margins on different data, we can see that $y_i\cdot f(\wb^{(t)},\gamma^{(t)},\xb_i) = \gamma^{(t)}\cdot \la\wb^{(t)}, y_i\cdot \xb_i \ra \cdot (n^{-1} \sum_j \la\wb^{(t)}, \xb_j \ra^2 )^{-1/2}$ can be as large as $\gamma^{(t)} \cdot \sqrt{n}$ when only one inner product among $\{ \la \wb^{(t)}, \xb_i \ra \}_{i=1}^n$ is large and the other are all close to zero. Hence, $\min_i |\ell_i'^{(t)}|$ can be smaller than $\exp(-\gamma^{(t)} \cdot \sqrt{n})$ in the worst case. Therefore, \eqref{eq:innerproduct_lowerbound1} is tighter than \eqref{eq:innerproduct_lowerbound2} when the margins of the predictor on different data are not close. This tighter result is proved based on a technical inequality (see Lemma~\ref{lemma:auxiliary_inequality}), and deriving this result is one of the key technical contributions of this paper.  
\subsection{Equivalent Metrics of Margin Discrepancy and Norm Bounds}
We note that the result in Lemma~\ref{lemma:innerproduct_lowerbound} involves the inner products $\la \wb^{(t+1)}, \wb^* \ra$, $\la \wb^{(t)}, \wb^* \ra $ as well as the margin discrepancy $D(\wb^{(t)})$. The inner products and the margin discrepancy are essentially different metrics on how uniform the margins are. To proceed, we need to unify these different metrics. We have the following lemma addressing this issue.
\begin{lemma}\label{lemma:key_identity}
For any $\wb \in \RR^d$, it holds that
\begin{align*}
    &\| \wb \|_{\bSigma}^2 \cdot D(\wb) = \frac{1}{n^2}\sum_{i,i'=1}^n   (\la \wb , \zb_{i'}\ra - \la \wb , \zb_{i}\ra)^2 = 2\|  \wb - \la \wb^*, \wb\ra_{\bSigma} \cdot \wb^* \|_{\bSigma}^2,\\
    &\lambda_{\min}\cdot \bigg\|  \wb -  \frac{\la \wb^*, \wb\ra}{\| \wb^* \|_2^2} \cdot \wb^* \bigg\|_2^2\leq \|  \wb - \la \wb^*, \wb\ra_{\bSigma} \cdot \wb^* \|_{\bSigma}^2 \leq \lambda_{\max}\cdot \bigg\|  \wb -  \frac{\la \wb^*, \wb \ra}{\| \wb^* \|_2^2} \cdot \wb^* \bigg\|_2^2.
\end{align*}
\end{lemma}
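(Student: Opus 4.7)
The plan is to handle the two statements separately, observing first that the key normalization $\|\wb^*\|_{\bSigma}^2 = 1$ makes everything align. Since $\la \wb^*, y_i\xb_i\ra = 1$ for all $i$ and $y_i \in \{\pm 1\}$, we have $\la \wb^*, \xb_i\ra = y_i$, so $\|\wb^*\|_{\bSigma}^2 = n^{-1}\sum_i y_i^2 = 1$. Moreover, $(\wb^*)^\top \bSigma \wb = n^{-1}\sum_i \la\wb^*,\xb_i\ra\la\wb,\xb_i\ra = n^{-1}\sum_i \la\wb,\zb_i\ra$. These two facts are the arithmetic engine of the proof.

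For the first chain of equalities: the first equality is immediate after pulling $\|\wb\|_{\bSigma}^2$ inside the squared differences in the definition of $D(\wb)$. For the second equality, I would expand the symmetric double sum as
\begin{align*}
\frac{1}{n^2}\sum_{i,i'} (\la\wb,\zb_{i'}\ra - \la\wb,\zb_i\ra)^2 = 2\|\wb\|_{\bSigma}^2 - 2\bigl(n^{-1}\textstyle\sum_i \la\wb,\zb_i\ra\bigr)^2,
\end{align*}
then expand $2\|\wb - \la\wb^*,\wb\ra_{\bSigma}\cdot\wb^*\|_{\bSigma}^2 = 2\|\wb\|_{\bSigma}^2 - 4\la\wb^*,\wb\ra_{\bSigma}^2 + 2\la\wb^*,\wb\ra_{\bSigma}^2\|\wb^*\|_{\bSigma}^2$, and use $\|\wb^*\|_{\bSigma}^2 = 1$ to collapse the last two terms into $-2\la\wb^*,\wb\ra_{\bSigma}^2 = -2((\wb^*)^\top\bSigma\wb)^2$, which exactly matches the expression above by the identity for $(\wb^*)^\top\bSigma\wb$.

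For the second (inequality) statement, the clean conceptual observation is that, because $\|\wb^*\|_{\bSigma}^2 = 1$, the vector $\la\wb^*,\wb\ra_{\bSigma}\cdot \wb^*$ is exactly the orthogonal projection of $\wb$ onto $\mathrm{span}(\wb^*)$ with respect to the $\bSigma$-inner product, while $\frac{\la\wb^*,\wb\ra}{\|\wb^*\|_2^2}\wb^*$ is the orthogonal projection with respect to the Euclidean inner product. Hence each projection minimizes the corresponding squared distance to $\mathrm{span}(\wb^*)$ in its own norm. The upper bound then follows from the optimality of the $\bSigma$-projection in the $\bSigma$-norm, composed with $\|\cdot\|_{\bSigma}^2 \leq \lambda_{\max}\|\cdot\|_2^2$:
\begin{align*}
\|\wb - \la\wb^*,\wb\ra_{\bSigma}\wb^*\|_{\bSigma}^2 \leq \bigl\|\wb - \tfrac{\la\wb^*,\wb\ra}{\|\wb^*\|_2^2}\wb^*\bigr\|_{\bSigma}^2 \leq \lambda_{\max}\bigl\|\wb - \tfrac{\la\wb^*,\wb\ra}{\|\wb^*\|_2^2}\wb^*\bigr\|_2^2.
\end{align*}
The lower bound follows symmetrically by first converting $\bSigma$-norm to $\ell_2$-norm via $\|\cdot\|_{\bSigma}^2 \geq \lambda_{\min}\|\cdot\|_2^2$ and then using the optimality of the Euclidean projection in the Euclidean norm.

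I do not anticipate a genuine obstacle; the main subtlety is remembering that $\lambda_{\min},\lambda_{\max}$ and the projection conventions only apply inside $\cX$, so I would verify at the outset that it is harmless to replace $\wb$ by $\Pb_{\cX}\wb$ throughout (since $\bSigma$ annihilates $\cX^\perp$ and $\wb^* \in \cX$), and this is already baked into the slight abuse of notation $\|\ab\|_2 := \|\Pb_{\cX}\ab\|_2$ introduced earlier. After that caveat the entire lemma reduces to expansion plus the two projection-minimality principles.
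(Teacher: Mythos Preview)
Your proposal is correct and follows essentially the same approach as the paper. The only cosmetic difference is that for the first identity the paper routes the variance expansion through the matrix form $\frac{2}{n}\wb^\top\Zb(\Ib-\tfrac{1}{n}\mathbf{1}\mathbf{1}^\top)\Zb^\top\wb$ and then uses $\Zb^\top\wb^*=\mathbf{1}$, whereas you expand both sides directly in scalar form; the projection-minimality argument for the second statement is identical.
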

By Lemma~\ref{lemma:key_identity}, it is clear that the $\| \wb \|_{\bSigma} \cdot \sqrt{D(\wb)}$, $ \|  \wb - \la \wb^*, \wb\ra_{\bSigma} \cdot \wb^* \|_{\bSigma} $ and $\|  \wb -  \| \wb^* \|_2^{-2}\cdot \la \wb^*, \wb \ra \cdot \wb^* \|_2$ are equivalent metrics on the distance  between $\wb$ and $\mathrm{span}\{ \wb^*\}$. Lemma~\ref{lemma:key_identity} is fundamental throughout our proof as it unifies (i) the Euclidean geometry induced by linear model and gradient descent and (ii) the geometry defined by $\bSigma$ induced by batch normalization. Eventually, Lemma~\ref{lemma:key_identity} converts both metrics to the margin discrepancy, which is essential in our proof. 

Even with the metric equivalence results,  Lemma~\ref{lemma:innerproduct_lowerbound} alone is still not sufficient to demonstrate the convergence to a uniform margin: If the gradient consistently has a even larger component along a different direction, then the predictor, after normalization, will be pushed farther away from the uniform margin solution. Therefore, it is necessary to control the growth of $\wb^{(t)}$ along the other directions. To do so, we give upper bounds on $\| \wb^{(t)} \|_2$. Intuitively, the change of $\wb$ in the radial direction will not change the objective function value as $f(\wb,\gamma,\xb)$ is $0$-homogeneous in $\wb$. Therefore, the training loss gradient is orthogonal to $\wb$, and when the learning rate is  small, $\| \wb^{(t)} \|_2$ will hardly change during training. The following lemma is established following this intuition.


\begin{lemma}\label{lemma:norm_upperbound}
For all $t \geq 0$, it holds that
\begin{align*}
    \| \wb^{(t)}\|_2^2 \leq \|\wb^{(t+1)}\|_2^2 \leq \|\wb^{(t)}\|_2^2 + 4 \eta^2 \cdot  \frac{ \gamma^{(t)2} \cdot \max_i \| \xb_i \|_2^3 }{ \lambda_{\min}^2 \cdot \| \wb^{(t)} \|_2^2}.
\end{align*}
Moreover, if $ \|  \wb^{(t)} -  \la \wb^{(t)}, \wb^* \ra \cdot \|\wb^*\|_2^{-2} \cdot \wb^* \|_2 \leq \| \wb^{(0)} \|_2 / 2 $, then
\begin{align*}
    \|\wb^{(t+1)}\|_2^2
    \leq \| \wb^{(t)}\|_2^2 + &\eta^2  G\cdot H^{(t)} \cdot  \bigg\|  \wb^{(t)} -  \frac{\la \wb^*, \wb^{(t)}\ra}{\| \wb^* \|_2^2} \cdot \wb^* \bigg\|_2^2,
\end{align*}
where 
$G = 64 \lambda_{\min}^{-3}\cdot \max_i \|\xb_{i}\|_2^6 \cdot \| \wb^{(0)} \|_2^{-4}$, and $H^{(t)} = \max\{ |\ell_1'^{(t)}|^2, \ldots, |\ell_n'^{(t)}|^2, \exp(-2\gamma^{(t)}) \} \cdot \max\{\gamma^{(t)2}, \gamma^{(t)4}\}$. 
\end{lemma}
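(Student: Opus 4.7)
The plan is to exploit two structural facts about the gradient of the cross-entropy loss: its orthogonality to $\wb^{(t)}$ coming from the $0$-homogeneity of $f$ in $\wb$, and the fact that $\nabla_{\wb} L$ vanishes identically on $\mathrm{span}\{\wb^*\}$. I would split the argument into three steps.

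\textbf{Step 1 (first inequality).} Euler's identity applied to $f(c\wb,\gamma,\xb)=f(\wb,\gamma,\xb)$ gives $\la\nabla_{\wb} f(\wb,\gamma,\xb_i),\wb\ra=0$ for each $i$, so $\la\nabla_{\wb} L(\wb^{(t)},\gamma^{(t)}),\wb^{(t)}\ra=0$. The gradient descent step therefore satisfies the exact identity $\|\wb^{(t+1)}\|_2^2=\|\wb^{(t)}\|_2^2+\eta^2\|\nabla_{\wb} L\|_2^2$, giving the lower bound immediately. For the matching loose upper bound I would estimate $\|\nabla_{\wb} L\|_2$ from the explicit formula \eqref{eq:gdupdate_w} using the triangle inequality, the pointwise bound $|\ell_i'|\le 1$, the operator inequality $\|\bSigma\wb^{(t)}\|_2\le \lambda_{\max}\|\wb^{(t)}\|_2$, the lower bound $\|\wb^{(t)}\|_{\bSigma}\ge\sqrt{\lambda_{\min}}\|\wb^{(t)}\|_2$, and the trace bound $\lambda_{\max}\le\max_i\|\xb_i\|_2^2$.

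\textbf{Step 2 (vanishing gradient and geometric setup).} A direct substitution shows $\nabla_{\wb} L(c\wb^*,\gamma)=\mathbf{0}$ for every scalar $c\neq 0$: at $\wb=c\wb^*$ all margins $\la\wb,\zb_i\ra=c$ agree, so the common value $\ell_i'=\ell'(\gamma\,\mathrm{sign}(c))$ pulls out of the sums; using $\la\wb^*,\zb_i\ra=1$ one has $\bSigma\wb=(c/n)\sum_j\zb_j$ and $\|\wb\|_{\bSigma}=|c|$, so the two groups of terms in $\nabla_{\wb} L=(\gamma/(n\|\wb\|_{\bSigma}))\sum_i\ell_i'\bigl(\zb_i-\la\wb,\zb_i\ra\bSigma\wb/\|\wb\|_{\bSigma}^2\bigr)$ cancel term by term. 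Writing $\wb^{(t)}=c^{(t)}\wb^*+\vb^{(t)}$ with $c^{(t)}=\la\wb^*,\wb^{(t)}\ra/\|\wb^*\|_2^2$ and $\vb^{(t)}\perp\wb^*$, the hypothesis $\|\vb^{(t)}\|_2\le\|\wb^{(0)}\|_2/2$ combined with Step 1's monotonicity $\|\wb^{(t)}\|_2\ge\|\wb^{(0)}\|_2$ yields $|c^{(t)}|\|\wb^*\|_2\ge(\sqrt{3}/2)\|\wb^{(0)}\|_2$ and $\|\wb^{(t)}\|_{\bSigma}\ge\sqrt{\lambda_{\min}}\|\wb^{(0)}\|_2$, keeping every denominator uniformly bounded below.

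\textbf{Step 3 (Lipschitz estimate and main obstacle).} I would then estimate the Lipschitz constant of $\wb\mapsto\nabla_{\wb} L(\wb,\gamma^{(t)})$ along the segment from $c^{(t)}\wb^*$ to $\wb^{(t)}$ by tracking each ingredient: $\wb\mapsto\la\wb,\zb_i\ra$ is $\|\zb_i\|_2$-Lipschitz, $\wb\mapsto\|\wb\|_{\bSigma}$ is $\sqrt{\lambda_{\max}}$-Lipschitz, the normalized margin $\gamma^{(t)}\la\wb,\zb_i\ra/\|\wb\|_{\bSigma}$ is Lipschitz with constant of order $\gamma^{(t)}\max_i\|\xb_i\|_2/(\sqrt{\lambda_{\min}}\|\wb^{(0)}\|_2)$, and the logistic derivative has magnitude at most $\max_i|\ell_i'^{(t)}|$ up to an $O(\|\vb^{(t)}\|_2)$ perturbation with curvature bounded by $\min\{|\ell'|,\exp(-\gamma^{(t)})\}$ in the two natural regimes. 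Combining these estimates with the vanishing from Step 2 produces $\|\nabla_{\wb} L\|_2\le\sqrt{G\cdot H^{(t)}}\,\|\vb^{(t)}\|_2$, which substituted into the Step-1 identity gives the stated refined bound; the two options in $\max\{\gamma^{(t)2},\gamma^{(t)4}\}$ reflect the different powers of $\gamma^{(t)}$ accumulated in the Lipschitz estimates, and the inner $\max\{|\ell_i'^{(t)}|^2,\exp(-2\gamma^{(t)})\}$ records the two natural bounds on the curvature of $\ell$ at arguments near $\gamma^{(t)}$. The principal difficulty is precisely this step: the gradient formula contains several $O(1)$ pieces that cancel only at $\vb^{(t)}=0$, so a careful first-order Taylor expansion of each factor—together with the two-regime curvature bound that prevents an $\exp(\gamma^{(t)})$ blow-up of the logistic derivative—is needed to preserve the linear scaling in $\|\vb^{(t)}\|_2$.
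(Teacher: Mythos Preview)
Your proposal correctly identifies the two structural facts driving the argument---the Pythagorean identity $\|\wb^{(t+1)}\|_2^2=\|\wb^{(t)}\|_2^2+\eta^2\|\nabla_{\wb}L\|_2^2$ from $0$-homogeneity, and the vanishing of $\nabla_{\wb}L$ on $\mathrm{span}\{\wb^*\}$---and these are exactly what the paper exploits. The organization of Step~3 differs, though. You propose bounding the Lipschitz constant of $\wb\mapsto\nabla_{\wb}L$ (essentially a Hessian bound) along the segment to $c^{(t)}\wb^*$; the paper instead stays at first order via an add--and--subtract trick at the fixed point $\wb^{(t)}$: it writes $\nabla_{\wb}L(\wb^{(t)},\gamma^{(t)})=I_1+I_2$, where $I_1$ is the same expression with every $\ell_i'^{(t)}$ replaced by the common value $\ell_*'^{(t)}:=\ell'(\gamma^{(t)})$, and $I_2$ collects the differences $\ell_i'^{(t)}-\ell_*'^{(t)}$. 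In $I_1$ the factor $|\ell_*'^{(t)}|\le\exp(-\gamma^{(t)})$ pulls out and the remaining vector is shown to be $O(\|\vb^{(t)}\|_2)$ by the algebraic identity $\la\hat\wb^*_t,\zb_i\ra^2-\la\hat\wb^*_t,\zb_{i'}\ra\la\hat\wb^*_t,\zb_i\ra=0$; for $I_2$ the mean-value bound $|\ell_i'^{(t)}-\ell_*'^{(t)}|\le\max\{|\ell_1'^{(t)}|,\ldots,|\ell_n'^{(t)}|,\exp(-\gamma^{(t)})\}\cdot|f(\wb^{(t)})-f(\hat\wb^*_t)|$ (using $|\ell''|\le|\ell'|$ and monotonicity of $\ell'$) is combined with a Lipschitz bound on $f$ alone. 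This split is what makes the two pieces of $H^{(t)}$ emerge cleanly---the $\gamma^{(t)2}$ option from $I_1^2$ and the $\gamma^{(t)4}$ option from $I_2^2$---and it avoids ever touching $\nabla_{\wb}^2 f$. Your Hessian route should also succeed but would require that extra computation. One minor correction to Step~1: bounding $\|\bSigma\wb\|_2$ by $\lambda_{\max}\|\wb\|_2$ and then invoking the trace inequality yields $\max_i\|\xb_i\|_2^6/\lambda_{\min}^3$ rather than the stated $\max_i\|\xb_i\|_2^3/\lambda_{\min}^2$; the paper obtains the sharper constant by writing $\bSigma\wb=n^{-1}\sum_j\xb_j\la\wb,\xb_j\ra$ and using Cauchy--Schwarz to get $\|\bSigma\wb\|_2\le\max_j\|\xb_j\|_2\cdot\|\wb\|_{\bSigma}$ directly.
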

Lemma~\ref{lemma:norm_upperbound} demonstrates that $\|\wb^{(t)}\|_2$ is monotonically increasing during training, but the speed it increases is much slower than the speed $\la \wb^{(t)}, \wb^* \ra$ increases (when $\eta$ is small), as is shown in Lemma~\ref{lemma:innerproduct_lowerbound}. We note that Lemma~\ref{lemma:norm_upperbound} particularly gives a tighter inequality when $\wb^{(t)}$ is close enough to $\mathrm{span}\{ \wb^* \}$. This inequality shows that in the later stage of training when the margins tend uniform and the loss derivatives $|\ell_i'^{(t)}|$, $i\in [n]$ on the training data tend small, $\|\wb^{(t)}\|_2$ increases even slower. The importance of this result can be seen considering the case where all $|\ell_i'^{(t)}|$'s are equal (up to constant factors): in this case, combining the bounds in Lemmas~\ref{lemma:innerproduct_lowerbound}, \ref{lemma:key_identity} and \ref{lemma:norm_upperbound} can give an inequality of the form
\begin{align}\label{eq:convergence_illustration}
    \bigg\|  \wb^{(t+1)} -  \frac{\la \wb^*, \wb^{(t+1)}\ra}{\| \wb^* \|_2^2} \cdot \wb^* \bigg\|_2^2 \leq (1 - A^{(t)}) \cdot \bigg\|  \wb^{(t)} -  \frac{\la \wb^*, \wb^{(t)}\ra}{\| \wb^* \|_2^2} \cdot \wb^* \bigg\|_2^2
\end{align}
for some $ A^{(t)} > 0$ that depends on $|\ell_i'^{(t)}|$, $i\in [n]$. \eqref{eq:convergence_illustration} is clearly the key to show the monotonicity and convergence of the $\|  \wb -  \| \wb^* \|_2^{-2}\cdot \la \wb^*, \wb \ra \cdot \wb^* \|_2$, which eventually leads to a last-iterate bound of the margin discrepancy according to Lemma~\ref{lemma:key_identity}. However, the rigorous version of the inequality above needs to be proved within an induction, which we explain in the next subsection.

\subsection{Final Convergence Guarantee With Sharp Convergence Rate}
Lemmas~\ref{lemma:innerproduct_lowerbound}, \ref{lemma:key_identity} and \ref{lemma:norm_upperbound} give the key intermediate results in our proof. However, it is still technically challenging to show the convergence and give a sharp convergence rate. We remind the readers that during training, the loss function value at each data point $(\xb_i,y_i)$ converges to zero, and so is the absolute value of the loss derivative $|\ell_i'^{(t)}|$. Based on the bounds in Lemma~\ref{lemma:innerproduct_lowerbound} and the informal result in \eqref{eq:convergence_illustration}, we see that if $|\ell_i'^{(t)}|$, $i\in [n]$ vanish too fast, then the margin discrepancy may not have sufficient time to converge. Therefore, in order to show the convergence of margin discrepancy, we need to accurately characterize the orders of $\gamma^{(t)}$ and $|\ell_i'^{(t)}|$. 

Intuitively, characterizing the orders of $\gamma^{(t)}$ and $|\ell_i'^{(t)}|$ is easier when the margins are relatively uniform, because in this case $|\ell_i'^{(t)}|$, $i\in [n]$ are all roughly of the same order. Inspired by this, we implement a two-stage analysis, where the first stage provides a warm start for the second stage with relatively uniform margins. The following lemma presents the guarantees in the first stage.

\begin{lemma}\label{lemma:firststage}
Let $\epsilon =  (16\max_i \| \zb_i \|_2 )^{-1}\cdot \| \wb^* \|_2^{-1}\cdot \min\big\{1/3,  \| \wb^{(0)} \|_2^{-1} \cdot \lambda_{\min}^{1/2} / (40 \lambda_{\max}^{3/4}) \big\}$. 
There exist constants $c_1,c_2,c_3> 0$ such that if
\begin{align*}
    &\| \wb^{(0)} \|_2 \leq c_1\cdot \min\big\{1, (\max_i \| \zb_i \|_2)^{-1/2} \cdot  \| \wb^* \|_2^{-1}  \big\} \cdot \lambda_{\max}^{-3/4} \cdot \lambda_{\min}^{1/2},\\
    &\eta \leq c_2\cdot \min\big\{ 1, \epsilon\cdot \| \wb^{(0)} \|_2^2 \cdot \lambda_{\min} \cdot (\max_i \| \xb_i \|_2)^{-3/2}, \epsilon^4\cdot \lambda_{\min}^3 \cdot \lambda_{\max}^{-3/2}\cdot  \| \wb^* \|_2^{-1} \cdot (\max_i \| \xb_i \|_2)^{-3} \big\},
\end{align*}
then there exists $t_0 \leq c_3 \eta^{-1} \epsilon^{-2} \cdot \lambda_{\max}^{3/2} \cdot \lambda_{\min}^{-1} \cdot \| \wb^{(0)} \|_2^2 \cdot \| \wb^* \|_2$ such that 
\begin{enumerate}[leftmargin = *]
    \item $1/2 \leq \gamma^{(t)} \leq 3/2$ for all $0 \leq t \leq t_0$.
    \item $\|\wb^{(0)} \|_2 \leq | \wb^{(t)} \|_2 \leq (1 + \epsilon / 2 )  \cdot \|\wb^{(0)} \|_2$ for all $0 \leq t \leq t_0$.
    \item $\big\| \wb^{(t_0)} - \la \wb^*, \wb^{(t_0 )} \ra \cdot \| \wb^* \|_2^{-2}  \cdot \wb^* \big\|_2 \leq \epsilon \cdot \|\wb^{(0)} \|_2$.
\end{enumerate}
\end{lemma}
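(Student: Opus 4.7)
The plan is a bootstrap induction on $0\le t \le t_0$ that carries the three claimed conclusions as running invariants. I will write $\delta^{(t)} := \bigl\| \wb^{(t)} - \la\wb^*,\wb^{(t)}\ra\,\|\wb^*\|_2^{-2}\,\wb^*\bigr\|_2$ for the Euclidean distance from $\wb^{(t)}$ to $\mathrm{span}\{\wb^*\}$; the Pythagorean identity $\|\wb^{(t)}\|_2^2 = \|\wb^*\|_2^{-2}\la\wb^*,\wb^{(t)}\ra^2 + \delta^{(t)2}$ makes conclusion~(3) exactly the assertion $\delta^{(t_0)}\le \epsilon\|\wb^{(0)}\|_2$, so the lemma reduces to maintaining a bound on $\delta^{(t)}$ alongside~(1) and~(2).

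Conclusions~(1) and~(2) will be maintained as cheap book-keeping invariants. For~(1), I would apply the $\gamma$ update~\eqref{eq:gdupdate_gamma} with $|\ell'|\le 1/2$ and the Cauchy--Schwarz bound $\tfrac{1}{n}\sum_i |\la\wb,\xb_i\ra| \le \|\wb\|_{\bSigma}$ to obtain $|\gamma^{(t+1)}-\gamma^{(t)}|\le \eta/2$; telescoping, $|\gamma^{(t)}-1|\le \eta t/2$, which is $\le 1/2$ once the assumed smallness of $\eta$ and $\|\wb^{(0)}\|_2$ forces $\eta\,t_0\le 1$. For~(2), monotonicity is the first inequality in Lemma~\ref{lemma:norm_upperbound}, while its crude upper bound combined with the inductive versions of (1) and (2) gives $\|\wb^{(t+1)}\|_2^2 - \|\wb^{(t)}\|_2^2 \le 9\eta^2 \max_i\|\xb_i\|_2^{3}/(\lambda_{\min}^{2}\|\wb^{(0)}\|_2^{2})$; summing this over $t\le t_0$ keeps $\|\wb^{(t)}\|_2^2$ inside $(1+\epsilon/2)^2\|\wb^{(0)}\|_2^2$ provided $\eta\lesssim \epsilon\|\wb^{(0)}\|_2^2 \lambda_{\min}(\max_i\|\xb_i\|_2)^{-3/2}$, which is one of the stated assumptions.

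The substantive step is the contraction of $\delta^{(t)}$. Since~(1) is maintained, the prefactor $\gamma^{(t)}\exp(-\gamma^{(t)})$ in the tighter form~\eqref{eq:innerproduct_lowerbound1} of Lemma~\ref{lemma:innerproduct_lowerbound} is an absolute constant, so combining that lemma with both parts of Lemma~\ref{lemma:key_identity} yields
\begin{align*}
\la\wb^*,\wb^{(t+1)}\ra - \la\wb^*,\wb^{(t)}\ra \;\ge\; \frac{c\,\eta\,\lambda_{\min}}{\|\wb^{(t)}\|_{\bSigma}^{4}}\,\delta^{(t)2}\;\ge\;\frac{c'\,\eta\,\lambda_{\min}}{\lambda_{\max}^{2}\,\|\wb^{(0)}\|_2^{4}}\,\delta^{(t)2}.
\end{align*}
Squaring this and substituting into the Pythagorean identity, together with the upper bound from Lemma~\ref{lemma:norm_upperbound}, will give a one-step relation of the form
\begin{align*}
\delta^{(t+1)2} \;\le\; \delta^{(t)2}\Bigl(1 - \frac{c''\,\eta\,\lambda_{\min}\,\la\wb^*,\wb^{(t)}\ra}{\|\wb^*\|_2^{2}\,\lambda_{\max}^{2}\,\|\wb^{(0)}\|_2^{4}}\Bigr) + C\,\frac{\eta^{2}}{\|\wb^{(0)}\|_2^{2}}.
\end{align*}
Because $\la\wb^*,\wb^{(t)}\ra$ is monotone nondecreasing by Lemma~\ref{lemma:innerproduct_lowerbound} and grows by $\Omega(\eta\lambda_{\min}/(\lambda_{\max}^{2}\|\wb^{(0)}\|_2^{2}))$ per step while $\delta^{(t)}=\Theta(\|\wb^{(0)}\|_2)$, I will split stage~$1$ into two sub-phases: a warm-up in which $\la\wb^*,\wb^{(t)}\ra$ climbs from its (possibly negative) initial value to $\Theta(\|\wb^{(0)}\|_2\|\wb^*\|_2)$, and a second sub-phase in which the bracket is bounded away from $1$ and $\delta^{(t)2}$ contracts geometrically down to $\epsilon^{2}\|\wb^{(0)}\|_2^{2}$. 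Adding the two sub-phase lengths gives the announced $t_0$.

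The hard part will be that the three invariants are genuinely coupled: the $\delta$-contraction needs~(1) to trap $\exp(-\gamma^{(t)})$ and~(2) to freeze $\|\wb^{(t)}\|_2\asymp \|\wb^{(0)}\|_2$ in every denominator, while the closure of~(1) and~(2) in turn requires $t_0$ not to be too large. The additive $O(\eta^{2}/\|\wb^{(0)}\|_2^{2})$ noise in the $\delta$-recursion pins down the strongest smallness condition on $\eta$: it must remain dominated by the linear contraction term even when $\delta^{(t)2}$ has already shrunk to $\epsilon^{2}\|\wb^{(0)}\|_2^{2}$, which is exactly where the fourth-power constraint $\eta\lesssim \epsilon^{4}\lambda_{\min}^{3}\lambda_{\max}^{-3/2}\|\wb^*\|_2^{-1}(\max_i\|\xb_i\|_2)^{-3}$ of the lemma arises. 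A minor technicality is that during the warm-up $\la\wb^*,\wb^{(t)}\ra^{2}$ can temporarily decrease when $\la\wb^*,\wb^{(t)}\ra$ crosses zero, but~(2) still caps $\delta^{(t)}\le \|\wb^{(t)}\|_2$, so $\delta^{(t)}$ cannot blow up before the monotone growth of $\la\wb^*,\wb^{(t)}\ra$ pushes the dynamics into the geometric-contraction regime.
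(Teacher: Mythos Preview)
Your handling of conclusions~(1) and~(2) matches the paper's proof essentially verbatim.

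For conclusion~(3) you take a genuinely different and harder route. The paper never sets up a one-step contraction for $\delta^{(t)2}$ in this first stage; it fixes a horizon $T_0=1+400\,\eta^{-1}\epsilon^{-2}\lambda_{\max}^{3/2}\lambda_{\min}^{-1}\|\wb^{(0)}\|_2^2\|\wb^*\|_2$, verifies~(1)--(2) on all of $[0,T_0]$ just as you do, and then simply \emph{telescopes} Lemma~\ref{lemma:innerproduct_lowerbound} combined with Lemma~\ref{lemma:key_identity}. The total increment $\la\wb^*,\wb^{(T_0)}\ra-\la\wb^*,\wb^{(0)}\ra$ is at most $(\|\wb^{(T_0)}\|_2+\|\wb^{(0)}\|_2)\|\wb^*\|_2\le 3\|\wb^{(0)}\|_2\|\wb^*\|_2$, so the \emph{minimum} over $t\in[0,T_0-1]$ of $\|\wb^{(t)}-\la\wb^*,\wb^{(t)}\ra_{\bSigma}\,\wb^*\|_{\bSigma}^2$ is forced below $\epsilon^2\lambda_{\min}\|\wb^{(0)}\|_2^2$, and any minimizer is the required $t_0$. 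This averaging argument sidesteps the sign of $\la\wb^*,\wb^{(t)}\ra$, your two-sub-phase split, and the noise-versus-contraction balance entirely; all genuine contraction analysis is postponed to Lemma~\ref{lemma:linear_asymp}, where the warm start already guarantees $\la\wb^*,\wb^{(t_0)}\ra>0$.

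Your contraction route can be made to work, but two of its attributions slip. First, the $\epsilon^4$ constraint on $\eta$ actually enters through maintaining~(2) for all $t\le T_0\sim\eta^{-1}\epsilon^{-2}$: the cumulative sum you write for~(2) is $\sim\eta\epsilon^{-2}\lambda_{\max}^{3/2}\|\wb^*\|_2\max_i\|\xb_i\|_2^3/\lambda_{\min}^3$, and keeping it below $\epsilon^2\|\wb^{(0)}\|_2^2$ is exactly the third branch of the $\eta$ assumption; your noise-contraction balance, done honestly, only yields an $\epsilon^{2}$-type condition. Second, your warm-up rate uses $\delta^{(t)}=\Theta(\|\wb^{(0)}\|_2)$, but the only invariant available is $\delta^{(t)}>\epsilon\|\wb^{(0)}\|_2$ (else you are already done); with that lower bound the hitting-time argument for $\la\wb^*,\wb^{(t)}\ra$ alone already gives $t_0\le T_0$ with no second sub-phase needed --- which is the paper's argument in disguise.
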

In Lemma~\ref{lemma:firststage}, we set up a target rate $\epsilon$ that only depends on the training data and the initialization $\wb^{(0)}$, so that $\wb^{(t_0)}$ satisfying the three conclusions Lemma~\ref{lemma:firststage} can serve as a good enough warm start for our analysis on the second stage starting from iteration $t_0$. 

The study of the convergence of margin discrepancy for $t \geq t_0$ is the most technical part of our proof. The results are summarized in the following lemma.

\begin{lemma}\label{lemma:linear_asymp}
Let $G$ be defined as in Lemma~\ref{lemma:norm_upperbound}, and $\epsilon$ be defined as in Lemma~\ref{lemma:firststage}. 
Suppose that there exists $t_0\in \NN_+$ such that $ 1/2 \leq \gamma^{(t_0)} \leq 3/2 $, and
\begin{align*}
     &\|\wb^{(0)} \|_2 \leq | \wb^{(t_0)} \|_2 \leq (1 + \epsilon / 2 )  \cdot \|\wb^{(0)} \|_2,~\big\| \wb^{(t_0)} - \la \wb^*, \wb^{(t_0 )} \ra \cdot \| \wb^* \|_2^{-2}  \cdot \wb^* \big\|_2 \leq \epsilon \cdot \|\wb^{(0)} \|_2.
\end{align*}
Then there exist constants $c_1,c_2,c_3,c_4$, such that as long as $\eta \leq c_1 G^{-1}\min\{ \epsilon^2, \lambda_{\min}\cdot \lambda_{\max}^{3/2} \cdot \| \wb^{(0)} \|_2^{-2} \}$, the following results hold for all $t \geq t_0$:
\begin{enumerate}[leftmargin = *]
    \item $\big\| \wb^{(t_0)} - \la \wb^*, \wb^{(t_0 )} \ra \cdot \| \wb^* \|_2^{-2}  \cdot \wb^* \big\|_2 \geq \cdots \geq \big\| \wb^{(t)} - \la \wb^*, \wb^{(t )} \ra \cdot \| \wb^* \|_2^{-2}  \cdot \wb^* \big\|_2$. 
    \item $ \|\wb^{(0)}\|_2\leq \| \wb^{(t)}\|_2 \leq (1 + \epsilon )\cdot \|\wb^{(0)}\|_2$.
    \item 
    $\log[ (\eta / 8) \cdot (t - t_0) +\exp(\gamma^{(t_0)}) ] \leq \gamma^{(t)} \leq  \log[ 8\eta\cdot (t - t_0) +2\exp(\gamma^{(t_0)}) ]$.
    \item $\Big\|  \wb^{(t)} -  \frac{\la \wb^*, \wb^{(t)}\ra}{\| \wb^* \|_2^2} \cdot \wb^* \Big\|_2^2 \leq \epsilon \cdot \|\wb^{(0)} \|_2 \cdot \exp\Big[ -\frac{ c_2\lambda_{\min} \cdot \log^2( (8/9) \eta\cdot (t - t_0)+ 1) }{\lambda_{\max}^{3/2} \cdot \| \wb^{(0)} \|_2^2} \Big]$.
    \item $\max_{i} |\la \wb^{(t)}/\|  \wb^{(t)}\|_2 , \zb_{i}\ra - \| \wb^* \|_2^{-1} | \cdot \gamma^{(t)} \leq \| \wb^* \|_2^{-1} / 4$.
    \item It holds that
    \begin{align*}
        &\frac{1}{40}\cdot \frac{1}{\eta\cdot (t - t_0 ) + 1} \leq \ell( y_i\cdot f(\wb^{(t)},\gamma^{(t)},\xb_i ) )\leq  \frac{12}{\eta\cdot (t - t_0) + 1},\\
        &D(\wb) \leq \frac{c_3\lambda_{\max}}{ \lambda_{\min} } \cdot \exp\Bigg[ -\frac{ c_4 \lambda_{\min}  }{\lambda_{\max}^{3/2} \cdot \| \wb^{(0)} \|_2^2} \cdot \log^2( (8/9) \eta\cdot (t - t_0)+ 1)\Bigg].
    \end{align*}
\end{enumerate}
\end{lemma}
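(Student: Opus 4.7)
The plan is to prove all six conclusions simultaneously by induction on $t \geq t_0$, taking them together as the induction hypothesis. The base case $t = t_0$ is either given by hypothesis or follows from direct substitution (noting that at $t = t_0$ the scale $\gamma^{(t_0)}$ is bounded in $[1/2,3/2]$, so conclusion (3) holds with room to spare and conclusion (6) can be read off from $\gamma^{(t_0)}$ and the warm-start distance bound combined with Lemma~\ref{lemma:key_identity}). The non-trivial content is the inductive step.

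For the inductive step, I would first use conclusions (2), (3), (5) at time $t$ to pin down the orders of $\|\wb^{(t)}\|_{\bSigma}$, the margins $y_i\cdot f(\wb^{(t)},\gamma^{(t)},\xb_i)$, and hence of the loss derivatives $|\ell_i'^{(t)}|$: concretely $|\ell_i'^{(t)}| = \Theta(\exp(-\gamma^{(t)}))$ once margins are uniform up to the tolerance in (5). Plugging these into Lemma~\ref{lemma:innerproduct_lowerbound} (using the $\min_i |\ell_i'^{(t)}|$ form, since uniformity makes it tight) and into the $\gamma$-update \eqref{eq:gdupdate_gamma} yields two coupled scalar recursions: $\la \wb^{(t+1)},\wb^*\ra \geq \la \wb^{(t)},\wb^*\ra + c\,\eta\,\gamma^{(t)}\exp(-\gamma^{(t)})\cdot D(\wb^{(t)})/\|\wb^{(t)}\|_{\bSigma}^2$ and $\gamma^{(t+1)} \geq \gamma^{(t)} + c'\,\eta\,\exp(-\gamma^{(t)})$, the latter of which propagates by solving the scalar ODE $\dot\gamma = c'\eta e^{-\gamma}$ and gives (3). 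From Lemma~\ref{lemma:norm_upperbound} (the refined second bound, which applies once we have the warm-start distance condition) together with $H^{(t)} = O(\gamma^{(t)2}\exp(-2\gamma^{(t)}))$, the growth of $\|\wb^{(t)}\|_2^2$ per step is suppressed by an extra factor $\exp(-2\gamma^{(t)})$ relative to the growth of $\la \wb^{(t)},\wb^*\ra$, which is what drives the angle between $\wb^{(t)}$ and $\wb^*$ to zero and establishes conclusions (1) and (2).

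For the rate in (4), I would apply Lemma~\ref{lemma:key_identity} to convert $\|\wb^{(t)} - \|\wb^*\|_2^{-2}\la\wb^*,\wb^{(t)}\ra\wb^*\|_2^2$ into a $\bSigma$-distance, expand $\|\wb^{(t+1)} - \la\wb^*,\wb^{(t+1)}\ra\|\wb^*\|_2^{-2}\wb^*\|_2^2$ using the GD update, and combine the lower bound on the growth of $\la \wb^{(t)},\wb^*\ra$ with the upper bound on the growth of $\|\wb^{(t)}\|_2$ to obtain a contraction of the form
\begin{align*}
\Big\|\wb^{(t+1)} - \tfrac{\la\wb^*,\wb^{(t+1)}\ra}{\|\wb^*\|_2^2}\wb^*\Big\|_2^2 \leq (1 - A^{(t)})\Big\|\wb^{(t)} - \tfrac{\la\wb^*,\wb^{(t)}\ra}{\|\wb^*\|_2^2}\wb^*\Big\|_2^2
\end{align*}
with $A^{(t)} = \Theta\big(\eta \lambda_{\min}\lambda_{\max}^{-3/2}\|\wb^{(0)}\|_2^{-2}\cdot \gamma^{(t)}\exp(-\gamma^{(t)})\big)$. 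Using (3) to substitute $\gamma^{(t)} \asymp \log(\eta(t-t_0))$ and $\exp(-\gamma^{(t)}) \asymp 1/(\eta(t-t_0))$, the per-step contraction factor becomes $1 - c\log(s)/s$ with $s = \eta(t-t_0)+1$, so iterating gives $\prod_{s}(1 - c\log s/s) = \exp(-c\log^2 s/2 + o(\log^2 s))$, which is exactly the $\exp(-\Omega(\log^2 t))$ stated in (4). Conclusion (5) then follows because the distance in (4) controls margin uniformity via the second inequality of Lemma~\ref{lemma:key_identity}, and the constant bound $1/4$ can be upgraded through the induction since (4) shrinks with $t$. Finally, the two-sided loss bound in (6) follows from bounding $L(\wb^{(t)},\gamma^{(t)}) \asymp \exp(-\gamma^{(t)})$ once margins are uniform, combined with (3), and the margin discrepancy bound in (6) is just (4) translated back via Lemma~\ref{lemma:key_identity}.

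The main obstacle will be the bookkeeping of simultaneous induction: the contraction factor $A^{(t)}$ in (4) depends on the current $\gamma^{(t)}$ governed by (3), the norm bound (2) is needed to apply the refined half of Lemma~\ref{lemma:norm_upperbound}, the uniform margin condition (5) is needed to justify the lower bound $\min_i|\ell_i'^{(t)}| \geq c\exp(-\gamma^{(t)})$ (without which one is stuck with the weaker $\exp(-\gamma^{(t)})/8$ factor from Lemma~\ref{lemma:innerproduct_lowerbound} and loses sharpness), and (5) in turn is derived from (4) and (2). One must therefore close the loop carefully by choosing the order of the implications $(1)\wedge(2)\wedge(3)\wedge(5) \Rightarrow (4)_{t+1}\Rightarrow(1)_{t+1}\wedge(5)_{t+1}$ and $(3)\wedge(5) \Rightarrow (2)_{t+1}\wedge(3)_{t+1}$, ensuring all constants are compatible across iterates. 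The smallness conditions on $\eta$ and $\|\wb^{(0)}\|_2$ in the hypotheses are precisely what provide the slack needed to absorb higher-order error terms in each step.
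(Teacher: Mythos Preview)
Your proposal is correct and follows essentially the same route as the paper: the paper proves conclusions (1)--(5) by a simultaneous induction (deriving the preliminary bounds $|\ell_i'^{(t)}| = \Theta(\exp(-\gamma^{(t)}))$ from the uniform-margin hypothesis (5), feeding these into Lemma~\ref{lemma:innerproduct_lowerbound} and the refined half of Lemma~\ref{lemma:norm_upperbound} to get the contraction \eqref{eq:convergence_illustration}, iterating the scalar recursion for $\gamma$ via a comparison lemma, and reading off the $\exp(-\Omega(\log^2 t))$ rate from the product $\prod_\tau (1 - c\,\gamma^{(\tau)}e^{-\gamma^{(\tau)}})$), and then derives (6) as a consequence. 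The only minor deviation is that the paper separates (6) from the induction rather than carrying it along, and handles the $\gamma^{(t)4}$ factor in $H^{(t)}$ (not just $\gamma^{(t)2}$) via the elementary bound $\max\{z^2,z^4\}e^{-z/2} = O(1)$, but these are cosmetic.
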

A few remarks about Lemma~\ref{lemma:linear_asymp} are in order: 
    The first and the second results guarantee that the properties of the warm start $\wb^{(t_0)}$ given in Lemma~\ref{lemma:firststage} are preserved during training. 
   The third result on $\gamma^{(t)}$ controls the values of $|\ell_i'|$, $i\in[n]$ given uniform enough margins, and also implies the convergence rate of the training loss function.
    The fourth result gives the convergence rate of $\big\| \wb^{(t)} - \la \wb^*, \wb^{(t )} \ra \cdot \| \wb^* \|_2^{-2}  \cdot \wb^* \big\|_2$, which is an equivalent metric of the margin discrepancy in $\ell_2$ distance. 
    The fifth result essentially follows by the convergence rates of $\gamma^{(t)}$ and $\big\| \wb^{(t)} - \la \wb^*, \wb^{(t )} \ra \cdot \| \wb^* \|_2^{-2}  \cdot \wb^* \big\|_2$, and it further implies that $|\ell_i'|$, $i\in[n]$ only differ by constant factors.
    Finally, the sixth result reformulates the previous results and gives the conclusions of Theorem~\ref{thm:linearBN}.

In our proof of Lemma~\ref{lemma:linear_asymp}, we first show the first five results based on an induction, where each of these results rely on the margin uniformity in the previous iterations. The last result is then proved based on the first five results. We also regard this proof as a key technical contribution of our work. Combining Lemmas~\ref{lemma:firststage} and \ref{lemma:linear_asymp} leads to  Theorem~\ref{thm:linearBN}, and the proof is thus complete.

\section{Conclusion and Future Work}
In this work we theoretically demonstrate that with batch normalization, gradient descent converges to a uniform margin solution when training linear models, and converges to a patch-wise uniform margin solution when training two-layer, single-filter linear CNNs. These results give a precise characterization of the implicit bias of batch normalization. 

An important future work direction is to study the implicit bias of batch normalization in more complicated neural network models with multiple filters, multiple layers, and nonlinear activation functions. Our analysis  may also find other applications in studying the implicit bias of other normalization techniques such as layer normalization.




\appendix


\section{Additional Related Work}

\paragraph{Implicit Bias.} 
In recent years, there have emerged a large number of works studying the implicit bias of various optimization algorithms for different models.  We will only review a part of them that is mostly related to this paper.

Theoretical analysis of implicit bias has originated from linear models. For linear regression, \citet{gunasekar2018implicit} showed that starting from the origin point, gradient descent converges to the minimum $\ell_2$ norm solution. For linear classification problems, gradient descent is shown to converge to the maximum margin solution on separable data \citep{soudry2017implicit,nacson2018convergence,ji2019implicit}. Similar results have also been obtained for other optimization algorithms such as mirror descent \citep{gunasekar2018characterizing} and stochastic gradient descent \citep{nacson2019stochastic}.  
The implicit bias has also been widely studied beyond linear models such as matrix factorization \citep{gunasekar2017implicit,li2021towards,arora2019implicit}, linear networks \citep{li2018algorithmic,ji2018gradient,gunasekar2018implicit,pesme2021implicit}, and more complicated nonlinear networks \citep{chizat2020implicit}, showing that (stochastic) gradient descent may converge to different types of solutions. However, none of them can be adapted to our setting.

\paragraph{Theory for Normalization Methods.} Many normalization methods, including batch normalization, weight normalization \citep{salimans2016weight}, layer normalization \citep{ba2016layer}, and group normalization \citep{wu2018group}, have been recently developed to improve generalization performance. From a theoretical perspective, a series of works have established the close connection between the normalization methods and the adaptive effective learning rate \citep{hoffer2018norm,arora2019theoretical,morwani2020inductive}. Based on the auto-learning rate adjustment effect of weight normalization, \citet{wu2018wngrad} developed an adaptive learning rate scheduler and proved a near-optimal convergence rate for SGD. Moreover, \citet{wu2020implicit}  studied the implicit regularization effect of weight normalization for linear regression, showing that gradient descent can always converge close to the minimum $\ell_2$ norm solution, even from the initialization that is far from zero. \citet{dukler2020optimization} further investigated the convergence of gradient descent for training a two-layer ReLU network with weight normalization. For multi-layer networks, \citet{yang2019mean} developed a mean-field theory for batch normalization, showing that the gradient explosion brought by large network depth cannot be addressed by BN, if there is no residual connection.
These works either concerned different normalization methods or investigated the behavior of BN in different aspects, which can be seen as orthogonal to our results.

\section{Experiments}
Here we present some preliminary simulation and real data experiment results to demonstrate that batch normalization encourages a uniform margin. The results are given in Figure~\ref{fig1} and Figure~\ref{fig2} respectively.

In Figure~\ref{fig1}, we train linear models with/without batch normalization, and plot the values of $y_i\cdot \la \wb^{(t)}, \xb_i \ra$, $i=1,\ldots,n$ in Figure~\ref{fig1}. From Figure~\ref{fig1}, we can conclude that:
\begin{enumerate}[nosep,leftmargin = *]
    \item The obtained linear model with batch normalization indeed achieves a uniform margin.
    \item Batch normalization plays a key role in determining the implicit regularization effect. Without batch normalization, the obtained linear model does not achieve a uniform margin.
\end{enumerate}
Clearly, our simulation results match our theory well.

In Figure~\ref{fig2}, we present experiment results for VGG-16 with/without batch normalization trained by stochastic gradient descent to classify cat and dog images in the CIFAR-10 data set. We focus on the last hidden layer of VGG-16, and for each neuron on this layer, we estimate the margin uniformity over (i) all activated data from class cat; (ii) all activated data from class dog; (iii) all activated data from both classes. The final result is then calculated by taking an average over all neurons. Note that in these experiments, the neural network model, the data and the training algorithm all mismatch the exact setting in Theorem~\ref{thm:CNNBN}. Nevertheless, the experiment results still corroborate our theory to a certain extent.

\begin{figure}[h!]
     \centering
     \subfigure[Linear model with batch normalization]{\includegraphics[width=0.49\textwidth]{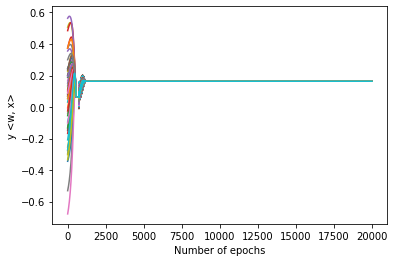}}
      \subfigure[Linear model without batch normalization]{\includegraphics[width=0.49\textwidth]{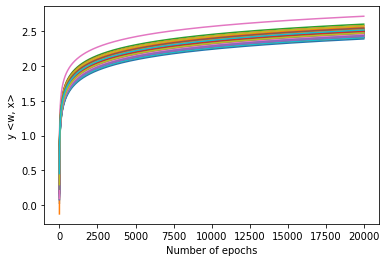}}
      \caption{The values of $y_i\cdot \la \wb^{(t)} , \xb_i \ra$, $i=1,\ldots,n$ during the training of linear models with and without batch normalization. Each curve in the figures corresponds to a specific $i\in\{1,\ldots,n\}$ and illustrates the dynamics of a specific inner product $y_i\cdot \la \wb^{(t)} , \xb_i \ra$. (a) gives the result for the linear model with batch normalization; (b) gives the result for the linear model without batch normalization. For both settings, we set the sample size $n = 50$ and dimension $d = 1000$. For $i=1,\ldots, 50$, we generate the data inputs $\xb_i$ independently as standard Gaussian random vectors, and set $y_i$ randomly as $+1$ or $-1$ with equal probability. }
    \label{fig1}
\end{figure}

\begin{figure}[h!]
     \centering
     \subfigure[Dog]{\includegraphics[width=0.32\textwidth]{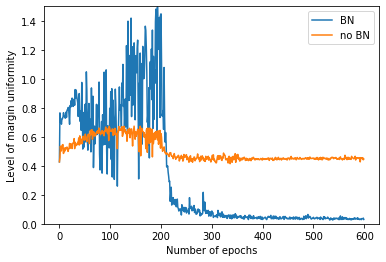}}
      \subfigure[Cat]{\includegraphics[width=0.32\textwidth]{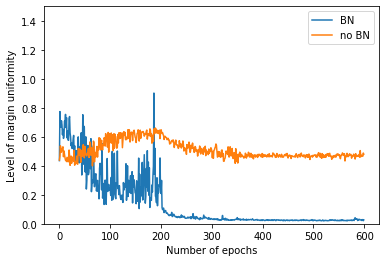}}
      \subfigure[Both]{\includegraphics[width=0.32\textwidth]{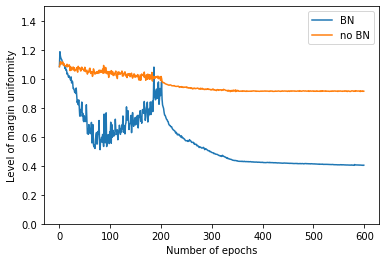}}
      \caption{Margin uniformity of the last hidden layer in VGG-16 with/without batch normalization. The model is trained to classify cat and dog images in the CIFAR-10 data set. For each neuron on last hidden layer, treating it as a function $\mathrm{neuron(\xb)}$ of the data input $\xb$, we use $\frac{ \frac{1}{n^2} \sum_{i,i' \in \cI} [y_{i'}\cdot \mathrm{neuron(\xb_{i'})} -  y_{i}\cdot \mathrm{neuron(\xb_i)} ]^2}{ \frac{1}{n} \sum_{i \in \cI} [ \mathrm{neuron(\xb_i)} ]^2 }$ to measure the margin uniformity, where we set $\cI$ as the index set of (a) all activated data in the class cat, (b) all activated data  in the class dog, and (c) all activated data in both classes. The plots are for the average of this quantity over all hidden neurons. Clearly, a smaller value implies a higher level of margin uniformity.}
    \label{fig2}
\end{figure}

\section{Proofs for Batch Normalization in Linear Models}
In this section we present the proofs of the lemmas in Section~\ref{section:technique_overview}. Combining these proofs with the discussion given in Section~\ref{section:technique_overview} would give the complete proof of Theorem~\ref{thm:linearBN}.

\subsection{Proof of Lemma~\ref{lemma:gradient_inner_linear_model}}

\begin{proof}[Proof of Lemma~\ref{lemma:gradient_inner_linear_model}] By definition, we have 
\begin{align*}
    \nabla_{\wb} L(\wb,\gamma) = \frac{1}{n}\sum_{i=1}^n \ell'[y_i\cdot f(\wb,\gamma,\xb_i)] \cdot y_i\cdot \nabla_{\wb} f(\wb,\gamma,\xb_i).
\end{align*}
Then by the definition of the linear predictor with batch normalization, we have the following calculation using chain rule: 
\begin{align*}
    \nabla_{\wb} L(\wb,\gamma) &=  \| \wb \|_{\bSigma}^{-1}\cdot \frac{1}{n}\sum_{i=1}^n  \ell'[y_i\cdot f(\wb,\gamma,\xb_i)] \cdot y_i \cdot \gamma \cdot \Big( \Ib - \| \wb \|_{\bSigma}^{-2}\cdot \bSigma \wb \wb^\top \Big)\xb_i \\
    & = \| \wb \|_{\bSigma}^{-3}\cdot \frac{\gamma}{n}\sum_{i=1}^n \ell'_i \cdot y_i \cdot \Big( \| \wb \|_{\bSigma}^{2} \cdot \Ib -  \bSigma \wb \wb^\top \Big)\xb_i\nonumber \\
     & = \| \wb \|_{\bSigma}^{-3}\cdot \frac{\gamma}{n}\sum_{i=1}^n \ell'_i \cdot \Bigg( \frac{1}{n} \sum_{i'=1}^n \la \wb, \zb_{i'}\ra^2\cdot \Ib - \frac{1}{n}\sum_{i'=1}^n \la \wb , \zb_{i'}\ra \cdot \zb_{i'} \wb^\top \Bigg)\zb_i\\
     & = \| \wb \|_{\bSigma}^{-3}\cdot \frac{\gamma}{n^2}\sum_{i=1}^n \sum_{i'=1}^n  \ell'_i \cdot \la \wb, \zb_{i'}\ra^2\cdot \zb_i - \| \wb \|_{\bSigma}^{-3}\cdot \frac{\gamma}{n^2}\sum_{i=1}^n \sum_{i'=1}^n \ell'_i \cdot \la \wb , \zb_{i'}\ra \cdot  \la\wb,\zb_i\ra \cdot \zb_{i'},
\end{align*}
where we remind readers that $\zb_i = y_i\cdot \xb_i$, $i\in[n]$. 
By Assumption~\ref{assump:uniformly_separable}, taking inner product with $\wb^*$ on both sides above then gives
\begin{align*}
    \la \nabla_{\wb} L(\wb,\gamma), \wb^* \ra = \| \wb \|_{\bSigma}^{-3}\cdot \frac{\gamma}{n^2}\sum_{i=1}^n \sum_{i'=1}^n  \ell'_i \cdot \la \wb, \zb_{i'}\ra^2 - \| \wb \|_{\bSigma}^{-3}\cdot \frac{\gamma}{n^2}\sum_{i=1}^n \sum_{i'=1}^n \ell'_i \cdot \la \wb , \zb_{i'}\ra \cdot  \la\wb,\zb_i\ra. 
\end{align*}
Further denote $u_i = \la \wb, \zb_{i}\ra$ for $i\in[n]$. Then we have 
\begin{align}\label{eq:proof_gradient_inner_eq1}
    \la \nabla_{\wb} L(\wb,\gamma), \wb^* \ra = \| \wb \|_{\bSigma}^{-3}\cdot \frac{\gamma}{n^2}\sum_{i=1}^n \sum_{i'=1}^n  \ell'_i \cdot u_{i'}^2 - \| \wb \|_{\bSigma}^{-3}\cdot \frac{\gamma}{n^2}\sum_{i=1}^n \sum_{i'=1}^n \ell'_i \cdot u_{i'}u_i.
\end{align}
Switching the index notations $i,i'$ in the above equation also gives
\begin{align}\label{eq:proof_gradient_inner_eq2}
    \la \nabla_{\wb} L(\wb,\gamma), \wb^* \ra = \| \wb \|_{\bSigma}^{-3}\cdot \frac{\gamma}{n^2}\sum_{i=1}^n \sum_{i'=1}^n  \ell'_{i'} \cdot u_{i}^2 - \| \wb \|_{\bSigma}^{-3}\cdot \frac{\gamma}{n^2}\sum_{i=1}^n \sum_{i'=1}^n \ell'_{i'} \cdot u_{i'}u_i.
\end{align}
We can add \eqref{eq:proof_gradient_inner_eq1} and \eqref{eq:proof_gradient_inner_eq2} together to obtain
\begin{align*}
    2\cdot\la \nabla_{\wb} L(\wb,\gamma), \wb^* \ra &= \| \wb \|_{\bSigma}^{-3}\cdot \frac{\gamma}{n^2}\sum_{i=1}^n \sum_{i'=1}^n ( \ell'_i \cdot u_{i'}^2 - \ell'_i \cdot u_{i'}u_i + \ell'_{i'} \cdot u_{i}^2 - \ell'_{i'} \cdot u_{i'}u_i )\\
     &= \| \wb \|_{\bSigma}^{-3}\cdot \frac{\gamma}{n^2}\sum_{i=1}^n \sum_{i'=1}^n (u_{i'} - u_i)( \ell'_i \cdot u_{i'} - \ell'_{i'} \cdot u_{i} ).
\end{align*}
Note that by definition we have $\ell'_i< 0$. Therefore, 
\begin{align*}
    - \la \nabla_{\wb} L(\wb,\gamma), \wb^* \ra = \| \wb \|_{\bSigma}^{-3}\cdot \frac{\gamma}{2 n^2}\sum_{i=1}^n \sum_{i'=1}^n |\ell'_i|\cdot |\ell'_{i'}| \cdot (u_{i'} - u_i)( |\ell'_{i'}|^{-1} \cdot u_{i'} - |\ell'_{i}|^{-1} \cdot u_{i} ).
\end{align*}
This completes the proof.
\end{proof}

\subsection{Proof of Lemma~\ref{lemma:innerproduct_lowerbound}}

\begin{proof}[Proof of Lemma~\ref{lemma:innerproduct_lowerbound}]
By Lemma~\ref{lemma:gradient_inner_linear_model} and the gradient descent update rule, we have
\begin{align}
    &\la \wb^{(t+1)}, \wb^* \ra -\la \wb^{(t)}, \wb^* \ra = \eta\cdot \la -\nabla_{\wb} L(\wb^{(t)},\gamma^{(t)}), \wb^* \ra \nonumber\\
    & = \frac{ \gamma^{(t)} \eta }{2 n^2 \| \wb^{(t)} \|_{\bSigma}^3}\sum_{i,i'=1}^n |\ell_i'^{(t)}|\cdot |\ell_{i'}'^{(t)}| \cdot (\la \wb^{(t)} , \zb_{i'}\ra - \la \wb^{(t)} , \zb_{i}\ra)\cdot ( |\ell_{i'}'^{(t)}|^{-1} \cdot \la \wb^{(t)}, \zb_{i'}\ra - |\ell_{i}'^{(t)}|^{-1} \cdot \la \wb^{(t)} , \zb_{i}\ra )\label{eq:originalequation}
\end{align}
for all $t\geq 0$. Note that $\ell(z) = \log[1 + \exp(-z)]$, $\ell'(z) = -1/[1 + \exp(z)]$ and
$$|\ell_i'^{(t)}|^{-1} = -\{ \ell'[y_i\cdot f(\wb^{(t)},\gamma^{(t)},\xb_i)] \}^{-1} = 1 + \exp( \gamma^{(t)}\cdot \la \wb^{(t)} , \zb_{i}\ra /  \| \wb^{(t)} \|_{\bSigma}).
$$
Denote $F_i = \gamma^{(t)}\cdot \la \wb^{(t)} , \zb_{i}\ra /  \| \wb^{(t)} \|_{\bSigma}$ for $i\in [n]$. Then we have
\begin{align}
    \frac{( |\ell_{i'}'^{(t)}|^{-1} \cdot \la \wb^{(t)}, \zb_{i'}\ra - |\ell_{i}'^{(t)}|^{-1} \cdot \la \wb^{(t)} , \zb_{i}\ra )}{ \la \wb^{(t)}, \zb_{i'}\ra - \la \wb^{(t)} , \zb_{i}\ra}& =\frac{( |\ell_{i'}'^{(t)}|^{-1} \cdot \la \wb^{(t)}, \zb_{i'}\ra - |\ell_{i}'^{(t)}|^{-1} \cdot \la \wb^{(t)} , \zb_{i}\ra ) \cdot \gamma^{(t)}/  \| \wb^{(t)} \|_{\bSigma} }{ (\la \wb^{(t)}, \zb_{i'}\ra - \la \wb^{(t)} , \zb_{i}\ra) \cdot \gamma^{(t)}/  \| \wb^{(t)} \|_{\bSigma}} \nonumber\\
    &= \frac{F_{i'}\cdot [ 1 + \exp(F_{i'}) ] -  F_{i}\cdot [ 1 + \exp(F_{i}) ]}{ F_{i'} - F_{i} }  \nonumber\\
    & = 1 + \frac{F_{i'}\cdot\exp(F_{i'}) - F_{i} \cdot\exp(F_{i})}{F_{i'} - F_{i}} \nonumber\\
    &\geq 1 + \exp(\max\{ F_{i'}, F_{i} \}) \nonumber\\
    & = \max\{ |\ell_{i}'^{(t)}|^{-1}, |\ell_{i'}'^{(t)}|^{-1} \},\label{eq:meanvalue}
\end{align}
where the inequality follows by the fact that $[a\exp(a) - b\exp(b)]/(a-b) \geq  \exp(\max\{a,b\}) $ for all $a\neq b$. 
Plugging \eqref{eq:meanvalue} into \eqref{eq:originalequation} gives
\begin{align}
    &\la \wb^{(t+1)}, \wb^* \ra -\la \wb^{(t)}, \wb^* \ra \nonumber\\ 
    &\qquad\qquad \geq \frac{ \gamma^{(t)} \eta }{2 n^2 \| \wb^{(t)} \|_{\bSigma}^3}\sum_{i,i'=1}^n |\ell_i'^{(t)}|\cdot |\ell_{i'}'^{(t)}| \cdot (\la \wb^{(t)} , \zb_{i'}\ra - \la \wb^{(t)} , \zb_{i}\ra)^2\cdot  \max\{ |\ell_{i}'^{(t)}|^{-1}, |\ell_{i'}'^{(t)}|^{-1} \} \nonumber\\
    &\qquad\qquad  =\frac{ \gamma^{(t)} \eta }{2 n^2 \| \wb^{(t)} \|_{\bSigma}^3}\sum_{i,i'=1}^n \max\{ |\ell_i'^{(t)}|, |\ell_{i'}'^{(t)}|\} \cdot (\la \wb^{(t)} , \zb_{i'}\ra - \la \wb^{(t)} , \zb_{i}\ra)^2.\label{eq:lowerbound_natual}
\end{align}
Now by Lemma~\ref{lemma:auxiliary_inequality}, we have
\begin{align}
    &\la \wb^{(t+1)}, \wb^* \ra -\la \wb^{(t)}, \wb^* \ra\nonumber\\ 
    &\qquad\qquad  \geq \frac{ \gamma^{(t)} \eta }{8 n^2 \| \wb^{(t)} \|_{\bSigma}^3}\cdot \Bigg(\frac{1}{n} \sum_{i=1}^n |\ell_i'^{(t)}| \Bigg) \cdot \sum_{i,i'=1}^n  (\la \wb^{(t)} , \zb_{i'}\ra - \la \wb^{(t)} , \zb_{i}\ra)^2.\label{eq:sharper_lower_boudnd_proof_1}
\end{align}
Moreover, by definition we have
\begin{align*}
    \frac{1}{n} \sum_{i=1}^n |\ell_i'^{(t)}| & = \frac{1}{n} \sum_{i=1}^n \Bigg[1 + \exp\Bigg( \frac{\gamma^{(t)} \cdot \la \wb^{(t)}, \zb_j \ra }{\sqrt{\frac{1}{n} \sum_{j=1}^n \la \wb^{(t)}, \zb_j \ra^2 }} \Bigg) \Bigg]^{-1}\\
     & \geq  \frac{1}{n} \sum_{i=1}^n \Bigg[1 + \exp\Bigg( \frac{\gamma^{(t)} \cdot |\la \wb^{(t)}, \zb_j \ra| }{\sqrt{\frac{1}{n} \sum_{j=1}^n \la \wb^{(t)}, \zb_j \ra^2 }} \Bigg) \Bigg]^{-1},
\end{align*}
where the inequality follows by the fact that $[1 + \exp(z)]^{-1}$ is a decreasing function. Further note that $[1 + \exp(z)]^{-1}$ is convex over $z \geq 0$. Therefore by Jensen's inequality, we have
\begin{align}
    \frac{1}{n} \sum_{i=1}^n |\ell_i'^{(t)}|
     & \geq  \frac{1}{n} \sum_{i=1}^n \Bigg[1 + \exp\Bigg( \frac{\gamma^{(t)} \cdot |\la \wb^{(t)}, \zb_j \ra| }{\sqrt{\frac{1}{n} \sum_{j=1}^n \la \wb^{(t)}, \zb_j \ra^2 }} \Bigg) \Bigg]^{-1} \nonumber\\
     & \geq \Bigg[1 + \exp\Bigg( \frac{1}{n} \sum_{i=1}^n \frac{\gamma^{(t)} \cdot |\la \wb^{(t)}, \zb_j \ra| }{\sqrt{\frac{1}{n} \sum_{j=1}^n \la \wb^{(t)}, \zb_j \ra^2 }} \Bigg) \Bigg]^{-1} \nonumber\\
     & \geq [1 + \exp(\gamma^{(t)} )]^{-1} \nonumber\\
     & \geq \exp(-\gamma^{(t)} ) / 2.\label{eq:apply_jensen}
\end{align}
Plugging \eqref{eq:apply_jensen} into \eqref{eq:sharper_lower_boudnd_proof_1} then gives
\begin{align}
    \la \wb^{(t+1)}, \wb^* \ra - \la \wb^{(t)}, \wb^* \ra  &\geq \frac{ \gamma^{(t)} \eta }{16 n^2 \| \wb^{(t)} \|_{\bSigma}^3}\cdot \exp(-\gamma^{(t)} ) \cdot  \sum_{i,i'=1}^n  (\la \wb^{(t)} , \zb_{i'}\ra - \la \wb^{(t)} , \zb_{i}\ra)^2\nonumber\\
    &= \frac{ \gamma^{(t)} \eta }{16 \| \wb^{(t)} \|_{\bSigma}^3}\cdot \exp(-\gamma^{(t)} ) \cdot D(\wb^{(t)})\label{eq:sharper_lower_boudnd_first}
\end{align}
Moreover, we can simply utilize \eqref{eq:lowerbound_natual} 
to obtain 
\begin{align}
    \la \wb^{(t+1)}, \wb^* \ra -\la \wb^{(t)}, \wb^* \ra 
    &\geq \frac{ \gamma^{(t)} \eta}{2n^2 \| \wb^{(t)} \|_{\bSigma}^3}  \cdot \min_i |\ell_i'^{(t)}| \cdot  \sum_{i,i'=1}^n  (\la \wb^{(t)} , \zb_{i'}\ra - \la \wb^{(t)} , \zb_{i}\ra)^2\\
    &= \frac{ \gamma^{(t)} \eta}{2 \| \wb^{(t)} \|_{\bSigma}^3}  \cdot \min_i |\ell_i'^{(t)}| \cdot  D( \wb^{(t)})\label{eq:sharper_lower_boudnd_second}
\end{align}
Combining \eqref{eq:sharper_lower_boudnd_first} and \eqref{eq:sharper_lower_boudnd_second} finishes the proof. 
\end{proof}

\subsection{Proof of Lemma~\ref{lemma:key_identity}}

\begin{proof}[Proof of Lemma~\ref{lemma:key_identity}]
We note that the following identity holds:
\begin{align*}
    &\frac{1}{n^2 }\sum_{i,i'=1}^n (\la \wb , \zb_{i'}\ra - \la \wb , \zb_{i}\ra)^2\\
    &\qquad= \frac{1}{n^2 } \sum_{i,i'=1}^n (\la \wb , \zb_{i'}\ra^2 - 2\cdot \la \wb , \zb_{i'}\ra\cdot  \la \wb , \zb_{i}\ra + \la \wb , \zb_{i}\ra^2) \\
    &\qquad= \frac{2}{n } \sum_{i}^n \la \wb , \zb_{i}\ra^2 -2\cdot \Bigg(\frac{1}{n } \sum_{i}^n \la \wb , \zb_{i}\ra \Bigg)^2 \\
    &\qquad= \frac{2}{n } \sum_{i}^n \la \wb , \zb_{i}\ra^2 - 4\cdot \Bigg(\frac{1}{n } \sum_{i}^n \la \wb , \zb_{i}\ra \Bigg)^2 + 2\cdot \Bigg(\frac{1}{n } \sum_{i}^n \la \wb , \zb_{i}\ra \Bigg)^2 \\
    &\qquad= \frac{2}{n } \sum_{i}^n \la \wb , \zb_{i}\ra^2 - \frac{4}{n } \sum_{i}^n\la \wb , \zb_{i}\ra\cdot \Bigg(\frac{1}{n } \sum_{i}^n \la \wb , \zb_{i}\ra \Bigg) + \frac{2}{n } \sum_{i}^n\Bigg(\frac{1}{n } \sum_{i}^n \la \wb , \zb_{i}\ra \Bigg)^2 \\
    &\qquad=  \frac{2}{n }\sum_{i}^n \bigg(\la \wb , \zb_{i}\ra - \frac{1}{n }\sum_{i'}^n \la \wb , \zb_{i'}\ra \bigg)^2\\
    &\qquad = \frac{2}{n } \bigg\| \wb^{\top} \Zb \bigg( \Ib - \frac{1}{n }\mathbf{1}\mathbf{1}^\top \bigg) \bigg\|_2^2 \\
    &\qquad = \frac{2}{n } \wb^{\top} \Zb \bigg( \Ib - \frac{1}{n }\mathbf{1}\mathbf{1}^\top \bigg) \Zb^\top \wb.
\end{align*}
It is easy to see that the null space of $\Ib - \mathbf{1}\mathbf{1}^\top / n$ is $\mathrm{span}\{\mathbf{1}\}$, and the non-zero eigenvalues of  $\Ib - \mathbf{1}\mathbf{1}^\top / n$ are all $1$'s. Moreover, we note that the projection of the vector $\Zb^\top \wb$ onto the space $\mathrm{span}\{\mathbf{1}\}^\perp$ is
\begin{align*}
\Zb^\top \wb -  \mathbf{1} \mathbf{1}^\top  \Zb^\top \wb / n 
=  \Zb^\top \wb -  \Zb\wb^* \wb^{*\top} \Zb^\top  \Zb^\top \wb / n
= \Zb^\top \wb -  \Zb\wb^* \cdot \la \wb^{*\top}, \wb\ra_{\bSigma},
\end{align*}
where we utilize the property that $\Zb\wb^* = \mathbf{1}$ to obtain the first equality. Therefore, we have
\begin{align*}
    \frac{1}{n^2 }\sum_{i,i'=1}^n (\la \wb , \zb_{i'}\ra - \la \wb , \zb_{i}\ra)^2 
    & = \frac{2}{n } ( \wb -  \la \wb^*, \wb\ra_{\bSigma} \cdot \wb^* )^\top \Zb \Zb^\top ( \wb -  \la \wb^*, \wb\ra_{\bSigma} \cdot \wb^* )\nonumber\\
    & = 2\cdot \|  \wb -  \la \wb^*, \wb\ra_{\bSigma} \cdot \wb^* \|_{\bSigma}^2.
\end{align*}
This finishes the proof of the first result. For the second result, we first note that by definition, $\| \wb^* \|_{\bSigma} = 1$, and therefore for any $\wb\in \RR^d$, $ \la \wb^*, \wb\ra_{\bSigma} \cdot \wb^*$ is the projection of $\wb$ onto $\mathrm{span}\{ \wb^* \}$ under the inner product $\la\cdot , \cdot\ra_{\bSigma}$. Therefore we have
\begin{align*}
     \|  \wb -  \la \wb^*, \wb\ra_{\bSigma} \cdot \wb^* \|_{\bSigma}^2 \leq \|  \wb -  c \cdot \wb^* \|_{\bSigma}^2
\end{align*}
for all $c\in \RR$, and hence
\begin{align*}
     \|  \wb -  \la \wb^*, \wb\ra_{\bSigma} \cdot \wb^* \|_{\bSigma}^2 \leq \bigg\|  \wb -  \frac{\la \wb^*, \wb\ra}{ \| \wb^* \|_2^2 } \cdot \wb^* \bigg\|_{\bSigma}^2 \leq \lambda_{\max}\cdot \bigg\|  \wb -  \frac{\la \wb^*, \wb\ra}{ \| \wb^* \|_2^2 } \cdot \wb^* \bigg\|_2^2.
\end{align*}
Similarly, we note that $\| \wb^* \|_2^{-2 } \cdot \la \wb^*, \wb\ra  \cdot \wb^* $ is the projection of $\wb$ onto $\mathrm{span}\{ \wb^* \}$ under the Euclidean inner product $\la\cdot , \cdot\ra$. Therefore we have
\begin{align*}
    \lambda_{\min}\cdot \bigg\|  \wb -  \frac{\la \wb^*, \wb\ra}{ \| \wb^* \|_2^2 } \cdot \wb^* \bigg\|_2^2 \leq  \lambda_{\min}\cdot \|  \wb -  \la \wb^*, \wb\ra_{\bSigma} \cdot \wb^* \|_2^2 \leq \|  \wb -  \la \wb^*, \wb\ra_{\bSigma} \cdot \wb^* \|_{\bSigma}^2.
\end{align*}
This completes the proof.
\end{proof}

\subsection{Proof of Lemma~\ref{lemma:norm_upperbound}}
We first present the following technical lemma. 
\begin{lemma}\label{lemma:auxiliary_inequality}
Let $\{a_i\}_{i=1,\dots,n}$ and $\{b_i\}_{i=1,\dots,n}$ be two sequences that satisfy
\begin{align*}
a_1\le a_2\le\dots\le a_n;\quad b_1\ge b_2\ge\dots\ge b_n \ge 0.
\end{align*}
Then it holds that
\begin{align*}
\sum_{i,i'=1}^n \max\{b_i,b_{i'}\}\cdot (a_i - a_{i'})^2 \ge \frac{\sum_{i=1}^n b_i}{4n}\cdot\sum_{i,i'=1}^n  (a_i - a_{i'})^2.
\end{align*}
 \end{lemma}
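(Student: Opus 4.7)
The plan is to reduce the left-hand side to a sum of the form $\sum_i b_i G_i$ via symmetrization, and then lower-bound each $G_i:=\sum_{i'}(a_i-a_{i'})^2$ uniformly by a constant multiple of the total $\sum_{i,i'}(a_i-a_{i'})^2$ using a bias-variance decomposition of the $a_i$'s. Somewhat surprisingly, I believe the monotonicity hypotheses on $\{a_i\}$ and $\{b_i\}$ will play no role: the only properties the proof needs are $b_i\ge 0$ and the symmetric structure of the pair-wise squared differences. The ``Chebyshev-like'' flavor highlighted in the text would correspond to an alternative proof that does use monotonicity, but the cleanest route avoids it entirely.

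First I would use the pointwise bound $\max\{b_i,b_{i'}\}\ge\tfrac{1}{2}(b_i+b_{i'})$ and symmetrize to obtain
\begin{align*}
\sum_{i,i'=1}^n \max\{b_i,b_{i'}\}(a_i-a_{i'})^2 \;\ge\; \tfrac{1}{2}\sum_{i,i'=1}^n (b_i+b_{i'})(a_i-a_{i'})^2 \;=\; \sum_{i=1}^n b_i\,G_i,
\end{align*}
where $G_i=\sum_{i'=1}^n (a_i-a_{i'})^2$. Next, setting $\bar a = n^{-1}\sum_i a_i$ and $V = n^{-1}\sum_i (a_i-\bar a)^2$, direct expansion using $\sum_{i'}(a_{i'}-\bar a)=0$ yields the bias-variance identity $G_i = n(a_i-\bar a)^2 + nV \ge nV$ for every $i$, and summing over $i$ gives the total $\sum_{i,i'=1}^n (a_i-a_{i'})^2 = 2n^2 V$.

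Combining these two ingredients,
\begin{align*}
\sum_{i=1}^n b_i\,G_i \;\ge\; nV \sum_{i=1}^n b_i \;=\; \frac{\sum_{i=1}^n b_i}{2n}\sum_{i,i'=1}^n (a_i-a_{i'})^2 \;\ge\; \frac{\sum_{i=1}^n b_i}{4n}\sum_{i,i'=1}^n (a_i-a_{i'})^2,
\end{align*}
which is the stated inequality (in fact with the improved constant $1/2$ in place of $1/4$). There is no serious obstacle in this approach; the only mildly delicate points are the symmetrization step (where one must be careful to pair up $(i,i')$ and $(i',i)$ to turn $b_i+b_{i'}$ into a single-index sum) and the realization that the uniform lower bound $G_i\ge nV$, together with the global identity for the total, immediately forces every $G_i$ to be at least half the average $\bar G = 2nV$.
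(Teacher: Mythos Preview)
Your proof is correct and in fact establishes a strictly stronger inequality (constant $1/(2n)$ rather than $1/(4n)$), and it dispenses with the monotonicity hypotheses entirely; only $b_i\ge 0$ is used. The paper takes a genuinely different route: it exploits the orderings to write $\max\{b_i,b_{i'}\}=b_{\min(i,i')}$, obtaining $\sum_{i,i'}\max\{b_i,b_{i'}\}(a_i-a_{i'})^2=2\sum_i\sum_{i'>i}b_i(a_{i'}-a_i)^2$, and then introduces a threshold index $k^*$ where $b$ drops below half its average, using the monotonicity of $a$ to show that $\sum_{i'>i}(a_{i'}-a_i)^2$ is monotone in $i$ and balancing the contributions from $i<k^*$ and $i\ge k^*$. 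Your argument is cleaner and more general: the pointwise bound $\max\{b_i,b_{i'}\}\ge\tfrac12(b_i+b_{i'})$ plus the bias--variance identity $G_i=n(a_i-\bar a)^2+nV\ge nV$ immediately gives the result for arbitrary (unordered) $a_i$ and nonnegative $b_i$. The paper's threshold argument has the flavor of a Chebyshev-type rearrangement, which may be why the authors emphasize the ordering; but your observation that the uniform lower bound $G_i\ge \tfrac12\bar G$ holds for every $i$ obviates all of that machinery.
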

Based on Lemma~\ref{lemma:auxiliary_inequality}, the proof of Lemma~\ref{lemma:norm_upperbound} is as follows. 
\begin{proof}[Proof of Lemma~\ref{lemma:norm_upperbound}]
By the gradient descent update rule, we have
\begin{align}\label{eq:gradientcalculation_1}
    \wb^{(t+1)} &= \wb^{(t)} - \eta\cdot \nabla_{\wb} L(\wb^{(t)},\gamma^{(t)}).
\end{align}
Note that we have the calculation 
\begin{align*}
    \nabla_{\wb} L(\wb^{(t)},\gamma^{(t)}) = \frac{1}{n \cdot \| \wb \|_{\bSigma}}\sum_{i=1}^n  \ell'[y_i\cdot f(\wb^{(t)},\gamma^{(t)},\xb_i)] \cdot y_i \cdot \gamma^{(t)} \cdot \bigg( \Ib - \frac{\bSigma \wb^{(t)} \wb^{(t)\top}}{\| \wb^{(t)} \|_{\bSigma}^2} \bigg)\xb_i.
\end{align*}
It is easy to see that $\wb^{(t)}$ is orthogonal to $\nabla_{\wb} L(\wb^{(t)},\gamma^{(t)})$. Therefore, taking $\| \cdot \|_2^2$ on both sides of \eqref{eq:gradientcalculation_1} gives
\begin{align}\label{eq:gradientcalculation_2}
    \|\wb^{(t+1)}\|_2^2 = \|\wb^{(t)}\|_2^2 + \eta^2 \cdot \| \nabla_{\wb} L(\wb^{(t)},\gamma^{(t)}) \|_2^2. 
\end{align}
Therefore we directly conclude that $\| \wb^{(t)} \|_2^2 \leq \| \wb^{(t+1)} \|_2^2$. Besides, plugging in the calculation of $\nabla_{\wb} L(\wb^{(t)},\gamma^{(t)})$ also gives
\begin{align*}
    \|\wb^{(t+1)}\|_2^2 &= \|\wb^{(t)}\|_2^2 + \eta^2 \cdot \Bigg\| \frac{1}{n \cdot \| \wb^{(t)} \|_{\bSigma}}\sum_{i=1}^n \ell'[y_i\cdot f(\wb^{(t)},\gamma^{(t)},\xb_i)] \cdot y_i \cdot \gamma^{(t)} \cdot \bigg( \Ib - \frac{\bSigma \wb^{(t)} \wb^{(t)\top}}{\| \wb^{(t)} \|_{\bSigma}^2} \bigg)\xb_i \Bigg\|_2^2\\
    &\leq \|\wb^{(t)}\|_2^2 + \eta^2 \cdot  \frac{ \gamma^{(t)2} }{n \cdot \| \wb^{(t)} \|_{\bSigma}^2}\sum_{i=1}^n \Bigg\|  \bigg( \Ib - \frac{\bSigma \wb^{(t)} \wb^{(t)\top}}{\| \wb^{(t)} \|_{\bSigma}^2} \bigg)\xb_i \Bigg\|_2^2\\
    &\leq \|\wb^{(t)}\|_2^2 + \eta^2 \cdot  \frac{ \gamma^{(t)2} \cdot \max_i \| \xb_i \|_2^2 }{n \cdot \| \wb^{(t)} \|_{\bSigma}^2} \cdot \sum_{i=1}^n \bigg( 1 +  \bigg\| \frac{\bSigma \wb^{(t)} \wb^{(t)\top}}{\| \wb^{(t)} \|_{\bSigma}^2}  \bigg\|_2 \bigg)^2\\
    &= \|\wb^{(t)}\|_2^2 + \eta^2 \cdot  \frac{ \gamma^{(t)2} \cdot \max_i \| \xb_i \|_2^2 }{\| \wb^{(t)} \|_{\bSigma}^2} \cdot \bigg( 1 +   \frac{\| \bSigma \wb^{(t)}\|_2\cdot \| \wb^{(t)}\|_2}{\| \wb^{(t)} \|_{\bSigma}^2} \bigg)^2\\
    &\leq \|\wb^{(t)}\|_2^2 + 4 \eta^2 \cdot  \frac{ \gamma^{(t)} \cdot \max_i \| \xb_i \|_2^2 }{\| \wb^{(t)} \|_{\bSigma}^2} \cdot \bigg( \frac{\| \bSigma \wb^{(t)}\|_2\cdot \| \wb^{(t)} \|_2}{\| \wb^{(t)} \|_{\bSigma}^2} \bigg)^2\\
    &=  \|\wb^{(t)}\|_2^2 + 4 \eta^2 \cdot  \frac{ \gamma^{(t)2} \cdot \max_i \| \xb_i \|_2^2 }{\| \wb^{(t)} \|_{\bSigma}^6} \cdot \| \bSigma \wb^{(t)}\|_2^2\cdot \| \wb^{(t)} \|_2^2,
\end{align*}
where the first inequality follows by Jensen's inequality, and the last inequality follows by the fact that $\| \wb \|_{\bSigma}^2 = \la \wb, \bSigma \wb \ra \leq \| \bSigma \wb\|_2\cdot \| \wb\|_2$. Further plugging in the definition of $\bSigma$ gives
\begin{align*}
    \|\wb^{(t+1)}\|_2^2 &\leq \|\wb^{(t)}\|_2^2 + 4 \eta^2 \cdot  \frac{ \gamma^{(t)2} \cdot \max_i \| \xb_i \|_2^2 }{\| \wb^{(t)} \|_{\bSigma}^6} \cdot \| \wb^{(t)} \|_2^2 \cdot \Bigg\| \frac{1}{n}\sum_{i=1}^n \xb_i \xb_i^\top \wb^{(t)} \Bigg\|_2^2\\
    &\leq \|\wb^{(t)}\|_2^2 + 4 \eta^2 \cdot  \frac{ \gamma^{(t)2} \cdot \max_i \| \xb_i \|_2^3 }{\| \wb^{(t)} \|_{\bSigma}^6} \cdot \| \wb^{(t)} \|_2^2 \cdot \Bigg( \frac{1}{n}\sum_{i=1}^n | \la \xb_i, \wb^{(t)} \ra| \Bigg)^2\\
    &\leq \|\wb^{(t)}\|_2^2 + 4 \eta^2 \cdot  \frac{ \gamma^{(t)2} \cdot \max_i \| \xb_i \|_2^3 }{\| \wb^{(t)} \|_{\bSigma}^6} \cdot \| \wb^{(t)} \|_2^2 \cdot  \frac{1}{n}\sum_{i=1}^n \la \xb_i, \wb^{(t)} \ra^2 \\
    &= \|\wb^{(t)}\|_2^2 + 4 \eta^2 \cdot  \frac{ \gamma^{(t)2} \cdot \max_i \| \xb_i \|_2^3 }{\| \wb^{(t)} \|_{\bSigma}^4} \cdot \| \wb^{(t)} \|_2^2 \\
    &\leq \|\wb^{(t)}\|_2^2 + 4 \eta^2 \cdot  \frac{ \gamma^{(t)2} \cdot \max_i \| \xb_i \|_2^3 }{ \lambda_{\min}^2 \cdot \| \wb^{(t)} \|_2^4} \cdot \| \wb^{(t)} \|_2^2 \\
    &= \|\wb^{(t)}\|_2^2 + 4 \eta^2 \cdot  \frac{ \gamma^{(t)2} \cdot \max_i \| \xb_i \|_2^3 }{ \lambda_{\min}^2 \cdot \| \wb^{(t)} \|_2^2}.
\end{align*}
This finishes the proof of the first result. 

To prove the second result in the lemma, we first denote  $\hat\wb^*_t = \la \wb^{(t)}, \wb^* \ra \cdot \|\wb^*\|_2^{-2} \cdot \wb^*$, and define
\begin{align*}
    \cB^{(t)}:= \{\wb \in \RR^d : \| \wb - \wb^{(t)} \|_2 \leq \| \wb^{(t)} \|_2/2 \}.
\end{align*}
Then the condition $ \|  \wb^{(t)} -  \la \wb^{(t)}, \wb^* \ra \cdot \|\wb^*\|_2^{-1} \cdot \wb^* \|_2 \leq \| \wb^{(0)} \|_2 / 2 $ and the result in the first part that  $\| \wb^{(t)} \|_2^2 \leq \| \wb^{(t+1)} \|_2^2$ for all $t\geq 0$ imply that 
\begin{align*}
     \|  \wb^{(t)} - \hat\wb^*_t \|_2 \leq \| \wb^{(0)} \|_2 / 2 \leq \| \wb^{(t)} \|_2/2,
\end{align*}
and therefore 
\begin{align}\label{eq: wstar_nearwt}
   \hat\wb^*_t \in \cB^{(t)}.
\end{align}
It is also clear that under this condition we have $ \la \wb^{(t)}, \wb^* \ra > 0$. 

We proceed to derive an upper bound of $ \| \nabla_{\wb} L(\wb^{(t)},\gamma^{(t)}) \|_2$. By the definition of $\wb^*,\hat\wb^*_t$, the positive homogeneity of $f$ in $\wb$ and the fact that  $ \la \wb^{(t)}, \wb^* \ra > 0$, it is easy to see that
\begin{align*}
    y_i \cdot f(\hat\wb^{*}_t, \gamma^{(t)} , \xb_i) = y_i \cdot f(\la \wb^{(t)} , \wb^* \ra \cdot \| \wb^* \|_2^{-2} \cdot \wb^*,\gamma^{(t)},\xb_i) = y_i \cdot f( \wb^*,\gamma^{(t)},\xb_i) = \gamma^{(t)}
\end{align*}
for all $i\in [n]$. Therefore, denoting $\ell_{*}'^{(t)} = \ell'( \gamma^{(t)} )$, then we have $\ell_{*}'^{(t)} = \ell'[ y_i \cdot f(\la \wb^{(t)} , \wb^* \ra \cdot \| \wb^* \|_2^{-2} \cdot \wb^*,\gamma^{(t)},\xb_i) ] = \ell'[ y_i \cdot f(\hat\wb^*_t,\gamma^{(t)},\xb_i) ]$ for all $i\in [n]$. Moreover, we have
\begin{align}
    \| \nabla_{\wb} L(\wb^{(t)},\gamma^{(t)})\|_2 &\leq \Bigg\| \frac{ \gamma^{(t)}}{n \cdot \| \wb^{(t)} \|_{\bSigma}}\sum_{i=1}^n  \ell_{*}'^{(t)} \cdot y_i  \cdot \bigg( \Ib - \frac{\bSigma \wb^{(t)} \wb^{(t)\top}}{\| \wb^{(t)} \|_{\bSigma}^2} \bigg)\xb_i \Bigg\|_2 \nonumber \\
    &\quad + \Bigg\| \frac{ \gamma^{(t)} }{n \cdot \| \wb^{(t)} \|_{\bSigma}}\sum_{i=1}^n (\ell_{i}'^{(t)} - \ell_{*}'^{(t)}) \cdot y_i \cdot \bigg( \Ib - \frac{\bSigma \wb^{(t)} \wb^{(t)\top}}{\| \wb^{(t)} \|_{\bSigma}^2} \bigg)\xb_i \Bigg\|_2 \nonumber \\
    &\leq \Bigg\| \frac{ \gamma^{(t)}}{n \cdot \| \wb^{(t)} \|_{\bSigma}}\sum_{i=1}^n \ell_{*}'^{(t)} \cdot y_i\cdot \bigg( \Ib - \frac{\bSigma \wb^{(t)} \wb^{(t)\top}}{\| \wb^{(t)} \|_{\bSigma}^2} \bigg)\xb_i \Bigg\|_2  \nonumber \\
    &\quad +  \frac{ \gamma^{(t)}}{n \cdot \| \wb^{(t)} \|_{\bSigma}}\sum_{i=1}^n |\ell_{i}'^{(t)} - \ell_{*}'^{(t)}| \cdot \Bigg\|\bigg( \Ib - \frac{\bSigma \wb^{(t)} \wb^{(t)\top}}{\| \wb^{(t)} \|_{\bSigma}^2} \bigg)\xb_i \Bigg\|_2 \nonumber \\
    &\leq \Bigg\| \frac{ \gamma^{(t)}}{n \cdot \| \wb^{(t)} \|_{\bSigma}}\sum_{i=1}^n  \ell_{*}'^{(t)} \cdot y_i \cdot \bigg( \Ib - \frac{\bSigma \wb^{(t)} \wb^{(t)\top}}{\| \wb^{(t)} \|_{\bSigma}^2} \bigg)\xb_i \Bigg\|_2  \nonumber \\
    &\quad +  \frac{ \gamma^{(t)}}{n \cdot \| \wb^{(t)} \|_{\bSigma}}\sum_{i=1}^n |\ell_{i}'^{(t)} - \ell_{*}'^{(t)}| \cdot \|\xb_i \|_2 \nonumber \\
    &= \underbrace{\frac{ \gamma^{(t)}\cdot |\ell_{*}'^{(t)}| }{n \cdot \| \wb^{(t)} \|_{\bSigma}} \cdot \Bigg\| \sum_{i=1}^n  \bigg( \Ib - \frac{\bSigma \wb^{(t)} \wb^{(t)\top}}{\| \wb^{(t)} \|_{\bSigma}^2} \bigg)\zb_i \Bigg\|_2}_{I_1}  \nonumber \\
    &\quad + \underbrace{\frac{ \gamma^{(t)}}{n \cdot \| \wb^{(t)} \|_{\bSigma}}\sum_{i=1}^n |\ell_{i}'^{(t)} - \ell_{*}'^{(t)}| \cdot \|\xb_i \|_2}_{I_2}.\label{eq:gradientcalculation_3}
\end{align}
In the following, we bound $I_1$ and $I_2$ separately. For $I_1$, we have
\begin{align}
    \sum_{i=1}^n  \bigg( \Ib - \frac{\bSigma \wb^{(t)} \wb^{(t)\top}}{\| \wb^{(t)} \|_{\bSigma}^2} \bigg)\zb_i 
    & = \| \wb^{(t)} \|_{\bSigma}^{-2}\cdot \sum_{i=1}^n \Big( \| \wb^{(t)} \|_{\bSigma}^{2} \cdot \Ib -  \bSigma \wb^{(t)} \wb^{(t)\top} \Big)\zb_i\nonumber \\
     & = \| \wb^{(t)} \|_{\bSigma}^{-2}\cdot \sum_{i=1}^n \Bigg( \frac{1}{n} \sum_{i'=1}^n \la \wb^{(t)}, \zb_{i'}\ra^2\cdot \Ib - \frac{1}{n}\sum_{i'=1}^n \la \wb^{(t)} , \zb_{i'}\ra \cdot \zb_{i'} \wb^{(t)\top} \Bigg)\zb_i \nonumber \\
     & = \| \wb^{(t)} \|_{\bSigma}^{-2}\cdot \frac{1}{n}\sum_{i,i'=1}^n \la \wb^{(t)}, \zb_{i'}\ra^2\cdot \zb_i - \| \wb^{(t)} \|_{\bSigma}^{-2}\cdot \frac{1}{n}\sum_{i,i'=1}^n \la \wb^{(t)} , \zb_{i'}\ra \cdot  \la\wb^{(t)},\zb_i\ra \cdot \zb_{i'} \nonumber \\
     & = \| \wb^{(t)} \|_{\bSigma}^{-2}\cdot \frac{1}{n}\sum_{i,i'=1}^n \la \wb^{(t)}, \zb_{i}\ra^2\cdot \zb_{i'} - \| \wb^{(t)} \|_{\bSigma}^{-2}\cdot \frac{1}{n}\sum_{i,i'=1}^n \la \wb^{(t)} , \zb_{i'}\ra \cdot  \la\wb^{(t)} ,\zb_i\ra \cdot \zb_{i'} \nonumber \\
     & = \| \wb^{(t)} \|_{\bSigma}^{-2}\cdot \frac{1}{n}\sum_{i,i'=1}^n ( \la \wb^{(t)}, \zb_{i}\ra^2 - \la \wb^{(t)} , \zb_{i'}\ra \cdot  \la\wb^{(t)},\zb_i\ra ) \cdot \zb_{i'}. \label{eq:gradientcalculation_4}
\end{align}
Moreover, by the definition of $\hat\wb^*_t $, it is clear that 
\begin{align*}
    \la \hat\wb^*_t, \zb_{i}\ra^2 - \la \hat\wb^*_t , \zb_{i'}\ra \cdot  \la\hat\wb^*_t,\zb_i\ra  = 0.
\end{align*}
Therefore we have
\begin{align*}
     &\la \wb^{(t)}, \zb_{i}\ra^2 - \la \wb^{(t)} , \zb_{i'}\ra \cdot  \la\wb^{(t)},\zb_i\ra \\
     &\qquad  =  \la \wb^{(t)}, \zb_{i}\ra^2 - \la \wb^{(t)} , \zb_{i'}\ra \cdot  \la\wb^{(t)},\zb_i\ra -  \la \hat\wb^*_t, \zb_{i}\ra^2 + \la \hat\wb^*_t , \zb_{i'}\ra \cdot  \la\hat\wb^*_t,\zb_i\ra \\
     &\qquad = (\la \wb^{(t)}, \zb_{i}\ra^2 -  \la \hat\wb^*_t, \zb_{i}\ra^2 ) - ( \la \wb^{(t)} , \zb_{i'}\ra \cdot  \la\wb^{(t)},\zb_i\ra -  \la \hat\wb^*_t , \zb_{i'}\ra \cdot  \la\hat\wb^*_t,\zb_i\ra  )\\
     &\qquad = (\la \wb^{(t)} + \hat\wb^*_t, \zb_{i}\ra )\cdot (\la \wb^{(t)} - \hat\wb^*_t, \zb_{i}\ra ) - \la \wb^{(t)} , \zb_{i'}\ra \cdot \la \wb^{(t)} - \wb^* , \zb_{i}\ra + \la \wb^{(t)} - \wb^* , \zb_{i'}\ra \cdot \la\hat\wb^*_t,\zb_i\ra.
\end{align*}
Taking absolute value on both sides and applying triangle inequality gives
\begin{align}
    &|\la \wb^{(t)}, \zb_{i}\ra^2 - \la \wb^{(t)} , \zb_{i'}\ra \cdot  \la\wb^{(t)},\zb_i\ra| \nonumber \\
    &\quad \leq |\la \wb^{(t)} + \hat\wb^*_t, \zb_{i}\ra | \cdot |\la \wb^{(t)} - \hat\wb^*_t, \zb_{i}\ra | + |\la \wb^{(t)} , \zb_{i'}\ra| \cdot |\la \wb^{(t)} - \wb^* , \zb_{i}\ra| + |\la \wb^{(t)} - \wb^* , \zb_{i'}\ra| \cdot |\la\hat\wb^*_t,\zb_i\ra| \nonumber \\
    &\quad \leq (\| \wb^{(t)}\|_2 + \|\hat\wb^*_t\|_2)\cdot \|\zb_{i}\|_2 \cdot \| \wb^{(t)} - \hat\wb^*_t\|_2 \cdot\| \zb_{i}\|_2 + \| \wb^{(t)}\|_2 \cdot \|\zb_{i'}\|_2 \cdot \| \wb^{(t)} - \wb^*\|_2 \cdot \| \zb_{i}\|_2 \nonumber \\
    &\quad\quad+ \| \wb^{(t)} - \wb^* \|_2 \cdot \| \zb_{i'}\|_2 \cdot \|\hat\wb^*_t\|_2\cdot \|\zb_i\|_2 \nonumber \\
    &\quad \leq 2 (\| \wb^{(t)}\|_2 + \|\hat\wb^*_t\|_2)\cdot \max_i \|\zb_{i}\|_2^2 \cdot \| \wb^{(t)} - \hat\wb^*_t\|_2 \nonumber \\
    &\quad \leq 4 \| \wb^{(t)}\|_2 \cdot \max_i \|\zb_{i}\|_2^2 \cdot \| \wb^{(t)} - \hat\wb^*_t\|_2 \nonumber \\
    &\quad = 4 \| \wb^{(t)}\|_2 \cdot \max_i \|\xb_{i}\|_2^2 \cdot \| \wb^{(t)} - \hat\wb^*_t\|_2.\label{eq:gradientcalculation_5}
\end{align}
Plugging \eqref{eq:gradientcalculation_5} into \eqref{eq:gradientcalculation_4} gives
\begin{align}
I_1 &= \frac{ \gamma^{(t)}\cdot |\ell_{*}'^{(t)}| }{n \cdot \| \wb^{(t)} \|_{\bSigma}} \cdot \Bigg\| \sum_{i=1}^n  \bigg( \Ib - \frac{\bSigma \wb^{(t)} \wb^{(t)\top}}{\| \wb^{(t)} \|_{\bSigma}^2} \bigg)\zb_i \Bigg\|_2\nonumber\\
&\leq \frac{ 4\gamma^{(t)}\cdot |\ell_{*}'^{(t)}| }{\| \wb^{(t)} \|_{\bSigma}} \cdot\frac{ \| \wb^{(t)}\|_2}{ \| \wb^{(t)} \|_{\bSigma}^2 } \cdot \max_i \|\xb_{i}\|_2^3 \cdot \| \wb^{(t)} - \hat\wb^*_t\|_2\nonumber \\
&\leq \frac{ 4\gamma^{(t)}\cdot \exp(-\gamma^{(t)}) }{\| \wb^{(t)} \|_{\bSigma}} \cdot\frac{ \| \wb^{(t)}\|_2}{ \| \wb^{(t)} \|_{\bSigma}^2 } \cdot \max_i \|\xb_{i}\|_2^3 \cdot \| \wb^{(t)} - \hat\wb^*_t\|_2\nonumber \\
&\leq  \frac{ 4\gamma^{(t)}\cdot \exp(-\gamma^{(t)}) }{\lambda_{\min}^{3/2}\cdot \| \wb^{(t)} \|_2^2} \cdot \max_i \|\xb_{i}\|_2^3 \cdot \| \wb^{(t)} - \hat\wb^*_t\|_2\nonumber \\
&\leq  \frac{ 4\gamma^{(t)}\cdot \exp(-\gamma^{(t)}) }{\lambda_{\min}^{3/2}\cdot \| \wb^{(0)} \|_2^2} \cdot \max_i \|\xb_{i}\|_2^3 \cdot \| \wb^{(t)} - \hat\wb^*_t\|_2\label{eq:gradientcalculation_6}
\end{align}
where the second inequality follows by the fact that $-\exp(-\gamma^{(t)}) \leq  \ell_{*}'^{(t)} = \ell'(\gamma^{(t)} ) < 0$. 
Regarding the bound for $I_2$, by the mean value theorem, there exists $z$ between $ y_i\cdot f(\wb^{(t)}, \gamma^{(t)}, \xb_i)$ and $ y_i\cdot f(\hat\wb^*_t, \gamma^{(t)}, \xb_i) = \gamma^{(t)}$ such that 
\begin{align}
    |\ell_{i}'^{(t)} - \ell_{*}'^{(t)}| &= |\ell''( z) \cdot y_i \cdot [f(\wb^{(t)}, \gamma^{(t)}, \xb_i) - f(\hat\wb^*_t, \gamma^{(t)}, \xb_i)]| \nonumber\\
    &\leq |\ell''( z)| \cdot |f(\wb^{(t)}, \gamma^{(t)}, \xb_i) - f(\hat\wb^*_t, \gamma^{(t)}, \xb_i)| \nonumber \\
    &\leq \max\{ |\ell_1'^{(t)}|, \ldots, |\ell_n'^{(t)}|, \exp(-\gamma^{(t)}) \} \cdot |f(\wb^{(t)}, \gamma^{(t)}, \xb_i) - f(\hat\wb^*_t, \gamma^{(t)}, \xb_i)|.\label{eq:l'_meanvalue}
\end{align}
where the first inequality follows by the property of cross-entropy loss that $0\leq \ell''( z) \leq - \ell'( z)$, and the second inequality follows from the fact that $z$ is between $ y_i\cdot f(\wb^{(t)}, \gamma^{(t)}, \xb_i)$ and $ y_i\cdot f(\hat\wb^*_t, \gamma^{(t)}, \xb_i) = \gamma^{(t)}$. 
Moreover, by \eqref{eq: wstar_nearwt} we have $\hat\wb^*_t \in \cB^{(t)}$ and $\| \wb \|_2 \geq \| \wb^{(t)} \|_2 / 2 $ for all $\wb \in \cB^{(t)}= \{\wb \in \RR^d : \| \wb - \wb^{(t)} \|_2 \leq \| \wb^{(t)} \|_2/2 \}$. For all $\wb \in \cB^{(t)}$, we have
\begin{align*}
    \| \nabla_{\wb} f(\wb, \gamma^{(t)}, \xb_i)\|_2 &=  \bigg\| \frac{ \gamma^{(t)}}{ \| \wb\|_{\bSigma}} \cdot \bigg( \Ib - \frac{\bSigma \wb \wb^{\top}}{\| \wb \|_{\bSigma}^2} \bigg)\xb_i \bigg\|_2 \\
    &\leq \frac{ \gamma^{(t)}}{ \| \wb \|_{\bSigma}} \cdot \| \xb_i \|_2 \cdot \bigg( 1 +  \bigg\| \frac{\bSigma \wb \wb^{\top}}{\| \wb \|_{\bSigma}^2}  \bigg\|_2 \bigg) \\
    &= \frac{ \gamma^{(t)}}{ \| \wb \|_{\bSigma}} \cdot \| \xb_i \|_2 \cdot \bigg( 1 +   \frac{\| \bSigma \wb\|_2\cdot \| \wb\|_2}{\| \wb \|_{\bSigma}^2} \bigg) \\
    &\leq  \frac{2 \gamma^{(t)}}{ \| \wb \|_{\bSigma}} \cdot \| \xb_i \|_2 \cdot \frac{\| \bSigma \wb\|_2\cdot \| \wb\|_2}{\| \wb \|_{\bSigma}^2} ,
\end{align*}
where the last inequality follows by the fact that $\| \wb \|_{\bSigma}^2 = \la \wb, \bSigma \wb \ra \leq \| \bSigma \wb\|_2\cdot \| \wb\|_2$. Further plugging in the definition of $\bSigma$ gives
\begin{align*}
\| \nabla_{\wb} f(\wb, \gamma^{(t)}, \xb_i)\|_2
    &\leq \frac{2 \gamma^{(t)}}{ \| \wb \|_{\bSigma}^3} \cdot \| \xb_i \|_2 \cdot \| \wb\|_2 \cdot \Bigg\| \frac{1}{n}\sum_{j=1}^n \xb_j \cdot \la \wb, \xb_j \ra \Bigg\|_2  \\
    &\leq \frac{2 \gamma^{(t)}}{ \| \wb \|_{\bSigma}^3} \cdot \max_i\| \xb_i \|_2^2 \cdot \| \wb\|_2 \cdot \frac{1}{n}\sum_{i=1}^n |\la \wb, \xb_i \ra|  \\
    &\leq \frac{2 \gamma^{(t)}}{ \| \wb \|_{\bSigma}^3} \cdot \max_i\| \xb_i \|_2^2 \cdot \| \wb\|_2 \cdot \sqrt{\frac{1}{n}\sum_{i=1}^n |\la \wb, \xb_i \ra|^2 } \\
    &= \frac{2 \gamma^{(t)}}{ \| \wb \|_{\bSigma}^2} \cdot \max_i\| \xb_i \|_2^2 \cdot \| \wb\|_2 \\
    &\leq \frac{2 \gamma^{(t)}}{ \lambda_{\min}\cdot \| \wb \|_2^2} \cdot \max_i\| \xb_i \|_2^2 \cdot \| \wb\|_2 \\
    &\leq \frac{4 \gamma^{(t)}}{ \lambda_{\min}\cdot \| \wb^{(t)} \|_2} \cdot \max_i\| \xb_i \|_2^2,
\end{align*}
where the third inequality follows by Jensen's inequality, and the last inequality follows by $\|\wb\|_2 \geq \| \wb^{(t)} \|_2 / 2$ for all $\wb \in \cB^{(t)}$. 
Therefore, $g(\wb):= f(\wb, \gamma^{(t)}, \xb_i)$ is $  (4 \lambda_{\min}^{-1} \cdot \| \wb^{(t)} \|_{2}^{-1}\cdot \gamma^{(t)} \cdot\max_i\| \xb_i \|_2^2)$-Lipschitz continuous over $\cB^{(t)}$, and
\begin{align}\label{eq:f_Lipschitz}
    |f(\wb^{(t)}, \gamma^{(t)}, \xb_i) - f(\hat\wb^*_t, \gamma^{(t)}, \xb_i)| \leq  \frac{ 4 \gamma^{(t)}}{ \lambda_{\min}\cdot \| \wb^{(t)} \|_2} \cdot \max_i\| \xb_i \|_2^2 \cdot \| \wb^{(t)} - \hat\wb^*_t \|_2.
\end{align}
Plugging \eqref{eq:f_Lipschitz} into \eqref{eq:l'_meanvalue} gives 
\begin{align*}
     |\ell_{i}'^{(t)} - \ell_{*}'^{(t)}| \leq \max\{ |\ell_1'^{(t)}|, \ldots, |\ell_n'^{(t)}|, \exp(-\gamma^{(t)}) \} \cdot \frac{ 4 \gamma^{(t)}}{ \lambda_{\min}\cdot \| \wb^{(t)} \|_2} \cdot \max_i\| \xb_i \|_2^2 \cdot \| \wb^{(t)} - \hat\wb^*_t \|_2
\end{align*}
for all $i\in[n]$ and $t \geq 0$. Then by the definition of $I_2$, we have
\begin{align}
    I_2 &= \frac{ \gamma^{(t)}}{n \cdot \| \wb^{(t)} \|_{\bSigma}}\sum_{i=1}^n |\ell_{i}'^{(t)} - \ell_{*}'^{(t)}| \cdot \|\xb_i \|_2 \nonumber\\
    &\leq \frac{ \gamma^{(t)}}{ \| \wb^{(t)} \|_{\bSigma}} \cdot \max_i \| \xb_i\|_2^3  \cdot \max\{ |\ell_1'^{(t)}|, \ldots, |\ell_n'^{(t)}|, \exp(-\gamma^{(t)}) \} \cdot  \frac{ 4 \gamma^{(t)}}{ \lambda_{\min} \cdot \| \wb^{(t)} \|_2} \cdot \| \wb^{(t)} - \hat\wb^*_t \|_2\nonumber\\
    &\leq  \frac{ 4 \gamma^{(t)2}}{ \lambda_{\min}^{3/2}\cdot \| \wb^{(t)} \|_2^2} \cdot \max_i \| \xb_i\|_2^3  \cdot \max\{ |\ell_1'^{(t)}|, \ldots, |\ell_n'^{(t)}|, \exp(-\gamma^{(t)}) \} \cdot \| \wb^{(t)} - \hat\wb^*_t \|_2\nonumber\\
    &\leq  \frac{ 4 \gamma^{(t)2}}{ \lambda_{\min}^{3/2}\cdot \| \wb^{(0)} \|_2^2} \cdot \max_i \| \xb_i\|_2^3  \cdot \max\{ |\ell_1'^{(t)}|, \ldots, |\ell_n'^{(t)}|, \exp(-\gamma^{(t)}) \} \cdot \| \wb^{(t)} - \hat\wb^*_t \|_2 \label{eq:gradientcalculation_7} 
\end{align}
Plugging \eqref{eq:gradientcalculation_3}, \eqref{eq:gradientcalculation_6} and \eqref{eq:gradientcalculation_7} into \eqref{eq:gradientcalculation_2} gives
\begin{align*}
    \|\wb^{(t+1)}\|_2^2 &\leq \|\wb^{(t)}\|_2^2 + 2 \eta^2 \cdot \Bigg[ \frac{ 4\gamma^{(t)}\cdot \exp(-\gamma^{(t)}) }{\lambda_{\min}^{3/2}\cdot \| \wb^{(0)} \|_2^2} \cdot \max_i \|\xb_{i}\|_2^3 \cdot \| \wb^{(t)} - \hat\wb^*_t\|_2 \Bigg]^2\\
    &\quad +  2 \eta^2 \cdot\Bigg[   \frac{ 4 \gamma^{(t)2}}{ \lambda_{\min}^{3/2}\cdot \| \wb^{(0)} \|_2^2} \cdot \max_i \| \xb_i\|_2^3  \cdot \max\{ |\ell_1'^{(t)}|, \ldots, |\ell_n'^{(t)}|, \exp(-\gamma^{(t)}) \} \cdot \| \wb^{(t)} - \hat\wb^*_t \|_2 \Bigg]^2\\
    &\leq \|\wb^{(t)}\|_2^2 + \eta^2  G\cdot \max\{\gamma^{(t)2}, \gamma^{(t)4}\}\cdot \max\{ |\ell_1'^{(t)}|^2, \ldots, |\ell_n'^{(t)}|^2, \exp(-2\gamma^{(t)}) \}\cdot  \| \wb^{(t)} - \hat\wb^*_t \|_2^2,
\end{align*}
where $G = 64 \lambda_{\min}^{-3}\cdot \max_i \|\xb_{i}\|_2^6 \cdot \| \wb^{(0)} \|_2^{-4}$. 
This finishes the proof.
\end{proof}

\subsection{Proof of Lemma~\ref{lemma:firststage}}
In preparation of the proof of Lemma~\ref{lemma:firststage}, we first present the following lemma.
\begin{lemma}\label{lemma:gamma_comparison}
Suppose that a sequence $a^{(t)}$, $t\geq 0$ follows the iterative formula
\begin{align*}
    a^{(t+1 )} = a^{(t)} + c \cdot \exp(-a^{(t)})
\end{align*}
for some $c>0$. 
Then it holds that
\begin{align*}
    \log(c\cdot t + \exp(a^{(0)})) \leq a^{(t)} \leq c \exp(-a^{(0)}) + \log( c\cdot t +\exp(a^{(0)}) )
\end{align*}
for all $t\geq 0$.
\end{lemma}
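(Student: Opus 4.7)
}
My plan is to reduce the recursion for $a^{(t)}$ to a much cleaner additive recursion by the exponential substitution $b^{(t)} := \exp(a^{(t)})$. Under this change of variable, the update becomes
\[
    b^{(t+1)} \;=\; \exp\!\bigl(a^{(t)} + c\exp(-a^{(t)})\bigr) \;=\; b^{(t)}\cdot \exp\!\bigl(c/b^{(t)}\bigr),
\]
so the behaviour of $b^{(t)}$ is governed entirely by two simple elementary inequalities $\exp(x)\ge 1+x$ and $\exp(x)-1\le x\exp(x)$ applied with $x=c/b^{(t)}\ge 0$. Since both implied step sizes are positive, note at the outset that $b^{(t)}$, and hence $a^{(t)}$, is monotonically increasing in $t$; this monotonicity is what makes the telescoping arguments below uniform in $t$.

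For the lower bound I would apply $\exp(x)\ge 1+x$ to $x=c/b^{(t)}$ to obtain
\[
    b^{(t+1)} \;\ge\; b^{(t)}\cdot\bigl(1 + c/b^{(t)}\bigr) \;=\; b^{(t)} + c.
\]
Telescoping yields $b^{(t)}\ge b^{(0)}+ct = \exp(a^{(0)}) + ct$, and taking logarithms gives $a^{(t)}\ge \log(ct+\exp(a^{(0)}))$, which is the claimed lower bound.

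For the upper bound I would use $\exp(x)-1\le x\exp(x)$ (valid for $x\ge 0$) with $x=c/b^{(t)}$ to obtain
\[
    b^{(t+1)} - b^{(t)} \;=\; b^{(t)}\bigl(\exp(c/b^{(t)}) - 1\bigr) \;\le\; c\exp(c/b^{(t)}) \;\le\; c\exp(c/b^{(0)}),
\]
where in the last step I use $b^{(t)}\ge b^{(0)}$ by the monotonicity noted above. Telescoping gives $b^{(t)}\le b^{(0)} + c\exp(c/b^{(0)})\cdot t$, and because $\exp(c/b^{(0)})\ge 1$ we may further upper bound the right-hand side by $\exp(c/b^{(0)})\cdot(b^{(0)}+ct)$. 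Taking logarithms and recalling $b^{(0)}=\exp(a^{(0)})$ yields
\[
    a^{(t)} \;\le\; c\exp(-a^{(0)}) + \log\!\bigl(ct + \exp(a^{(0)})\bigr),
\]
which is the claimed upper bound.

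I do not anticipate a genuinely hard step here. The only modest subtlety is that the first-step overshoot $c\exp(-a^{(0)})$ in the upper bound comes precisely from the slack in the inequality $\exp(x)-1\le x\exp(x)$ evaluated at the initial (smallest) value of $b$; freezing that slack at $b^{(0)}$ by using monotonicity of $b^{(t)}$ is the key bookkeeping point, and it explains why the additive gap between the two sides of the lemma is exactly $c\exp(-a^{(0)})$ rather than growing with $t$.
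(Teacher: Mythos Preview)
Your proof is correct and takes a genuinely different, more elementary route than the paper. The paper proves the lower bound by comparing the discrete recursion to the continuous-time flow $\underline a'(t)=c\exp(-\underline a(t))$ (which has the closed-form solution $\underline a(t)=\log(ct+\exp(a^{(0)}))$) and invoking a comparison principle; for the upper bound, the paper then feeds that lower bound back into the telescoped identity $a^{(t)}=a^{(0)}+c\sum_{\tau}\exp(-a^{(\tau)})$, bounds each term by $1/(c\tau+\exp(a^{(0)}))$, and compares the resulting sum to an integral. Your substitution $b^{(t)}=\exp(a^{(t)})$ sidesteps both the ODE comparison and the integral estimate: the inequalities $\exp(x)\ge 1+x$ and $\exp(x)-1\le x\exp(x)$ give additive step bounds on $b^{(t)}$ directly, and monotonicity of $b^{(t)}$ freezes the per-step slack at $\exp(c/b^{(0)})$, which is exactly what produces the extra $c\exp(-a^{(0)})$ in the upper bound. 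Your argument is shorter and fully self-contained; the paper's version is slightly more conceptual (the ODE picture explains \emph{why} $\log(ct+\exp(a^{(0)}))$ is the right target) and may port more readily to variants of the recursion, but for this exact statement your approach is preferable.
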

The proof of Lemma~\ref{lemma:firststage} is given as follows. 

\begin{proof}[Proof of Lemma~\ref{lemma:firststage}]
Set $T_0 = 1 + 400 \eta^{-1} \epsilon^{-2} \cdot \lambda_{\max}^{3/2} \cdot \lambda_{\min}^{-1} \cdot \| \wb^{(0)} \|_2^2 \cdot \| \wb^* \|_2 $. By gradient descent update rule, we have
\begin{align*}
    |\gamma^{(t+1)} - \gamma^{(t)}| &= \Bigg| \frac{\eta}{n}\sum_{i=1}^n \ell_l'^{(t)}\cdot\frac{\la \wb^{(t)}, \zb_i\ra}{\sqrt{n^{-1}\cdot \sum_{j=1}^n \la \wb^{(t)}, \zb_i\ra^2}} \Bigg|\\
    &\leq \eta\cdot \frac{  n^{-1}\cdot \sum_{i=1}^n | \la \wb^{(t)}, \zb_i\ra|}{\sqrt{n^{-1}\cdot \sum_{j=1}^n \la \wb^{(t)}, \zb_i\ra^2}} \\
    &\leq \eta,
\end{align*}
where the first inequality follows by the fact that $|\ell'(z)| < 1 $ for all $z\in \RR$, and the second inequality follows by Jensen's inequality. Therefore, we have
\begin{align*}
     | \gamma^{(t)} - 1 | &  = | \gamma^{(t)} - \gamma^{(0)} | \leq \eta + 400 \epsilon^{-2} \cdot \lambda_{\max}^{3/2} \cdot \lambda_{\min}^{-1} \cdot \| \wb^{(0)} \|_2^2 \cdot \| \wb^* \|_2.
\end{align*}
Further plugging in the definition of $\epsilon$ gives
\begin{align}
     | \gamma^{(t)} - 1 | & \leq  \eta +  \frac{6400 \max_i \| \zb_i \|_2 \cdot  \| \wb^* \|_2^2 \cdot \lambda_{\max}^{3/2} \cdot \lambda_{\min}^{-1} \cdot \| \wb^{(0)} \|_2^2 }{ \min\big\{1/3,  \| \wb^{(0)} \|_2^{-1} \cdot \lambda_{\min}^{1/2} / (40 \lambda_{\max}^{3/4}) \big\} }\nonumber\\
     &= \eta +  19200 \max_i \| \zb_i \|_2 \cdot  \| \wb^* \|_2^2 \cdot \lambda_{\max}^{3/2} \cdot \lambda_{\min}^{-1} \cdot \| \wb^{(0)} \|_2^2 \nonumber\\
     &\leq 1/2\label{eq:firststage_firstresult_overlineT}
\end{align}
for all $t\in [T_0]$, 
where the first equality follows by the assumption that $\| \wb^{(0)} \|_2 \leq \lambda_{\min}^{1/2} / (20 \lambda_{\max}^{3/4})$, and the second inequality follows by the assumption that $\eta \leq 1/4$ and $ \| \wb^{(0)} \|_2 \leq (\max_i \| \zb_i \|_2)^{-1/2} \cdot  \| \wb^* \|_2^{-1} \cdot \lambda_{\max}^{-3/4} \cdot \lambda_{\min}^{1/2} / 280$. 

By Lemma~\ref{lemma:norm_upperbound}, we have
\begin{align*}
     \|\wb^{(t)}\|_2^2  \leq \|\wb^{(t+1)}\|_2^2 \leq \|\wb^{(t)}\|_2^2 + 4 \eta^2 \cdot  \frac{ \gamma^{(t)2} \cdot \max_i \| \xb_i \|_2^3 }{ \lambda_{\min}^2 \cdot \| \wb^{(t)} \|_2^2}.
\end{align*}
By the monotonicity of $\|\wb^{(t)}\|_2$ and the result in \eqref{eq:firststage_firstresult_overlineT} that $\gamma^{(t)} \leq 3/2$ for all $t\in [T_0]$, we have
\begin{align*}
     \|\wb^{(t+1)}\|_2^2 \leq \|\wb^{(t)}\|_2^2 + 9 \eta^2 \cdot  \frac{ \max_i \| \xb_i \|_2^3 }{ \lambda_{\min}^2 \cdot \| \wb^{(0)} \|_2^2}
\end{align*}
for all  $t\in [T_0]$. 
Taking a telescoping sum then gives
\begin{align*}
    \|\wb^{(t)}\|_2^2 \leq \|\wb^{(0)}\|_2^2 +    \frac{ 9 \eta^2 t \cdot \max_i \| \xb_i \|_2^3 }{ \lambda_{\min}^2 \cdot \| \wb^{(0)} \|_2^2} \cdot t \leq \|\wb^{(0)}\|_2^2 +  \frac{ 9 \eta^2 T_0\cdot \max_i \| \xb_i \|_2^3 }{ \lambda_{\min}^2 \cdot \| \wb^{(0)} \|_2^2}
\end{align*}
for all $t\in [T_0]$. 
Plugging in the definition of $T_0$ gives
\begin{align*}
    \|\wb^{(t)}\|_2^2 &\leq \|\wb^{(0)}\|_2^2 +  \frac{ 9 \eta^2 \cdot \max_i \| \xb_i \|_2^3 }{ \lambda_{\min}^2 \cdot \| \wb^{(0)} \|_2^2}\cdot ( 1 + 400 \eta^{-1} \epsilon^{-2} \cdot \lambda_{\max}^{3/2} \cdot \lambda_{\min}^{-1} \cdot \| \wb^{(0)} \|_2^2 \cdot \| \wb^* \|_2 )\\
    &= \|\wb^{(0)}\|_2^2 +  \frac{ 9 \eta^2 \cdot \max_i \| \xb_i \|_2^3 }{ \lambda_{\min}^2 \cdot \| \wb^{(0)} \|_2^2} + \frac{ 3600 \eta \cdot \max_i \| \xb_i \|_2^3 }{ \lambda_{\min}^3 } \cdot \epsilon^{-2} \cdot \lambda_{\max}^{3/2} \cdot \| \wb^* \|_2 )\\
    &\leq (1 + \epsilon^2 / 4)\cdot \|\wb^{(0)}\|_2^2
\end{align*}
for all $t\in [T_0]$, 
where the last inequality follows by the assumption that $\eta \leq \min\{ \epsilon\cdot \| \wb^{(0)} \|_2^2 \cdot \lambda_{\min} \cdot (\max_i \| \xb_i \|_2)^{-3/2} / 9, \epsilon^4\cdot \lambda_{\min}^3 \cdot \lambda_{\max}^{-3/2}\cdot  \| \wb^* \|_2^{-1} \cdot (\max_i \| \xb_i \|_2)^{-3} / 28800\}$.
Therefore 
\begin{align}\label{eq:firststage_secondresult_overlineT}
    \|\wb^{(t)}\|_2 \leq \sqrt{1 + \epsilon^2 / 4}\cdot \|\wb^{(0)}\|_2 \leq (1 + \epsilon / 2)\cdot  \|\wb^{(0)}\|_2
\end{align}
for all $t\in [T_0]$.
By Lemma~\ref{lemma:innerproduct_lowerbound}, we have 
\begin{align}\label{eq:innerproduct_lowerbound_proof_stage1}
    \la \wb^{(t+1)}, \wb^* \ra &\geq \la \wb^{(t)}, \wb^* \ra +\frac{ \gamma^{(t)} \eta }{16 \| \wb^{(t)} \|_{\bSigma}^3}\cdot \exp(-\gamma^{(t)} ) \cdot  \|  \wb - \la \wb^*, \wb\ra_{\bSigma} \cdot \wb^* \|_{\bSigma}^2.
\end{align}
By the result that $\gamma^{(t)} \leq 3/2$ for all $t\in T_0$, we have $\exp(-\gamma^{(t)}) \geq \exp(-\gamma^{(3/2)}) \geq 1/5$
for all $t\in[T_0]$. 
Therefore by \eqref{eq:innerproduct_lowerbound_proof_stage1}, we have
\begin{align*}
    \la \wb^{(t+1)}, \wb^* \ra &\geq \la \wb^{(t)}, \wb^* \ra +\frac{ \gamma^{(t)} \eta }{80 \| \wb^{(t)} \|_{\bSigma}^3} \cdot  \|  \wb - \la \wb^*, \wb\ra_{\bSigma} \cdot \wb^* \|_{\bSigma}^2\\
    &\geq \la \wb^{(t)}, \wb^* \ra +\frac{ \gamma^{(t)} \eta }{80 \| \wb^{(0)} \|_{\bSigma}^3} \cdot  \|  \wb - \la \wb^*, \wb\ra_{\bSigma} \cdot \wb^* \|_{\bSigma}^2\\
    &\geq \la \wb^{(t)}, \wb^* \ra +\frac{ \eta }{160 \| \wb^{(0)} \|_{\bSigma}^3} \cdot  \|  \wb - \la \wb^*, \wb\ra_{\bSigma} \cdot \wb^* \|_{\bSigma}^2
\end{align*}
for all $t\in[T_0]$, 
where the second inequality follows by Lemma~\ref{lemma:norm_upperbound}, and the third inequality follows by the proved result that $\gamma^{(t)} \geq 1/2$ for all $t\in[T_0]$. Telescoping over $t=0,\ldots,T_0 - 1$ then gives
\begin{align*}
    \min_{t\in [T_0 - 1]}  \|  \wb - \la \wb^*, \wb\ra_{\bSigma} \cdot \wb^* \|_{\bSigma}^2 &\leq \frac{1}{T_0 - 1} \sum_{t=0}^{T_0 - 1} \|  \wb - \la \wb^*, \wb\ra_{\bSigma} \cdot \wb^* \|_{\bSigma}^2 \\
    &\leq \frac{160 \| \wb^{(0)} \|_{\bSigma}^3}{  (T_0 - 1) \eta } \cdot (\la \wb^{(T_0)}, \wb^* \ra - \la \wb^{(0)}, \wb^* \ra)\\
    &\leq \frac{160 \| \wb^{(0)} \|_{\bSigma}^3}{  (T_0 - 1) \eta } \cdot (\|  \wb^{(T_0)}\|_2  + \| \wb^{(0)} \|_2 )\cdot \| \wb^* \|_2.
\end{align*}
By the proved result that $\|  \wb^{(T_0)}\|_2 \leq (1 + \epsilon / 2)\cdot \|  \wb^{(0)}\|_2  \leq 1.5 \|  \wb^{(0)}\|_2 $ (by definition, $\epsilon \leq \max_i \la \wb^*, \zb_i \ra^{-1}/48 = 1/48$), we then obtain
\begin{align*}
    \min_{t\in [T_0 - 1]}  \|  \wb - \la \wb^*, \wb\ra_{\bSigma} \cdot \wb^* \|_{\bSigma}^2 
    &\leq \frac{400 \| \wb^{(0)} \|_{\bSigma}^3}{  (T_0 - 1) \eta } \cdot  \| \wb^{(0)} \|_2 \cdot \| \wb^* \|_2\\
    &\leq \frac{400 \lambda_{\max}^{3/2} }{  (T_0 - 1) \eta } \cdot  \| \wb^{(0)} \|_2^4 \cdot \| \wb^* \|_2\\
    &\leq \epsilon^2 \cdot  \lambda_{\min} \cdot  \| \wb^{(0)} \|_2^2,
\end{align*}
where the last inequality follows by the definition that $T_0 = 1 + 400 \eta^{-1} \epsilon^{-2} \cdot \lambda_{\max}^{3/2} \cdot \lambda_{\min}^{-1} \cdot \| \wb^{(0)} \|_2^2 \cdot \| \wb^* \|_2$. Therefore, we see that there exists $t_0 \in [T_0 - 1]$ such that 
\begin{align*}
    \big\| \wb^{(t_0)} - \la \wb^*, \wb^{(t_0 )} \ra \cdot \| \wb^* \|_2^{-2}  \cdot \wb^* \big\|_2 &\leq \|  \wb - \la \wb^*, \wb\ra_{\bSigma} \cdot \wb^* \|_2\\
    &\leq \lambda_{\min}^{-1/2} \cdot
    \|  \wb - \la \wb^*, \wb\ra_{\bSigma} \cdot \wb^* \|_{\bSigma}\\
    &\leq \epsilon \cdot  \| \wb^{(0)} \|_2,
\end{align*}
where the first inequality follows by the fact that $\la \wb^*, \wb^{(t_0 )} \ra \cdot \| \wb^* \|_2^{-2}  \cdot \wb^*$ is the $\ell_2$-projection of $\wb^{(t_0)}$ on $\mathrm{span}\{ \wb^*\}$. 
Together with \eqref{eq:firststage_firstresult_overlineT} and \eqref{eq:firststage_secondresult_overlineT}, we conclude that there exists $t_0 \in [T_0 - 1]$ such that all three results of Lemma~\ref{lemma:firststage} hold.
\end{proof}

\subsection{Proof of Lemma~\ref{lemma:linear_asymp}}

\begin{proof}[Proof of Lemma~\ref{lemma:linear_asymp}] We prove the first five results together by induction, and then prove the sixth result. Clearly, all the results hold at $t = t_0$ by the assumptions. Now suppose that there exists $t_1 \geq t_0$ such that the results hold for $t = t_0,\ldots, t_1$, i.e., for $t = t_0,\ldots, t_1$ it holds that
\begin{enumerate}[label=(\roman*)]
        \item\label{induction_1} $\big\| \wb^{(t_0)} - \la \wb^*, \wb^{(t_0 )} \ra \cdot \| \wb^* \|_2^{-2}  \cdot \wb^* \big\|_2,\ldots, \big\| \wb^{(t)} - \la \wb^*, \wb^{(t )} \ra \cdot \| \wb^* \|_2^{-2}  \cdot \wb^* \big\|_2$ is a decreasing sequence.
    \item\label{induction_2} $ \|\wb^{(0)}\|_2\leq \| \wb^{(t)}\|_2 \leq (1 + \epsilon )\cdot \|\wb^{(0)}\|_2$.
    \item\label{induction_3} $\gamma^{(t)}$ has the following upper and lower bounds:
    \begin{align*}
        &\gamma^{(t)} \leq  \log[ 8\eta\cdot (t - t_0) +2\exp(\gamma^{(t_0)}) ],\\
        &\gamma^{(t)} \geq \log[ (\eta / 8) \cdot (t - t_0) +\exp(\gamma^{(t_0)}) ].
    \end{align*}
    \item\label{induction_4} It holds that 
    \begin{align*}
        &\big\| \wb^{(t)} - \la \wb^*, \wb^{(t )} \ra \cdot \| \wb^* \|_2^{-2}  \cdot \wb^* \big\|_2\\
        &\qquad \qquad  \leq \epsilon\cdot \|\wb^{(0)} \|_2 \cdot \exp\Bigg[ -\frac{ \lambda_{\min}  }{1024\lambda_{\max}^{3/2} \cdot \| \wb^{(0)} \|_2^2} \cdot \log^2( (8/9) \eta\cdot (t - t_0)+ 1)\Bigg],\\
        &n^{-2}\sum_{i,i'=1}^n (\la \wb^{(t)} , \zb_{i'}\ra - \la \wb^{(t)} , \zb_{i}\ra)^2 \\
        &\qquad \qquad  \leq \lambda_{\max}\cdot \epsilon^2\cdot \|\wb^{(0)} \|_2^2 \cdot \exp\Bigg[ -\frac{ \lambda_{\min}  }{512\lambda_{\max}^{3/2} \cdot \| \wb^{(0)} \|_2^2} \cdot \log^2( (8/9) \eta\cdot (t - t_0)+ 1)\Bigg].
    \end{align*}
    \item\label{induction_5} $\max_{i} |\la \wb^{(t)}/\|  \wb^{(t)}\|_2 , \zb_{i}\ra - \| \wb^* \|_2^{-1} | \cdot \gamma^{(t)} \leq \| \wb^* \|_2^{-1} / 4$.
\end{enumerate}
Then we aim to show that the above conclusions also hold at iteration $t_1 + 1$. 

\noindent\textbf{Preliminary results based on the induction hypotheses.} By definition, it is easy to see that 
\begin{align*}
    \epsilon \leq \| \wb^* \|_2^{-1}\cdot (\max_i \| \zb_i \|_2)^{-1} / 24 \leq ( \min_i |\la \wb^* , \zb_i \ra|)^{-1} / 24 = 1/24 < 1.
\end{align*}
By induction hypothesis \ref{induction_1}, we have
\begin{align}\label{eq:induction_preliminary1}
    \big\| \wb^{(t)} - \la \wb^*, \wb^{(t )} \ra \cdot \| \wb^* \|_2^{-2}  \cdot \wb^* \big\|_2 \leq \big\| \wb^{(t_0)} - \la \wb^*, \wb^{(t_0 )} \ra \cdot \| \wb^* \|_2^{-2}  \cdot \wb^* \big\|_2 \leq \epsilon \cdot \|\wb^{(0)} \|_2.
\end{align}
Taking the square of both sides and dividing by $\| \wb^{(t)} \|_2^{2}$ gives
\begin{align*}
   1 - \la \wb^*/ \| \wb^* \|_2 , \wb^{(t )}/  \|\wb^{(t)} \|_2 \ra^2 \leq \epsilon^2\cdot \|\wb^{(0)} \|_2^2 /  \|\wb^{(t)} \|_2^2 \leq\epsilon^2 ,
\end{align*}
where the last inequality follows by Lemma~\ref{lemma:norm_upperbound} on the monotonicity of $\|\wb^{(t)} \|_2$. 
Therefore, we have
\begin{align}\label{eq:induction_preliminary2}
    (1 - \epsilon) \leq \sqrt{1 - \epsilon^2 } \leq \la \wb^*/ \| \wb^* \|_2 , \wb^{(t )}  /  \|\wb^{(t)} \|_2 \ra \leq 1.
\end{align}
Moreover, by \eqref{eq:induction_preliminary1}, we have
\begin{align*}
    | \la \wb^{(t)} , \zb_i \ra - \la \wb^*, \wb^{(t )} \ra \cdot \| \wb^* \|_2^{-2} \cdot \la \wb^* , \zb_i \ra | \leq  \max_i \| \zb_i \|_2\cdot \|\wb^{(0)} \|_2 \cdot \epsilon
\end{align*}
for all $i\in[n]$. Dividing by $\| \wb^{(t)} \|_2$ on both sides above gives
\begin{align*}
    &| \la \wb^{(t)} / \| \wb^{(t)} \|_2 , \zb_i \ra - \la \wb^* / \| \wb^* \|_2 , \wb^{(t )}  / \| \wb^{(t)} \|_2 \ra \cdot \la \wb^* / \| \wb^* \|_2 , \zb_i \ra |\\
    &\qquad \qquad \qquad \qquad \qquad \qquad \leq  \max_i \| \zb_i \|_2\cdot \|\wb^{(0)} \|_2  / \| \wb^{(t)} \|_2 \cdot \epsilon \leq  \max_i \| \zb_i \|_2\cdot \epsilon.
\end{align*}
Recall that $\wb^*$ is chosen such that $\la \wb^*, \zb_i\ra = 1$ for all $i\in[n]$. Therefore, rearranging terms and applying \eqref{eq:induction_preliminary2} then gives
\begin{align*}
    \la \wb^{(t)} / \| \wb^{(t)} \|_2  , \zb_i \ra &\geq \la \wb^* / \| \wb^* \|_2 , \wb^{(t )}  / \| \wb^{(t)} \|_2 \ra \cdot \la \wb^* / \| \wb^* \|_2 , \zb_i \ra -  \max_i \| \zb_i \|_2\cdot \epsilon\\
    &\geq   \| \wb^* \|_2^{-1} \cdot \la \wb^* , \zb_i \ra -2 \max_i \| \zb_i \|_2 \cdot \epsilon,\\
    &= \| \wb^* \|_2^{-1} -2 \max_i \| \zb_i \|_2\cdot \|\wb^{(0)} \|_2 \cdot \epsilon\\
     \la \wb^{(t)} / \| \wb^{(t)} \|_2  , \zb_i \ra &\leq \la \wb^* / \| \wb^* \|_2 , \wb^{(t )}  / \| \wb^{(t)} \|_2 \ra \cdot \la \wb^* / \| \wb^* \|_2 , \zb_i \ra + \max_i \| \zb_i \|_2\cdot \epsilon\\
    &\leq  \| \wb^* \|_2^{-1} \cdot \la \wb^* , \zb_i \ra + \max_i \| \zb_i \|_2\cdot  \epsilon\\
    &= \| \wb^* \|_2^{-1} + \max_i \| \zb_i \|_2\cdot \epsilon.
\end{align*}
Therefore, we have
\begin{align}\label{eq:innerproduct_stable}
    | \la \wb^{(t)} / \| \wb^{(t)} \|_2  , \zb_i \ra - \| \wb^* \|_2^{-1} | \leq 2\max_i \| \zb_i \|_2\cdot \epsilon \leq  \| \wb^* \|_2^{-1} / 4
\end{align}
for all $i\in[n]$ and $t = t_0,\ldots, t_1$, where the second inequality follows by the definition of $\epsilon$. 
Now denote $\alpha = \| \wb^* \|_2^{-1} $ and $c^{(t)} = \max_{i} |\la \wb^{(t)}  / \| \wb^{(t)} \|_2 , \zb_{i}\ra - \alpha | $. Then we have
\begin{align}\label{eq:yf_upperbound}
    y_i\cdot f(\wb^{(t)},\gamma^{(t)},\xb_i ) &= \frac{\gamma^{(t)}\cdot\la \wb^{(t)}, \zb_i \ra}{\sqrt{n^{-1} \cdot \sum_{i'=1}^n \la \wb^{(t)}, \zb_{i'}\ra^2 }  } = \frac{\gamma^{(t)}\cdot\la \wb^{(t)} / \| \wb^{(t)} \|_2, \zb_i \ra}{\sqrt{n^{-1} \cdot \sum_{i'=1}^n \la \wb^{(t)} / \| \wb^{(t)} \|_2, \zb_{i'}\ra^2 }  } \nonumber \\
    &\leq \gamma^{(t)}\cdot\frac{ \alpha + c^{(t)} }{\alpha - c^{(t)}  }
    \leq  \gamma^{(t)}\cdot (1+  3c^{(t)} / \alpha) \leq \gamma^{(t)} + 3/4, 
\end{align}
where the second inequality follows by \eqref{eq:innerproduct_stable} that $\max_i  |\la \wb^{(t)} / \| \wb^{(t)} \|_2 , \zb_{i}\ra - \alpha| \leq \alpha / 4$, and the last inequality follows by induction hypothesis \ref{induction_5}.  
Similarly, 
\begin{align}\label{eq:yf_lowerbound}
    y_i\cdot f(\wb^{(t)},\gamma^{(t)},\xb_i ) &= \frac{\gamma^{(t)}\cdot\la \wb^{(t)} / \| \wb^{(t)} \|_2, \zb_i \ra}{\sqrt{n^{-2} \cdot \sum_{i'=1}^n \la \wb^{(t)} / \| \wb^{(t)} \|_2, \zb_{i'}\ra^2 }  } \nonumber \\
    &\geq \gamma^{(t)}\cdot\frac{ \alpha - c^{(\tau)} }{\alpha + c^{(\tau)}  }\geq  \gamma^{(t)}\cdot (1 - c^{(\tau)} / \alpha) \geq \gamma^{(t)} - 1/4. 
\end{align}
Note that
\begin{align*}
    -\ell'( y_i\cdot f(\wb^{(t)},\gamma^{(t)},\xb_i ) ) = \frac{1}{1 + \exp[ y_i\cdot f(\wb^{(t)},\gamma^{(t)},\xb_i )   ]}
\end{align*}
for all $i\in[n]$ and all $t \in [t_0, t_1]$. 
Therefore by \eqref{eq:yf_upperbound} and \eqref{eq:yf_lowerbound}, we have
\begin{align}\label{eq:l_derivative_bounds}
   \exp(- \gamma^{(t)} - 3/4) / 2 \leq -\ell'( y_i\cdot f(\wb^{(t)},\gamma^{(t)},\xb_i ) )\leq \exp(- \gamma^{(t)} + 3/4)
\end{align}
for all $i\in [n]$ and all $t \in [t_0, t_1]$. 

\noindent\textbf{Proof of induction hypothesis \ref{induction_1} at iteration $t_1 + 1$.} By Lemma~\ref{lemma:innerproduct_lowerbound}, for any $t=t_0,\ldots,t_1$, we have
\begin{align*}
    &-\frac{\la \wb^*, \wb^{(t)}\ra}{\| \wb^* \|_2} \cdot \la \wb^{(t+1)}, \wb^* \ra \\
    &\leq -\frac{\la \wb^*, \wb^{(t)}\ra}{\| \wb^* \|_2} \cdot \la \wb^{(t)}, \wb^* \ra - \frac{ \eta\cdot \gamma^{(t)} \cdot \la \wb^{(t)}, \wb^* \ra}{2 \| \wb^{(t)} \|_{\bSigma}^3 \cdot \| \wb^* \|_2} \cdot \min_i |\ell_i'^{(t)}| \cdot \|  \wb^{(t)} -  \la \wb^*, \wb^{(t)}\ra_{\bSigma} \cdot \wb^* \|_{\bSigma}^2\\
    &\leq -\frac{\la \wb^*, \wb^{(t)}\ra}{\| \wb^* \|_2} \cdot \la \wb^{(t)}, \wb^* \ra - \frac{ \eta\cdot \gamma^{(t)} \cdot \lambda_{\min} \cdot \la \wb^{(t)}, \wb^* \ra}{2 \| \wb^{(t)} \|_{\bSigma}^3 \cdot \| \wb^* \|_2} \cdot \min_i |\ell_i'^{(t)}| \cdot \|  \wb^{(t)} -  \la \wb^*, \wb^{(t)}\ra_{\bSigma} \cdot \wb^* \|_2^2\\
    &\leq -\frac{\la \wb^*, \wb^{(t)}\ra}{\| \wb^* \|_2} \cdot \la \wb^{(t)}, \wb^* \ra - \frac{ \eta\cdot \gamma^{(t)} \cdot \lambda_{\min} \cdot \la \wb^{(t)}, \wb^* \ra}{2 \| \wb^{(t)} \|_{\bSigma}^3 \cdot \| \wb^* \|_2} \cdot \min_i |\ell_i'^{(t)}| \cdot  \bigg\|  \wb^{(t)} -  \frac{\la \wb^*, \wb^{(t)}\ra}{\| \wb^* \|_2^2} \cdot \wb^* \bigg\|_2^2\\
    &\leq -\frac{\la \wb^*, \wb^{(t)}\ra}{\| \wb^* \|_2} \cdot \la \wb^{(t)}, \wb^* \ra - \frac{ \eta\cdot \gamma^{(t)} \cdot \lambda_{\min} \cdot \la \wb^{(t)}, \wb^* \ra}{2 \lambda_{\max}^{3/2} \cdot \| \wb^{(t)} \|_2^3 \cdot \| \wb^* \|_2} \cdot \min_i |\ell_i'^{(t)}| \cdot  \bigg\|  \wb^{(t)} -  \frac{\la \wb^*, \wb^{(t)}\ra}{\| \wb^* \|_2^2} \cdot \wb^* \bigg\|_2^2\\
    &\leq -\frac{\la \wb^*, \wb^{(t)}\ra}{\| \wb^* \|_2} \cdot \la \wb^{(t)}, \wb^* \ra - \frac{ \eta\cdot \gamma^{(t)} \cdot \lambda_{\min} }{8\lambda_{\max}^{3/2} \cdot \| \wb^{(t)} \|_2^2} \cdot \exp(- \gamma^{(t)} ) \cdot  \bigg\|  \wb^{(t)} -  \frac{\la \wb^*, \wb^{(t)}\ra}{\| \wb^* \|_2^2} \cdot \wb^* \bigg\|_2^2\\
    &\leq -\frac{\la \wb^*, \wb^{(t)}\ra}{\| \wb^* \|_2} \cdot \la \wb^{(t)}, \wb^* \ra - \frac{ \eta\cdot \gamma^{(t)} \cdot \lambda_{\min} }{16\lambda_{\max}^{3/2} \cdot \| \wb^{(0)} \|_2^2} \cdot \exp(- \gamma^{(t)} ) \cdot  \bigg\|  \wb^{(t)} -  \frac{\la \wb^*, \wb^{(t)}\ra}{\| \wb^* \|_2^2} \cdot \wb^* \bigg\|_2^2,
\end{align*}
where the third inequality follows by the fact that $ \la \wb^*, \wb^{(t)}\ra\cdot \| \wb^* \|_2^{-2} \cdot \wb^*$ is the projection of $\wb^{(t)}$ on $\mathrm{span}\{\wb^*\}$ and $\|  \wb^{(t)} - \la \wb^*, \wb^{(t)}\ra\cdot \| \wb^* \|_2^{-2} \cdot \wb^* \|_2^2 \leq \|  \wb^{(t)} - c \cdot \wb^* \|_2^2$ for all $c\in\RR$, the fifth inequality follows by \eqref{eq:induction_preliminary2} and \eqref{eq:l_derivative_bounds}, and the last inequality follows by \eqref{eq:induction_preliminary2} and $(1+\epsilon)\leq \sqrt{2}$.
Adding $\| \wb^{(t)} \|_2^2 + \la \wb^*, \wb^{(t)}\ra^{2} \cdot \| \wb^*\|_2^{-2} / 4$ to both sides above gives
\begin{align}
    &\bigg\|  \wb^{(t+1)} -  \frac{\la \wb^*, \wb^{(t)}\ra}{\| \wb^* \|_2^2} \cdot \wb^* \bigg\|_2^2 + \| \wb^{(t)} \|_2^2 - \| \wb^{(t+1)} \|_2^2\nonumber\\
    &\leq \bigg\|  \wb^{(t)} -  \frac{\la \wb^*, \wb^{(t)}\ra}{\| \wb^* \|_2^2} \cdot \wb^* \bigg\|_2^2 - \frac{ \eta\cdot \gamma^{(t)} \cdot \lambda_{\min}}{16\lambda_{\max}^{3/2} \cdot \| \wb^{(0)} \|_2^2} \cdot \exp(- \gamma^{(t)})  \cdot \bigg\|  \wb^{(t)} -  \frac{\la \wb^*, \wb^{(t)}\ra}{\| \wb^* \|_2^2} \cdot \wb^* \bigg\|_2^2.\label{eq:monotone_proof_eq1}
\end{align}
Now by Lemma~\ref{lemma:norm_upperbound}, we have
\begin{align*}
    &\|\wb^{(t+1)}\|_2^2
    - \| \wb^{(t)}\|_2^2 \\
    &\qquad \leq \eta^2  G\cdot \max\{\gamma^{(t)2}, \gamma^{(t)4}\}\cdot \max\{ |\ell_1'^{(t)}|^2, \ldots, |\ell_n'^{(t)}|^2, \exp(-2\gamma^{(t)}) \}\cdot  \bigg\|  \wb^{(t)} -  \frac{\la \wb^*, \wb^{(t)}\ra}{\| \wb^* \|_2^2} \cdot \wb^* \bigg\|_2^2,
\end{align*}
where $G =  64  \lambda_{\min}^{-3}\cdot \max_i \|\xb_{i}\|_2^6 \cdot \| \wb^{(0)} \|_2^{-4} $. Then according to \eqref{eq:l_derivative_bounds}, we have
\begin{align*}
    \|\wb^{(t+1)}\|_2^2
    &\leq \| \wb^{(t)}\|_2^2 + \eta^2  G\cdot \max\{\gamma^{(t)2}, \gamma^{(t)4}\}\cdot  \exp(-2\gamma^{(t)}) \cdot \exp(3/2 ) \cdot \bigg\|  \wb^{(t)} -  \frac{\la \wb^*, \wb^{(t)}\ra}{\| \wb^* \|_2^2} \cdot \wb^* \bigg\|_2^2,
\end{align*}
Note that we have $0 < \max\{z, z^3\}\cdot  \exp(-z) < 2$ for all $z >0$. Therefore we have \begin{align}
    \|\wb^{(t+1)}\|_2^2
    &\leq \| \wb^{(t)}\|_2^2 + 2 \eta^2  G\cdot  \exp(-\gamma^{(t)}) \cdot \exp(3/2 ) \cdot \bigg\|  \wb^{(t)} -  \frac{\la \wb^*, \wb^{(t)}\ra}{\| \wb^* \|_2^2} \cdot \wb^* \bigg\|_2^2 \nonumber\\
    &\leq \frac{ \eta\cdot \gamma^{(t)} \cdot \lambda_{\min}}{32\lambda_{\max}^{3/2} \cdot \| \wb^{(0)} \|_2^2} \cdot \exp(- \gamma^{(t )} )  \cdot  \bigg\|  \wb^{(t)} -  \frac{\la \wb^*, \wb^{(t)}\ra}{\| \wb^* \|_2^2} \cdot \wb^* \bigg\|_2^2,\label{eq:monotone_proof_eq2}
\end{align}
where the last inequality follows by the assumption that $\eta \leq G^{-1} \cdot \exp(-3/2 )\cdot \frac{ \lambda_{\min} }{64\lambda_{\max}^{3/2} \cdot \| \wb^{(0)} \|_2^2} $. 
Moreover, note again that $ \frac{\la \wb^*, \wb^{(t+1)}\ra}{\| \wb^* \|_2^2} \cdot \wb^*$ is the projection of $\wb^{(t+1)}$ onto the subspace $\mathrm{span}\{\wb^*\}$, which implies that
\begin{align}\label{eq:monotone_proof_eq3}
    \bigg\|  \wb^{(t+1)} -  \frac{\la \wb^*, \wb^{(t+1)}\ra}{\| \wb^* \|_2^2} \cdot \wb^* \bigg\|_2^2\leq \bigg\|  \wb^{(t+1)} - \frac{\la \wb^*, \wb^{(t)}\ra}{\| \wb^* \|_2^2} \cdot \wb^* \bigg\|_2^2.
\end{align}
Plugging \eqref{eq:monotone_proof_eq2} and \eqref{eq:monotone_proof_eq3} into \eqref{eq:monotone_proof_eq1} gives
\begin{align}
    &\bigg\|  \wb^{(t+1)} -  \frac{\la \wb^*, \wb^{(t+1)}\ra}{\| \wb^* \|_2^2} \cdot \wb^* \bigg\|_2^2\nonumber \\
    &\qquad\qquad\leq \Bigg[1 - \frac{ \eta\cdot \gamma^{(t)} \cdot \lambda_{\min}}{32\lambda_{\max}^{3/2} \cdot \| \wb^{(0)} \|_2^2} \cdot \exp(- \gamma^{(t)} ) \Bigg] \cdot  \bigg\|  \wb^{(t)} -  \frac{\la \wb^*, \wb^{(t)}\ra}{\| \wb^* \|_2^2} \cdot \wb^* \bigg\|_2^2 \label{eq:distance_superlinear_eq1} 
\end{align}
for all $t \in [t_0, t_1]$. This implies that 
\begin{align*}
    \bigg\|  \wb^{(t+1)} -  \frac{\la \wb^*, \wb^{(t+1)}\ra}{\| \wb^* \|_2^2} \cdot \wb^* \bigg\|_2^2 \leq \bigg\|  \wb^{(t)} -  \frac{\la \wb^*, \wb^{(t)}\ra}{\| \wb^* \|_2^2} \cdot \wb^* \bigg\|_2^2
\end{align*}
for all $t \in [t_0, t_1]$, which completes the proof of induction hypothesis \ref{induction_1} at iteration $t_1 + 1$.

\noindent\textbf{Proof of induction hypothesis \ref{induction_2} at iteration $t_1 + 1$.} By Lemma~\ref{lemma:norm_upperbound} and \eqref{eq:l_derivative_bounds}, for any $t=t_0,\ldots,t_1$, we have
\begin{align*}
    \|\wb^{(t+1)}\|_2^2
    &\leq \| \wb^{(t)}\|_2^2 + \eta^2  G\cdot \max\{\gamma^{(t)2}, \gamma^{(t)4}\}\cdot \exp(- 2\gamma^{(t)} + 3/2)\cdot  \bigg\|  \wb^{(t)} -  \frac{\la \wb^*, \wb^{(t)}\ra}{\| \wb^* \|_2^2} \cdot \wb^* \bigg\|_2^2\\
    &\leq \| \wb^{(t)}\|_2^2 + 400 \eta^2  G\cdot \exp(- 1.5\gamma^{(t)})\cdot \bigg\|  \wb^{(t)} -  \frac{\la \wb^*, \wb^{(t)}\ra}{\| \wb^* \|_2^2} \cdot \wb^* \bigg\|_2^2\\
    &\leq \| \wb^{(t)}\|_2^2 + 400 \eta^2  G\cdot \exp(- 1.5\gamma^{(t)})\cdot  \bigg\|  \wb^{(t_0)} -  \frac{\la \wb^*, \wb^{(t_0)}\ra}{\| \wb^* \|_2^2} \cdot \wb^* \bigg\|_2^2\\
    &= \| \wb^{(t)}\|_2^2 + \frac{ 400 \eta^2  G}{\exp( 1.5\gamma^{(t)})}\cdot  \bigg\| \frac{\wb^{(t_0)}}{\| \wb^{(t_0)} \|_2}  -  \bigg\la \frac{\wb^*}{\| \wb^* \|_2}, \frac{\wb^{(t_0)}}{\| \wb^{(t_0)} \|_2}\bigg\ra \cdot \frac{\wb^*}{\| \wb^* \|_2} \bigg\|_2^2 \cdot \| \wb^{(t_0)} \|_2^2\\
    &= \| \wb^{(t)}\|_2^2 + \frac{ 400 \eta^2  G}{\exp( 1.5\gamma^{(t)})}\cdot   \bigg( \bigg\| \frac{\wb^{(t_0)}}{\| \wb^{(t_0)} \|_2} \bigg\|_2^2 - \bigg\la \frac{\wb^*}{\| \wb^* \|_2}, \frac{\wb^{(t_0)}}{\| \wb^{(t_0)} \|_2}\bigg\ra^2 \bigg) \cdot \| \wb^{(t_0)} \|_2^2\\
    &\leq \| \wb^{(t)}\|_2^2 + \frac{ 400 \eta^2  G}{\exp( 1.5\gamma^{(t)})}\cdot \| \wb^{(t_0)} \|_2^2\\
    &\leq \| \wb^{(t)}\|_2^2 + \frac{ 800 \eta^2  G}{\exp( 1.5\gamma^{(t)})}\cdot \| \wb^{(0)} \|_2^2,
\end{align*}
where  $G =  64  \lambda_{\min}^{-3}\cdot \max_i \|\xb_{i}\|_2^6 \cdot \| \wb^{(0)} \|_2^{-4} $ as defined in Lemma~\ref{lemma:norm_upperbound}, and the second inequality follows by the fact that $0 < \max\{z^2, z^4\}\cdot  \exp(-0.5z) < 80$ for all $z >0$, and the third inequality follows by induction hypothesis \ref{induction_1}. 
Therefore, by induction hypothesis \ref{induction_3}, we have
\begin{align}
    \|\wb^{(t_1+1)}\|_2^2 &\leq  \| \wb^{(0)} \|_2^2 + 800 \eta^2  G\cdot \| \wb^{(0)} \|_2^2 \cdot \sum_{t= t_0}^{t_1} \frac{1}{\exp( 1.5\gamma^{(t)})} \nonumber\\
    &\leq  \| \wb^{(0)} \|_2^2 + 800 \eta^2  G\cdot \| \wb^{(0)} \|_2^2 \cdot \sum_{t= t_0}^{t_1} \frac{1}{\exp\{ 1.5\log[ (\eta / 8)  \cdot (t - t_0) +\exp(\gamma^{(t_0)}) ] \}} \nonumber\\
    &=  \| \wb^{(0)} \|_2^2 +800 \eta^2  G\cdot \| \wb^{(0)} \|_2^2 \cdot \sum_{t= t_0}^{t_1} \frac{1}{[(\eta / 8)  \cdot (t - t_0) +\exp(\gamma^{(t_0)}) ]^{1.5}}.\label{eq:norm_induction_proof_eq1}
\end{align}
Moreover, we have
\begin{align*}
    \sum_{t= t_0}^{t_1} \frac{1}{[ (\eta / 8)  \cdot (t - t_0) +\exp(\gamma^{(t_0)}) ]^{1.5}} &= \sum_{t= 0}^{t_1 - t_0} \frac{1}{[ (\eta / 8)  \cdot t +\exp(\gamma^{(t_0)}) ]^{1.5}} \\
    &\leq \sum_{t= 0}^{\infty} \frac{1}{[ (\eta / 8)  \cdot t +\exp(\gamma^{(t_0)}) ]^{1.5}}\\
    &= \frac{1}{\exp(1.5 \gamma^{(t_0)})} + \sum_{t= 1}^{\infty} \frac{1}{[ (\eta / 8)  \cdot t +\exp(\gamma^{(t_0)}) ]^{1.5}}\\
    &\leq \frac{1}{\exp(1.5 \gamma^{(t_0)})} + \int_1^{\infty} \frac{1}{[ (\eta / 8)  \cdot t +\exp(\gamma^{(t_0)}) ]^{1.5}} \mathrm{d} t\\
    &= \frac{1}{\exp(1.5 \gamma^{(t_0)})} + \frac{2}{  (\eta / 8) \cdot \exp( \gamma^{(t_0)} / 2)}\\
    &\leq 1 + 16\eta^{-1}.
\end{align*}
Plugging the bound above into \eqref{eq:norm_induction_proof_eq1} gives
\begin{align*}
     |\wb^{(t_1+1)}\|_2^2 &\leq \| \wb^{(0)} \|_2^2 + 800 \eta^2  G  \cdot \| \wb^{(0)} \|_2^2 \cdot (1 + 16 \eta^{-1})\\
     &\leq \| \wb^{(0)} \|_2^2 + 16000\eta  G\cdot  \| \wb^{(0)} \|_2^2\\
     &\leq ( 1 + \epsilon^2 )\cdot \| \wb^{(0)} \|_2^2,
\end{align*}
where the second inequality follows by the definition of $H_1$ and the assumption that $\eta \leq 1$, and the last inequality follows by the assumption that $\eta \leq G^{-1}  \cdot \epsilon^2 / 16000$. Therefore we have
\begin{align*}
    |\wb^{(t_1+1)}\|_2 \leq \sqrt{1 + \epsilon^2}\cdot \| \wb^{(0)} \|_2 \leq ( 1 + \epsilon)\cdot \| \wb^{(0)} \|_2,
\end{align*}
which finishes the proof of induction hypothesis \ref{induction_2} at iteration $t_1 + 1$. 

\noindent\textbf{Proof of induction hypothesis \ref{induction_3} at iteration $t_1 + 1$.} The gradient descent update rule for $\gamma^{(t)}$ gives
\begin{align*}
    \gamma^{(t+1)} &= \gamma^{(t)} - \eta\cdot \frac{1}{n}\sum_{i=1}^{n} \ell'( y_i\cdot f(\wb^{(t)},\gamma^{(t)},\xb_i ) ) \cdot \frac{\la \wb^{(t)}, \zb_i \ra}{\| \wb^{(t)} \|_{\bSigma} }\\
    &= \gamma^{(t)} - \eta\cdot \frac{1}{n}\sum_{i=1}^{n} \ell'( y_i\cdot f(\wb^{(t)},\gamma^{(t)},\xb_i ) ) \cdot \frac{\la \wb^{(t)}, \zb_i \ra}{\sqrt{\frac{1}{n}\sum_{j=1}^n \la \wb^{(t)}, \zb_j \ra^2} }.
\end{align*}
By \eqref{eq:innerproduct_stable} and \eqref{eq:l_derivative_bounds}, we then have
\begin{align}
    \gamma^{(t+1)} &\leq \gamma^{(t)} - \eta\cdot \frac{1}{n}\sum_{i=1}^{n} \ell'( y_i\cdot f(\wb^{(t)},\gamma^{(t)},\xb_i ) ) \cdot \frac{\alpha + \alpha / 4}{\sqrt{\frac{1}{n}\sum_{j=1}^n (\alpha - \alpha / 4)^2} }\nonumber\\
     &\leq \gamma^{(t)} + \eta\cdot  \exp(- \gamma^{(t)} + 3/4) \cdot \frac{\alpha + \alpha / 4}{\sqrt{\frac{1}{n}\sum_{j=1}^n (\alpha - \alpha / 4)^2} }\nonumber\\
     &\leq \gamma^{(t)} + 4\eta\cdot \exp(- \gamma^{(t)}),\label{eq:gamma_update_upperbound}\\
    \gamma^{(t+1)} &\geq \gamma^{(t)} - \eta\cdot \frac{1}{n}\sum_{i=1}^{n} \ell'( y_i\cdot f(\wb^{(t)},\gamma^{(t)},\xb_i ) ) \cdot \frac{\alpha - \alpha / 4}{\sqrt{\frac{1}{n}\sum_{j=1}^n (\alpha + \alpha / 4)^2} }\nonumber\\
    &\geq \gamma^{(t)} + \eta\cdot \exp(- \gamma^{(t)} - 3/4 ) / 2 \cdot \frac{\alpha - \alpha / 4}{\sqrt{\frac{1}{n}\sum_{j=1}^n (\alpha + \alpha / 4)^2} }\nonumber\\
    &\geq \gamma^{(t)} + \frac{\eta}{8}\cdot \exp(- \gamma^{(t)}) \label{eq:gamma_update_lowerbound}
\end{align}
for all $t=t_0,\ldots,t_1$. The comparison theorem for discrete dynamical systems then gives  $\gamma^{(t_1 + 1)} \leq \overline{\gamma}^{(t_1 + 1)}$, where $\overline{\gamma}^{(t)}$ is given by the iterative formula
\begin{align*}
    \overline\gamma^{(t+1)} =\overline\gamma^{(t)} + 4\eta\cdot \exp(- \overline\gamma^{(t)}),\quad \overline\gamma^{(0)} = \gamma^{(0)}.
\end{align*}
Applying Lemma~\ref{lemma:gamma_comparison} then gives 
\begin{align*}
    \gamma^{(t_1+1)} \leq \overline\gamma^{(t_1+1)} \leq  4\eta \cdot \exp(-\gamma^{(t_0)}) + \log[ 4\eta  \cdot (t_1 + 1 - t_0) +\exp(\gamma^{(t_0)}) ].
\end{align*}
By the assumption that $\eta \leq 1/8\leq \log(2) / 4$,
 we have
\begin{align*}
    \gamma^{(t_1+1)} \leq  \log[ 8\eta\cdot (t_1 + 1 - t_0) +2\exp(\gamma^{(t_0)}) ].
\end{align*}
Similarly, by Lemma~\ref{lemma:gamma_comparison} and \eqref{eq:gamma_update_lowerbound}, we also have
\begin{align*}
    \gamma^{(t_1+1)} \geq  \log[ (\eta / 8)\cdot (t_1 + 1 - t_0) +\exp(\gamma^{(t_0)}) ].
\end{align*}
This finishes the proof of induction hypothesis \ref{induction_3} at iteration $t_1 + 1$.


\noindent\textbf{Proof of induction hypothesis \ref{induction_4} at iteration $t_1 + 1$.} By \eqref{eq:distance_superlinear_eq1}, we have
\begin{align*}
    &\bigg\|  \wb^{(t_1+1)} -  \frac{\la \wb^*, \wb^{(t_1+1)}\ra}{\| \wb^* \|_2^2} \cdot \wb^* \bigg\|_2^2 \\
    &\qquad\leq \Bigg[1 - \frac{ \eta\cdot \gamma^{(t_1)} \cdot \lambda_{\min}}{32\lambda_{\max}^{3/2} \cdot \| \wb^{(0)} \|_2^2} \cdot \exp(- \gamma^{(t_1)} ) \Bigg] \cdot  \bigg\|  \wb^{(t_1)} -  \frac{\la \wb^*, \wb^{(t_1)}\ra}{\| \wb^* \|_2^2} \cdot \wb^* \bigg\|_2^2 \\
    &\qquad\leq \prod_{\tau = t_0}^{t_1}\Bigg[1 - \frac{ \eta\cdot \gamma^{(\tau)} \cdot \lambda_{\min}}{32\lambda_{\max}^{3/2} \cdot \| \wb^{(0)} \|_2^2} \cdot \exp(- \gamma^{(\tau)} ) \Bigg] \cdot  \bigg\|  \wb^{(t_0)} -  \frac{\la \wb^*, \wb^{(t_0)}\ra}{\| \wb^* \|_2^2} \cdot \wb^* \bigg\|_2^2\\
    &\qquad\leq \prod_{\tau = t_0}^{t_1}\Bigg[1 - \frac{ \eta\cdot \gamma^{(\tau)} \cdot \lambda_{\min}}{32\lambda_{\max}^{3/2} \cdot \| \wb^{(0)} \|_2^2} \cdot \exp(- \gamma^{(\tau)} ) \Bigg] \cdot  \epsilon^2\cdot \|\wb^{(0)} \|_2^2
\end{align*}
Then by the fact that $1 - x \leq \exp(-x)$ for all $x \in \RR$, we have
\begin{align}
    \bigg\|  \wb^{(t_1+1)} -  \frac{\la \wb^*, \wb^{(t_1+1)}\ra}{\| \wb^* \|_2^2} \cdot \wb^* \bigg\|_2^2 
    &\leq \prod_{\tau = t_0}^{t_1}\exp\Bigg[ - \frac{ \eta\cdot \gamma^{(\tau)} \cdot \lambda_{\min}}{64\lambda_{\max}^{3/2} \cdot \| \wb^{(0)} \|_2^2} \cdot \exp(- \gamma^{(\tau)} ) \Bigg] \cdot \epsilon \|\wb^{(0)} \|_2 \nonumber\\
    & = \exp\Bigg[ - \sum_{\tau = t_0}^{t_1} \frac{ \eta\cdot \gamma^{(\tau)} \cdot \lambda_{\min}}{64\lambda_{\max}^{3/2} \cdot \| \wb^{(0)} \|_2^2} \cdot \exp(- \gamma^{(\tau)}) \Bigg] \cdot \epsilon \|\wb^{(0)} \|_2.\label{eq:l2distance_convergence_proof_intermediate}
\end{align}
We then study the term $\gamma^{(\tau)}\cdot \exp(-\gamma^{(\tau)})$. 
Note that by induction hypothesis \ref{induction_3}, for all $\tau = t_0,\ldots,t_1$, we have
\begin{align*}
    &\gamma^{(\tau)} \geq \log[ (\eta / 8)  \cdot (\tau - t_0) +\exp(\gamma^{(t_0)}) ] \geq \gamma^{(t_0)} \geq 1/2, \\
    & \gamma^{(\tau)} \leq \log[ 8\eta\cdot (t - t_0) +2\exp(\gamma^{(t_0)}) ] \leq \log[ 8\eta\cdot (t - t_0) +2\exp(1.5) ] \leq \log[ 8 \eta\cdot (\tau - t_0) + 9 ].
\end{align*}
Over the interval $[1/2, \log[ H_0 \eta\cdot (\tau - t_0) + 9 ] ]$, the function $g(z) = z\cdot \exp(-z)$ is strictly increasing for $1/2\leq z \leq 1$ and strictly  decreasing for $ 1 \leq z \leq \log[ H_0 \eta\cdot (\tau - t_0) + 9 ]$. Note that $\log[ H_0 \eta\cdot (\tau - t_0) + 9 ] > \log(9) > 2$, and $g\{ \log[ H_0 \eta\cdot (\tau - t_0) + 9 ] \} < g(2) < g(1/2)$. Therefore we have
\begin{align*}
    \gamma^{(\tau)}\cdot \exp(-\gamma^{(\tau)})&\geq \min_{1/2 \leq z \leq \log[8 \eta\cdot (\tau - t_0) + 9 ]  } g(z)\\ 
    &\geq \log[8 \eta\cdot (\tau - t_0) + 9 ]\cdot \exp\{- \log[ 8 \eta\cdot (\tau - t_0) + 9 ]\}\\
    &= \frac{\log[ 8 \eta\cdot (\tau - t_0) + 9 ] }{8 \eta\cdot (\tau - t_0) + 9}.
\end{align*}
Plugging the bound above into \eqref{eq:l2distance_convergence_proof_intermediate} gives
\begin{align*}
    &\bigg\|  \wb^{(t_1+1)} -  \frac{\la \wb^*, \wb^{(t_1+1)}\ra}{\| \wb^* \|_2^2} \cdot \wb^* \bigg\|_2^2 \\
    &\quad \leq \exp\Bigg[ - \frac{ \eta\cdot  \lambda_{\min} }{128\lambda_{\max}^{3/2} \cdot \| \wb^{(0)} \|_2^2} \cdot  \sum_{\tau = t_0}^{t_1} \frac{\log[ 8 \eta\cdot (\tau - t_0) + 9 ] }{8 \eta\cdot (\tau - t_0) + 9} \Bigg] \cdot \epsilon \cdot \|\wb^{(0)} \|_2\\
    &\quad \leq \exp\Bigg[ -\frac{\eta\cdot \lambda_{\min} }{128\lambda_{\max}^{3/2} \cdot \| \wb^{(0)} \|_2^2} \cdot \int_{0}^{t_1+1 - t_0}  \frac{\log( 8 \eta\cdot \tilde\tau + 9 ) }{ 8 \eta\cdot \tilde\tau + 9} \mathrm{d} \tilde\tau\Bigg]\cdot \epsilon\cdot \|\wb^{(0)} \|_2,
\end{align*}
where the second inequality follows by $8 \eta\cdot \tilde\tau + 9 > e$, 
and $\log(z)/z$ is monotonically decreasing for $z > e$. Further calculating the integral, we obtain
\begin{align}
     &\bigg\|  \wb^{(t_1+1)} -  \frac{\la \wb^*, \wb^{(t_1+1)}\ra}{\| \wb^* \|_2^2} \cdot \wb^* \bigg\|_2\nonumber \\
     &\qquad \leq \exp\Bigg[ -\frac{ \lambda_{\min}  }{1024\lambda_{\max}^{3/2} \cdot \| \wb^{(0)} \|_2^2} \cdot \log^2(Q)\Big|_{Q = 9 }^{ 8 \eta\cdot (t_1 + 1 - t_0)+ 9 }\Bigg]\cdot\epsilon\cdot \|\wb^{(0)} \|_2 \nonumber\\
     &\qquad = \exp\Bigg[ -\frac{ \lambda_{\min}  }{1024\lambda_{\max}^{3/2} \cdot \| \wb^{(0)} \|_2^2} \cdot [\log^2(8 \eta\cdot (t_1 + 1 - t_0)+ 9) - \log^2(9)]\Bigg]\cdot\epsilon\cdot \|\wb^{(0)} \|_2 \nonumber\\
     &\qquad \leq \exp\Bigg[ -\frac{ \lambda_{\min}  }{1024\lambda_{\max}^{3/2} \cdot \| \wb^{(0)} \|_2^2} \cdot [\log(8 \eta\cdot (t_1 + 1 - t_0)+ 9) - \log(9)]^2\Bigg]\cdot\epsilon\cdot \|\wb^{(0)} \|_2 \nonumber\\
     &\qquad = \exp\Bigg[ -\frac{ \lambda_{\min}  }{1024\lambda_{\max}^{3/2} \cdot \| \wb^{(0)} \|_2^2} \cdot \log^2( (8/9) \eta\cdot (t_1 + 1 - t_0)+ 1)\Bigg]\cdot\epsilon\cdot \|\wb^{(0)} \|_2, \label{eq:induction_l2distance} 
\end{align}
where the second inequality follows by the fact that $a^2 - b^2 \geq (a-b)^2$ for $a>b>0$. This finishes the proof of induction hypothesis \ref{induction_4} at iteration $t_1 + 1$. 

\noindent\textbf{Proof of induction hypothesis \ref{induction_5} at iteration $t_1 + 1$.} By \eqref{eq:induction_l2distance}, we have
\begin{align*}
    \bigg\|  \wb^{(t_1+1)} -  \frac{\la \wb^*, \wb^{(t_1+1)}\ra}{\| \wb^* \|_2^2} \cdot \wb^* \bigg\|_2\leq \cE^{(t_1+1)} \cdot \|\wb^{(0)} \|_2,
\end{align*}
where 
\begin{align}
    \cE^{(t_1+1)}&:= \exp\Bigg[ -\frac{ \lambda_{\min}  }{1024\lambda_{\max}^{3/2} \cdot \| \wb^{(0)} \|_2^2} \cdot \log^2( (8/9) \eta\cdot (t_1 + 1 - t_0)+ 1)\Bigg]\cdot\epsilon\label{eq:induction_productbound_proof_eq0}
\end{align}
Taking the square of both sides and dividing by $\| \wb^{(t_1+1)} \|_2^{2}$ gives
\begin{align*}
   1 - \la \wb^*/ \| \wb^* \|_2 , \wb^{(t_1+1 )}/  \|\wb^{(t_1+1)} \|_2 \ra^2 \leq (\cE^{(t_1+1)})^2\cdot \|\wb^{(0)} \|_2^2 /  \|\wb^{(t_1+1)} \|_2^2 \leq (\cE^{(t_1+1)})^2 ,
\end{align*}
where the last inequality follows by Lemma~\ref{lemma:norm_upperbound} on the monotonicity of $\|\wb^{(t)} \|_2$. 
Therefore, we have
\begin{align}\label{eq:induction_productbound_proof_eq1}
    (1 - \cE^{(t_1+1)}) \leq \sqrt{1 - (\cE^{(t_1+1)})^2 } \leq \la \wb^*/ \| \wb^* \|_2 , \wb^{(t_1+1 )}  /  \|\wb^{(t_1+1)} \|_2 \ra \leq 1.
\end{align}
Moreover, by \eqref{eq:induction_l2distance}, we have
\begin{align*}
    \bigg| \la \wb^{(t_1+1)}, \zb_i\ra -  \frac{\la \wb^*, \wb^{(t_1+1)}\ra}{\| \wb^* \|_2^2} \cdot \la \wb^* , \zb_i\ra \bigg| \leq \max_i \| \zb_i \|_2\cdot \cE^{(t_1+1)} \cdot \|\wb^{(0)} \|_2
\end{align*}
for all $i\in[n]$. Dividing by $\| \wb^{(t_1+1)} \|_2$ on both sides above gives
\begin{align*}
    \bigg| \bigg\la \frac{\wb^{(t_1+1)}}{\| \wb^{(t_1+1)} \|_2} , \zb_i\bigg\ra -  \bigg\la \frac{\wb^*}{\| \wb^* \|_2}, \frac{\wb^{(t_1+1)}}{\| \wb^{(t_1+1)} \|_2}\bigg\ra \cdot \bigg\la \frac{\wb^*}{\| \wb^* \|_2} , \zb_i\bigg\ra \bigg| \leq \max_i \| \zb_i \|_2\cdot \cE^{(t_1+1)}.
\end{align*}
Recall again that $\wb^*$ is chosen such that $\la \wb^*, \zb_i\ra = 1$ for all $i\in[n]$. Therefore, rearranging terms and applying \eqref{eq:induction_productbound_proof_eq1} then gives
\begin{align*}
     \bigg\la \frac{\wb^{(t_1+1)}}{\| \wb^{(t_1+1)} \|_2} , \zb_i\bigg\ra  &\geq \bigg\la \frac{\wb^*}{\| \wb^* \|_2}, \frac{\wb^{(t_1+1)}}{\| \wb^{(t_1+1)} \|_2}\bigg\ra \cdot \bigg\la \frac{\wb^*}{\| \wb^* \|_2} , \zb_i\bigg\ra - \max_i \| \zb_i \|_2\cdot \cE^{(t_1+1)}\\
    &\geq \bigg\la \frac{\wb^*}{\| \wb^* \|_2} , \zb_i\bigg\ra - 2\max_i \| \zb_i \|_2\cdot \cE^{(t_1+1)}\\
    &= \| \wb^* \|_2^{-1} -2\max_i \| \zb_i \|_2\cdot \cE^{(t_1+1)}, \\
    \bigg\la \frac{\wb^{(t_1+1)}}{\| \wb^{(t_1+1)} \|_2} , \zb_i\bigg\ra  &\leq \bigg\la \frac{\wb^*}{\| \wb^* \|_2}, \frac{\wb^{(t_1+1)}}{\| \wb^{(t_1+1)} \|_2}\bigg\ra \cdot \bigg\la \frac{\wb^*}{\| \wb^* \|_2} , \zb_i\bigg\ra + \max_i \| \zb_i \|_2\cdot \cE^{(t_1+1)}\\
    &\leq \bigg\la \frac{\wb^*}{\| \wb^* \|_2} , \zb_i\bigg\ra + \max_i \| \zb_i \|_2\cdot \cE^{(t_1+1)}\\
    &= \| \wb^* \|_2^{-1} + \max_i \| \zb_i \|_2\cdot \cE^{(t_1+1)}.
\end{align*}
Therefore, we have
\begin{align}\label{eq:induction_productbound_proof_eq2}
    \bigg| \bigg\la \frac{\wb^{(t_1+1)}}{\| \wb^{(t_1+1)} \|_2} , \zb_i\bigg\ra - \alpha \bigg| =\bigg| \bigg\la \frac{\wb^{(t_1+1)}}{\| \wb^{(t_1+1)} \|_2} , \zb_i\bigg\ra - \| \wb^* \|_2^{-1} \bigg| \leq 2\max_i \| \zb_i \|_2\cdot  \cE^{(t_1+1)}.
\end{align}
Moreover, by the induction hypothesis \ref{induction_3} at iteration $t_1 + 1$ (which has been proved), we have
\begin{align}
    \gamma^{(t_1+1)} &\leq  \log[ 8\eta\cdot (t_1 + 1 - t_0) +2\exp(\gamma^{(t_0)}) ] \nonumber\\
    &= \log(9) + \log[ (8/9) \eta\cdot (t_1 + 1 - t_0) +(2/9)\cdot\exp(\gamma^{(t_0)}) ]\nonumber\\
    &\leq \log(9) + \log[ (8/9) \eta\cdot (t_1 + 1 - t_0) +(2/9)\cdot\exp(1.5) ]\nonumber\\
    &\leq \log(9) + \log[ (8/9) \eta\cdot (t_1 + 1 - t_0) + 1 ] .\label{eq:induction_productbound_proof_eq3}
\end{align}
Therefore by \eqref{eq:induction_productbound_proof_eq0}, \eqref{eq:induction_productbound_proof_eq2} and \eqref{eq:induction_productbound_proof_eq3}, we have
\begin{align*}
    &\bigg| \bigg\la \frac{\wb^{(t_1+1)}}{\| \wb^{(t_1+1)} \|_2} , \zb_i\bigg\ra - \alpha \bigg|\cdot \gamma^{(t_1+1)}\\
    &\qquad \qquad \leq \frac{ 2\max_i \| \zb_i \|_2 \cdot\epsilon  \cdot \{ \log(9) + \log[ (8/9) \eta\cdot (t_1 + 1 - t_0) + 1 ] \} }{ \exp\Big[\frac{ \lambda_{\min}  }{1024\lambda_{\max}^{3/2} \cdot \| \wb^{(0)} \|_2^2} \cdot \log^2( (8/9) \eta\cdot (t_1 + 1 - t_0)+ 1)\Big]}  \\
    &\qquad \qquad \leq 2\log(9) \cdot \max_i \| \zb_i \|_2 \cdot \epsilon + \frac{32\sqrt{2} \cdot \exp(1/2) \cdot \lambda_{\max}^{3/4}}{ \lambda_{\min}^{1/2}} \cdot \max_i \| \zb_i \|_2 \cdot \epsilon \cdot \| \wb^{(0)} \|_2\\
    &\qquad \qquad \leq 6 \max_i \| \zb_i \|_2  \cdot \epsilon + \frac{80 \lambda_{\max}^{3/4}}{ \lambda_{\min}^{1/2}}\cdot \max_i \| \zb_i \|_2  \cdot \epsilon \cdot \| \wb^{(0)} \|_2\\
    &\qquad \qquad \leq \| \wb^* \|_2^{-1} / 4.
\end{align*}
where the second inequality follows by the fact  that $\exp(-Az^2)\cdot z \leq \exp(-1/2)\cdot (2A)^{-1/2} $ for all $A,z > 0$, and the last inequality follows by the definition of $\epsilon$ which ensures that $\epsilon \leq (16\max_i \| \zb_i \|_2 )^{-1}\cdot \| \wb^* \|_2^{-1}\cdot \min\big\{1/3,  \| \wb^{(0)} \|_2^{-1} \cdot \lambda_{\min}^{1/2} / (40 \lambda_{\max}^{3/4}) \big\}$. This finishes the proof of induction hypothesis \ref{induction_5} at iteration $t_1 + 1$, and thus the first five results in Lemma~\ref{lemma:linear_asymp} hold for all $t\geq t_0$. 

\noindent\textbf{Proof of the last result in Lemma~\ref{lemma:linear_asymp}.} As we have shown by induction, the first five results in Lemma~\ref{lemma:linear_asymp} hold for all $t\geq t_0$. Therefore \eqref{eq:yf_upperbound} and \eqref{eq:yf_upperbound} also hold for all $t\geq t_0$. Therefore by the monotonicity of $\ell(\cdot)$ and the third result in Lemma~\ref{lemma:linear_asymp}, we have
\begin{align*}
    \ell( y_i\cdot f(\wb^{(t)},\gamma^{(t)},\xb_i ) ) &\leq \log( 1 +\exp(-\gamma^{(t)} + 1/4))\\
    &\leq \exp\{ - \log[ (\eta / 8) \cdot (t - t_0) +\exp(\gamma^{(t_0)}) ] + 1/4 \}\\
    &= \frac{\exp(1/4)}{(\eta / 8) \cdot (t - t_0) +\exp(\gamma^{(t_0)})} \\
    &\leq \frac{12}{\eta\cdot (t - t_0) + 1}
\end{align*}
for all t $\geq t_0$, where the second inequality follows by $\log(1+ z) \geq z$ for all $z\in\RR$. 
Similarly, we also have
\begin{align*}
    \ell( y_i\cdot f(\wb^{(t)},\gamma^{(t)},\xb_i ) ) &\geq \log(1 + \exp(- \gamma^{(t)} - 3/4) )\\
    &\geq \exp\{- \log[ 8\eta\cdot (t - t_0) +2\exp(\gamma^{(t_0)}) ] - 3/4\} / 2\\
    &= \frac{\exp(-3/4)/2}{8\eta\cdot (t - t_0) +2\exp(\gamma^{(t_0)})}\\
    &\geq \frac{1}{40}\cdot \frac{1}{\eta\cdot (t - t_0 ) + 1}
\end{align*}
for all t $\geq t_0$, where the second inequality follows by the fact that $\log(1 + z) \geq z/2$ for all $z\in [0,1]$. This proves the first part of the result.

As for the second part of the result, by Lemma~\ref{lemma:key_identity}, for all $t\geq t_0$ we have
\begin{align}
\frac{1}{n^2}\sum_{i,i'=1}^n   (\la \wb^{(t)} , \zb_{i'}\ra - \la \wb^{(t_1+1)} , \zb_{i}\ra)^2
&= 2 \|  \wb^{(t)} - \la \wb^*, \wb^{(t)}\ra_{\bSigma} \cdot \wb^* \|_{\bSigma}^2\nonumber \\
& \leq 2\lambda_{\max} \cdot \bigg\|  \wb^{(t)} -  \frac{\la \wb^*, \wb^{(t)}\ra}{\| \wb^* \|_2^2} \cdot \wb^* \bigg\|_2^2.\label{eq:uniform_margin_induction_proof_eq2}
\end{align}
Now that the fourth result in Lemma~\ref{lemma:linear_asymp}, which has been proved to hold for all $t\geq t_0$, we have
\begin{align}\label{eq:uniform_margin_induction_proof_eq1}
    \bigg\|  \wb^{(t)} -  \frac{\la \wb^*, \wb^{(t)}\ra}{\| \wb^* \|_2^2} \cdot \wb^* \bigg\|_2^2 \leq  \epsilon \cdot \|\wb^{(0)} \|_2 \cdot \exp\Bigg[ -\frac{ \lambda_{\min} \cdot \log^2( (8/9) \eta\cdot (t - t_0)+ 1) }{1024\lambda_{\max}^{3/2} \cdot \| \wb^{(0)} \|_2^2} \Bigg]
\end{align}
for all $t\geq t_0$. 
Combining \eqref{eq:uniform_margin_induction_proof_eq1} and \eqref{eq:uniform_margin_induction_proof_eq2} then gives
\begin{align*}
    &\frac{1}{n^2}\sum_{i,i'=1}^n   (\la \wb^{(t)} , \zb_{i'}\ra - \la \wb^{(t)} , \zb_{i}\ra)^2 \\
    &\qquad \leq \lambda_{\max}\cdot \epsilon^2\cdot \|\wb^{(0)} \|_2^2 \cdot \exp\Bigg[ -\frac{ \lambda_{\min}  }{512\lambda_{\max}^{3/2} \cdot \| \wb^{(0)} \|_2^2} \cdot \log^2( (8/9) \eta\cdot (t - t_0)+ 1)\Bigg].
\end{align*}
Then by the definitions of $D(\wb)$ and $\epsilon$, we have 
\begin{align*}
    &D(\wb^{(t)}) \\
    &\leq \frac{1}{\| \wb^{(t)} \|_{\bSigma}^2} \cdot \frac{1}{n^2}\sum_{i,i'=1}^n (\la \wb^{(t)} , \zb_{i'}\ra - \la \wb^{(t)} , \zb_{i}\ra)^2 \\
    &\leq \frac{1}{\| \wb^{(t)} \|_{\bSigma}^2} \cdot \frac{\lambda_{\max}\cdot  \|\wb^{(0)} \|_2^2}{2304\max_i \| \xb_i \|_2^2 \cdot \| \wb^* \|_2^2} \cdot \exp\Bigg[ -\frac{ \lambda_{\min}  }{512\lambda_{\max}^{3/2} \cdot \| \wb^{(0)} \|_2^2} \cdot \log^2( (8/9) \eta\cdot (t - t_0)+ 1)\Bigg].
\end{align*}
By the definition of $\wb^*$, clearly we have $\max_i \| \xb_i \|_2^2 \cdot \| \wb^* \|_2^2 \geq \min_i \la y_i\cdot \xb_i , \wb^* \ra^2 = 1$. Therefore
\begin{align*}
     D(\wb^{(t)}) &\leq \frac{\lambda_{\max}\cdot  \|\wb^{(0)} \|_2^2}{2304 \| \wb^{(t)} \|_{\bSigma}^2} \cdot \exp\Bigg[ -\frac{ \lambda_{\min}  }{512\lambda_{\max}^{3/2} \cdot \| \wb^{(0)} \|_2^2} \cdot \log^2( (8/9) \eta\cdot (t - t_0)+ 1)\Bigg]\\
     &\leq \frac{\lambda_{\max}\cdot  \|\wb^{(0)} \|_2^2}{2304 \lambda_{\min} \| \wb^{(t)} \|_{2}^2} \cdot \exp\Bigg[ -\frac{ \lambda_{\min}  }{512\lambda_{\max}^{3/2} \cdot \| \wb^{(0)} \|_2^2} \cdot \log^2( (8/9) \eta\cdot (t - t_0)+ 1)\Bigg]\\
    &\leq \frac{\lambda_{\max}}{2304 \lambda_{\min} } \cdot \exp\Bigg[ -\frac{ \lambda_{\min}  }{512\lambda_{\max}^{3/2} \cdot \| \wb^{(0)} \|_2^2} \cdot \log^2( (8/9) \eta\cdot (t - t_0)+ 1)\Bigg]
\end{align*}
for all $t\geq t_0$, where the last inequality follows by Lemma~\ref{lemma:norm_upperbound}. Therefore the last result in Lemma~\ref{lemma:linear_asymp} holds, and the proof of Lemma~\ref{lemma:linear_asymp} is thus complete.
\end{proof}

\section{Proofs for Batch Normalization in Two-Layer Linear CNNs}

\subsection{Proof of Theorem~\ref{thm:CNNBN}}
As most part of the proof of Theorem~\ref{thm:CNNBN} are the same as the proof of Theorem~\ref{thm:linearBN}, here we only highlight the differences between these two proofs. Specifically, we give the proofs of the counterparts of Lemmas~\ref{lemma:gradient_inner_linear_model}, \ref{lemma:innerproduct_lowerbound}, \ref{lemma:norm_upperbound}. The rest of the proofs are essentially the same based on the multi-patch versions of these three results.

By Assumption~\ref{assump:uniformly_separable_CNN}, we can define $\wb^*$ as the minimum norm solution of the system:
\begin{align}\label{eq:def_w*_CNN}
    \wb^* := \argmin_{\wb} \| \wb \|_2^2, ~~\text{subject to }  \la \wb, y_i\cdot \xb_i^{(p)} \ra = 1, ~i \in [n],~p\in[P].
\end{align}

The following lemma is the counterpart of Lemma~\ref{lemma:gradient_inner_linear_model}.

\begin{lemma}\label{lemma:gradient_inner_CNN}
Under Assumption~\ref{assump:uniformly_separable}, for any $\wb\in \RR^d$, it holds that 
\begin{align*}
    \la -\nabla_{\wb} L(\wb,\gamma), \wb^* \ra &= \frac{\gamma}{2 n^2 P \| \wb \|_{\bSigma}^3}\sum_{i,i'=1}^n |\ell'_i|\cdot |\ell'_{i'}| \cdot (\la \wb , \zb_{i'}\ra - \la \wb , \zb_{i}\ra)\cdot ( |\ell'_{i'}|^{-1} \cdot \la \wb , \zb_{i'}\ra - |\ell'_{i}|^{-1} \cdot \la \wb , \zb_{i}\ra )\\
    &\quad + \frac{\gamma}{2 n^2 P \| \wb \|_{\bSigma}^3} \cdot \Bigg(\sum_{i=1}^n |\ell'_i| \Bigg) \cdot \sum_{i'=1}^n \sum_{p,p'=1}^P   \big(\la \wb , \zb_{i'}^{(p)}\ra - \la \wb , \zb_{i'}^{(p')}\ra \big)^2,
\end{align*}
where $ \ell'_i = \ell'[y_i\cdot f(\wb,\gamma,\xb_i)]$, $\zb_i^{(p)} = y_i\cdot \xb_i^{(p)}$, and $\zb_i = \sum_{p=1}^P \zb_i^{(p)}$ for $i\in[n]$, $p\in[P]$. 
\end{lemma}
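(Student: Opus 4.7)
The plan is to mimic the proof of Lemma~\ref{lemma:gradient_inner_linear_model} but carefully book-keep the extra patch index. First I would compute the gradient explicitly: by the chain rule, for each patch $p$,
\begin{align*}
\nabla_{\wb}\left[\frac{\la \wb, \xb_i^{(p)}\ra}{\|\wb\|_{\bSigma}}\right] = \frac{1}{\|\wb\|_{\bSigma}^3}\left(\|\wb\|_{\bSigma}^2\cdot \xb_i^{(p)} - \la\wb,\xb_i^{(p)}\ra\cdot \bSigma\wb\right).
\end{align*}
Summing over $p$ and multiplying by $y_i$, and introducing the shorthand $u_i^{(p)}=\la\wb,\zb_i^{(p)}\ra$ and $v_i=\la\wb,\zb_i\ra=\sum_p u_i^{(p)}$, this gives
\begin{align*}
\nabla_{\wb} L(\wb,\gamma) = \frac{\gamma}{n\|\wb\|_{\bSigma}^3}\sum_i \ell'_i \left[\|\wb\|_{\bSigma}^2\cdot \zb_i - v_i\cdot \bSigma\wb\right].
\end{align*}

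Next I would expand $\|\wb\|_{\bSigma}^2 = \frac{1}{nP}\sum_{j,q}(u_j^{(q)})^2$ and $\bSigma\wb = \frac{1}{nP}\sum_{j,q}u_j^{(q)}\zb_j^{(q)}$, and take the inner product with $\wb^*$. Using $\la\wb^*,\zb_j^{(q)}\ra=1$ and hence $\la\wb^*,\zb_i\ra=P$, this collapses to
\begin{align*}
\la \nabla_{\wb}L,\wb^*\ra = \frac{\gamma}{n^2 P\|\wb\|_{\bSigma}^3}\sum_i \ell'_i\Big[P\sum_{j,q}(u_j^{(q)})^2 - v_i\sum_j v_j\Big].
\end{align*}
The key algebraic step is the variance decomposition
\begin{align*}
P\sum_q (u_j^{(q)})^2 \;=\; v_j^2 + \tfrac{1}{2}\sum_{q,q'}(u_j^{(q)}-u_j^{(q')})^2,
\end{align*}
which cleanly separates the right-hand side into a ``between-data'' piece depending on $(v_i)$ and a ``within-data'' piece depending on patch differences.

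For the between-data piece, I would apply exactly the $i\leftrightarrow i'$ symmetrization trick used in the proof of Lemma~\ref{lemma:gradient_inner_linear_model}, with $v_i$ playing the role of $u_i$ and using the mean-value identity $|\ell'_i|^{-1}=1+\exp(F_i)$ after dividing by $P$; this yields the first displayed term in the lemma. For the within-data piece, the factor $\sum_i \ell'_i$ separates out because the inner summand $\sum_{q,q'}(u_j^{(q)}-u_j^{(q')})^2$ does not depend on $i$, producing the second displayed term with the $(\sum_i |\ell'_i|)$ prefactor after flipping signs via $|\ell'_i|=-\ell'_i$.

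The main obstacle I anticipate is purely bookkeeping: tracking the $1/(nP)$ factor correctly through both $\|\wb\|_{\bSigma}^2$ and $\bSigma\wb$, and recognizing that the additional $P$ factor produced by $\la\wb^*,\zb_i\ra=P$ cancels exactly against the $P$ in the variance decomposition. Once that is done, the linear-case symmetrization carries over verbatim and no new analytic ideas are required; the patch-difference term appears naturally as the ``residual'' produced by the within-datum variance decomposition, which is the technical novelty distinguishing the CNN case from the linear case.
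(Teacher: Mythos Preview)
Your proposal is correct and follows essentially the same approach as the paper: compute the gradient, take the inner product with $\wb^*$, apply the variance decomposition $P\sum_q (u_j^{(q)})^2 = v_j^2 + \tfrac{1}{2}\sum_{q,q'}(u_j^{(q)}-u_j^{(q')})^2$ to split into a between-data piece and a within-data piece, and symmetrize the between-data piece exactly as in Lemma~\ref{lemma:gradient_inner_linear_model}. One minor remark: the identity $|\ell'_i|^{-1}=1+\exp(F_i)$ is not needed for this lemma (it enters only in the subsequent lower-bound argument); the $i\leftrightarrow i'$ symmetrization alone already yields the first displayed term.
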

\begin{proof}[Proof of Lemma~\ref{lemma:gradient_inner_CNN}] By definition, we have 
\begin{align*}
    \nabla_{\wb} L(\wb,\gamma) = \frac{1}{n}\sum_{i=1}^n \ell'[y_i\cdot f(\wb,\gamma,\xb_i)] \cdot y_i\cdot \nabla_{\wb} f(\wb,\gamma,\xb_i).
\end{align*}
Then by the definition of $f(\wb,\gamma,\xb)$, we have the following calculation using chain rule: 
\begin{align*}
    \nabla_{\wb} L(\wb,\gamma) &=  \| \wb \|_{\bSigma}^{-1}\cdot \frac{1}{n}\sum_{i=1}^n \ell'[y_i\cdot f(\wb,\gamma,\xb_i)] \cdot y_i \cdot\sum_{p=1}^P \gamma \cdot \Big( \Ib - \| \wb \|_{\bSigma}^{-2}\cdot \bSigma \wb \wb^\top \Big)\xb_i^{(p)} \\
    & = \| \wb \|_{\bSigma}^{-3}\cdot \frac{\gamma}{n}\sum_{i=1}^n \sum_{p=1}^P \ell'_i \cdot y_i \cdot \Big( \| \wb \|_{\bSigma}^{2} -  \bSigma \wb \wb^\top \Big)\xb_i^{(p)}\nonumber \\
     & = \| \wb \|_{\bSigma}^{-3}\cdot \frac{\gamma}{n}\sum_{i=1}^n \sum_{p=1}^P  \ell'_i \cdot \Bigg( \frac{1}{nP } \sum_{i'=1}^n\sum_{p'=1}^{P}  \la \wb, \zb_{i'}^{(p')}\ra^2 - \frac{1}{n P}\sum_{i'=1}^n\sum_{p'=1}^{P} \la \wb , \zb_{i'}^{(p')}\ra \cdot \zb_{i'}^{(p')} \wb^\top \Bigg)\zb_i^{(p)}\\
     & = \| \wb \|_{\bSigma}^{-3}\cdot \frac{\gamma}{n^2 P }\sum_{i,i'=1}^n \sum_{p,p'=1}^P \Big[ \ell'_i \cdot \la \wb, \zb_{i'}^{(p')}\ra^2\cdot \zb_i^{(p)} - \ell'_i \cdot \la \wb , \zb_{i'}^{(p')}\ra \cdot  \la\wb,\zb_i^{(p)}\ra \cdot \zb_{i'}^{(p')} \Big],
\end{align*}
where we denote $\zb_i^{(p)} = y_i\cdot \xb_i^{(p)}$, $i\in[n]$, $p\in[P]$.  
By Assumption~\ref{assump:uniformly_separable_CNN}, taking inner product with $-\wb^*$ on both sides above then gives
\begin{align*}
    -\la \nabla_{\wb} L(\wb,\gamma), \wb^* \ra & = - \| \wb \|_{\bSigma}^{-3}\cdot \frac{\gamma}{n^2 P }\sum_{i,i'=1}^n \sum_{p,p'=1}^P \Big[ \ell'_i \cdot \la \wb, \zb_{i'}^{(p')}\ra^2 - \ell'_i \cdot \la \wb , \zb_{i'}^{(p')}\ra \cdot  \la\wb,\zb_i^{(p)}\ra \Big]\\
    & = \| \wb \|_{\bSigma}^{-3}\cdot \frac{\gamma}{n^2 P }\sum_{i,i'=1}^n \sum_{p,p'=1}^P \Big[ |\ell'_i| \cdot \la \wb, \zb_{i'}^{(p')}\ra^2 - |\ell'_i| \cdot \la \wb , \zb_{i'}^{(p')}\ra \cdot  \la\wb,\zb_i^{(p)}\ra \Big],
\end{align*}
where the second equality follows by the fact that $\ell'_i < 0$, $i\in[n]$. 
Further denote $u_i^{(p)} = \la \wb, \zb_{i}\ra$ for $i\in[n]$, $p\in [P]$. Then we have 
\begin{align*}
    -\la \nabla_{\wb} L(\wb,\gamma), \wb^* \ra
    & = \| \wb \|_{\bSigma}^{-3}\cdot \frac{\gamma}{n^2 P }\sum_{i,i'=1}^n \sum_{p,p'=1}^P \big[ |\ell'_i| \cdot u_{i'}^{(p')2} - |\ell'_i| \cdot u_{i'}^{(p')}u_{i}^{(p)} \big]\\
    & = \| \wb \|_{\bSigma}^{-3}\cdot \frac{\gamma}{n^2 P }\sum_{i,i'=1}^n \Bigg[ |\ell'_i| \cdot \Bigg(  \sum_{p,p'=1}^P u_{i'}^{(p')2} -  \sum_{p,p'=1}^P u_{i'}^{(p')}u_{i}^{(p)} \Bigg)\Bigg]\\
    & = \| \wb \|_{\bSigma}^{-3}\cdot \frac{\gamma}{n^2 P }\sum_{i,i'=1}^n \Bigg\{ |\ell'_i| \cdot \Bigg[  P\cdot \sum_{p'=1}^P (u_{i'}^{(p')})^2 -  \Bigg(\sum_{p=1}^P u_{i}^{(p)} \Bigg)\cdot \Bigg(\sum_{p'=1}^P u_{i'}^{(p')} \Bigg)\Bigg]\Bigg\}.
\end{align*}
Adding and subtracting a term 
$$\| \wb \|_{\bSigma}^{-3}\cdot \frac{\gamma}{n^2 P }\sum_{i,i'=1}^n  \Bigg[ |\ell'_i| \cdot \Bigg(\sum_{p'=1}^P u_{i'}^{(p')} \Bigg)^2  \Bigg]
$$
then gives 
\begin{align}
    -\la \nabla_{\wb} L(\wb,\gamma), \wb^* \ra
    & = \underbrace{\| \wb \|_{\bSigma}^{-3}\cdot \frac{\gamma}{n^2 P }\sum_{i,i'=1}^n \Bigg\{ |\ell'_i| \cdot \Bigg[ \Bigg(\sum_{p'=1}^P u_{i'}^{(p')} \Bigg)^2 - \Bigg(\sum_{p=1}^P u_{i}^{(p)} \Bigg)\cdot \Bigg(\sum_{p'=1}^P u_{i'}^{(p')} \Bigg)\Bigg]\Bigg\} }_{I_1}\nonumber\\
    &\quad +\underbrace{ \| \wb \|_{\bSigma}^{-3}\cdot \frac{\gamma}{n^2 P }\sum_{i,i'=1}^n \Bigg\{ |\ell'_i| \cdot \Bigg[  P\cdot \sum_{p'=1}^P (u_{i'}^{(p')})^2 -  \Bigg(\sum_{p'=1}^P u_{i'}^{(p')} \Bigg)^2\Bigg]\Bigg\}}_{I_2}.\label{eq:proof_gradient_inner_CNN_I1I2}
\end{align}
We then calculate the two terms $I_1$ and $I_2$ in \eqref{eq:proof_gradient_inner_CNN_I1I2} separately. The calculation for $I_1$ is the same as the derivation in the proof of Lemma~\ref{lemma:gradient_inner_linear_model}. To see this, let $u_i = \sum_{p=1}^P u_i^{(p)}$ for $i\in[n]$, $p\in [P]$. Then we have
\begin{align}\label{eq:proof_gradient_inner_CNN_eq1}
    I_1 = \| \wb \|_{\bSigma}^{-3}\cdot \frac{\gamma}{n^2 P }\sum_{i,i'=1}^n  \big( |\ell'_i| \cdot u_{i'}^2 - |\ell'_i| \cdot u_{i'}u_{i} \big).
\end{align}
Switching the index notations $i,i'$ in the above equation also gives
\begin{align}\label{eq:proof_gradient_inner_CNN_eq2}
    I_1 = \| \wb \|_{\bSigma}^{-3}\cdot \frac{\gamma}{n^2 P }\sum_{i,i'=1}^n  \big( |\ell'_{i'}| \cdot u_{i}^2 - |\ell'_{i'}| \cdot u_{i'}u_{i} \big).
\end{align}
We can add \eqref{eq:proof_gradient_inner_CNN_eq1} and \eqref{eq:proof_gradient_inner_CNN_eq2} together to obtain
\begin{align*}
    2I_1 &= \| \wb \|_{\bSigma}^{-3}\cdot \frac{\gamma}{n^2 P }\sum_{i,i'=1}^n  \big( |\ell'_i| \cdot u_{i'}^2 - |\ell'_i| \cdot u_{i'}u_{i} + |\ell'_{i'}| \cdot u_{i}^2 - |\ell'_{i'}| \cdot u_{i'}u_{i} \big) \\
    &= \| \wb \|_{\bSigma}^{-3}\cdot \frac{\gamma}{n^2 P }\sum_{i,i'=1}^n  (u_{i'} - u_{i})\cdot \big( |\ell'_i| \cdot u_{i'} - |\ell'_{i'}| \cdot u_{i} \big)\\ 
    &= \| \wb \|_{\bSigma}^{-3}\cdot \frac{\gamma}{n^2 P }\sum_{i,i'=1}^n |\ell'_i|\cdot |\ell'_{i'}|\cdot (u_{i'} - u_{i})\cdot \big( |\ell'_{i'}|^{-1} \cdot u_{i'} - |\ell'_{i}|^{-1} \cdot u_{i} \big).
\end{align*}
Therefore we have
\begin{align}\label{eq:proof_gradient_inner_CNN_I1calc}
    I_1 = \| \wb \|_{\bSigma}^{-3}\cdot \frac{\gamma}{2 n^2 P }\sum_{i,i'=1}^n |\ell'_i|\cdot |\ell'_{i'}|\cdot (u_{i'} - u_{i})\cdot \big( |\ell'_{i'}|^{-1} \cdot u_{i'} - |\ell'_{i}|^{-1} \cdot u_{i} \big).
\end{align}
This completes the calculation for $I_1$. We then proceed to calculate $I_2$. for any $i'\in[n]$, we can directly check that the following identity holds: 
\begin{align*}
 \frac{1}{2} \cdot \sum_{p,p'=1}^P \big(u_{i'}^{(p)} - u_{i'}^{(p')} \big)^2 = \frac{1}{2} \cdot \sum_{p,p'=1}^P \big[(u_{i'}^{(p)})^2 + (u_{i'}^{(p')})^2 - 2 u_{i'}^{(p)}u_{i'}^{(p')} \big] = 
    P\cdot \sum_{p'=1}^P (u_{i'}^{(p')})^2 -  \Bigg(\sum_{p'=1}^P u_{i'}^{(p')} \Bigg)^2.
\end{align*}
Note that the right hand side above appears in $I_2$. Plugging the above calculation into the definition of $I_2$ gives
\begin{align}
    I_2 &=  \| \wb \|_{\bSigma}^{-3}\cdot \frac{\gamma}{n^2 P }\sum_{i,i'=1}^n \Bigg\{ |\ell'_i| \cdot \Bigg[ \frac{1}{2} \cdot \sum_{p,p'=1}^P \big(u_{i'}^{(p)} - u_{i'}^{(p')} \big)^2 \Bigg]\Bigg\}\nonumber \\
    & = \| \wb \|_{\bSigma}^{-3}\cdot \frac{\gamma}{2n^2 P }\sum_{i,i'=1}^n \sum_{p,p'=1}^P  |\ell'_i| \cdot \big(u_{i'}^{(p)} - u_{i'}^{(p')} \big)^2\nonumber\\
    & = \| \wb \|_{\bSigma}^{-3}\cdot \frac{\gamma}{2n^2 P } \cdot \Bigg( \sum_{i=1}^n |\ell'_i| \Bigg) \cdot\sum_{i'=1}^n \sum_{p,p'=1}^P \big(u_{i'}^{(p)} - u_{i'}^{(p')} \big)^2.\label{eq:proof_gradient_inner_CNN_I2calc}
\end{align}
Finally, plugging \eqref{eq:proof_gradient_inner_CNN_I1calc} and \eqref{eq:proof_gradient_inner_CNN_I2calc} into \eqref{eq:proof_gradient_inner_CNN_I1I2} completes the proof. 
\end{proof}

The following lemma follows by exactly the same proof as Lemma~\ref{lemma:key_identity}. 
\begin{lemma}\label{lemma:key_identity_CNN}
For any $\wb \in \RR^d$, it holds that
\begin{align*}
    \frac{1}{n^2P^2}\sum_{i,i'=1}^n \sum_{p,p'=1}^P   (\la \wb , \zb_{i'}\ra - \la \wb , \zb_{i}\ra)^2 = \|  \wb - \la \wb^*, \wb\ra_{\bSigma} \cdot \wb^* \|_{\bSigma}^2.
\end{align*}
\end{lemma}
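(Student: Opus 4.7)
The plan is to transcribe the proof of Lemma~\ref{lemma:key_identity} almost verbatim, treating the doubly indexed family $\{\zb_i^{(p)}\}_{i\in[n],\,p\in[P]}$ as a single list of $nP$ vectors rather than $n$ vectors. The stated identity becomes an immediate corollary of the ``sum of pairwise squared differences equals $2N\cdot$variance'' identity applied to this enlarged list, once we recognize that $\wb^*$ from \eqref{eq:def_w*_CNN} still plays the role of the vector realizing unit inner product with every data point.

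Concretely, I would proceed in three steps. First, apply the elementary identity
\begin{align*}
\frac{1}{N^2}\sum_{j,j'=1}^N (a_{j'}-a_j)^2 \;=\; \frac{2}{N}\sum_{j=1}^N (a_j - \bar a)^2,\qquad \bar a := \frac{1}{N}\sum_{j=1}^N a_j,
\end{align*}
with $N=nP$ and $a_{(i,p)} := \la \wb,\zb_i^{(p)}\ra$, so that the pairwise sum on the left-hand side of the lemma reduces to $\frac{2}{nP}\sum_{i,p}(\la\wb,\zb_i^{(p)}\ra - \bar a)^2$. Second, stack all the $\zb_i^{(p)}$'s into a matrix $\Zb\in\RR^{d\times nP}$, so that $\Zb^\top\wb$ collects the inner products and $\bSigma = (nP)^{-1}\Zb\Zb^\top$. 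The defining constraints of $\wb^*$ give $\Zb^\top\wb^* = \mathbf{1}_{nP}$, which yields the crucial identification
\begin{align*}
\bar a \;=\; \frac{\mathbf{1}^\top \Zb^\top \wb}{nP} \;=\; \frac{\wb^{*\top}\Zb\Zb^\top\wb}{nP} \;=\; \wb^{*\top}\bSigma\wb \;=\; \la \wb^*,\wb\ra_{\bSigma}.
\end{align*}

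Third, combine the two previous steps: subtracting the mean from each entry of $\Zb^\top\wb$ corresponds to replacing $\wb$ by $\wb - \la\wb^*,\wb\ra_{\bSigma}\wb^*$ before multiplying by $\Zb^\top$, so
\begin{align*}
\frac{2}{nP}\sum_{i,p}(\la\wb,\zb_i^{(p)}\ra - \bar a)^2 \;=\; \frac{2}{nP}\big\|\Zb^\top\big(\wb - \la\wb^*,\wb\ra_{\bSigma}\wb^*\big)\big\|_2^2 \;=\; 2\,\big\|\wb - \la\wb^*,\wb\ra_{\bSigma}\wb^*\big\|_{\bSigma}^2,
\end{align*}
which matches the stated identity exactly as in the single-patch case of Lemma~\ref{lemma:key_identity}. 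I do not foresee any real obstacle: the only ``new'' ingredient relative to Lemma~\ref{lemma:key_identity} is the bookkeeping of the extra patch index $p$, and the fact that Assumption~\ref{assump:uniformly_separable_CNN} still makes $\Zb^\top\wb^* = \mathbf{1}$ hold. If needed, the $\lambda_{\min}/\lambda_{\max}$ sandwich bounds from the second half of Lemma~\ref{lemma:key_identity} also carry over line-by-line, using the multi-patch definitions of $\lambda_{\min}$ and $\lambda_{\max}$ given just before Theorem~\ref{thm:CNNBN}.
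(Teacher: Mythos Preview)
Your approach is exactly the paper's: the authors simply say this ``follows by exactly the same proof as Lemma~\ref{lemma:key_identity},'' and you carry out precisely that transcription with the enlarged index set $(i,p)$. One small point: your computation correctly produces a factor of $2$ on the right-hand side (just as in Lemma~\ref{lemma:key_identity}), whereas the lemma as stated in the paper omits it---this, together with the missing patch superscripts $\zb_{i'}^{(p')},\zb_i^{(p)}$ on the left, appears to be a typo in the paper's statement, not a flaw in your argument.
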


The following lemma is the counterpart of Lemma~\ref{lemma:innerproduct_lowerbound}.
\begin{lemma}\label{lemma:innerproduct_lowerbound_CNN}
For all $t \geq 0$, it holds that
\begin{align*}
 \la \wb^{(t+1)}, \wb^* \ra \geq \la \wb^{(t)}, \wb^* \ra + \frac{\eta P \gamma^{(t)}\cdot \exp(-\gamma^{(t)}) }{16\cdot \| \wb^{(t)} \|_{\bSigma}^3}\cdot  \|  \wb - \la \wb^*, \wb\ra_{\bSigma} \cdot \wb^* \|_{\bSigma}^2.
\end{align*}
\end{lemma}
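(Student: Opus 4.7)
}
The plan is to mimic the argument used for Lemma~\ref{lemma:innerproduct_lowerbound} in the linear case, but to exploit the extra patch-wise term that appears in Lemma~\ref{lemma:gradient_inner_CNN} and to combine both contributions into a single lower bound on $\|\wb-\la\wb^*,\wb\ra_{\bSigma}\wb^*\|_{\bSigma}^{2}$. First I will apply the gradient descent update rule, so that $\la\wb^{(t+1)},\wb^*\ra-\la\wb^{(t)},\wb^*\ra=\eta\,\la-\nabla_{\wb}L(\wb^{(t)},\gamma^{(t)}),\wb^*\ra$, and then invoke Lemma~\ref{lemma:gradient_inner_CNN} to write this as $\eta(A+B)$, where $A$ is the sample-wise sum involving $(\la\wb,\zb_{i'}\ra-\la\wb,\zb_i\ra)$ and $B$ is the patch-wise sum involving $(\la\wb,\zb_{i'}^{(p)}\ra-\la\wb,\zb_{i'}^{(p')}\ra)^{2}$.

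For the term $A$, I will reuse the mean value identity already used for the linear case: for $F_i=\gamma\la\wb,\zb_i\ra/\|\wb\|_{\bSigma}$, one has $(|\ell'_{i'}|^{-1}u_{i'}-|\ell'_i|^{-1}u_i)/(u_{i'}-u_i)\ge\max\{|\ell'_i|^{-1},|\ell'_{i'}|^{-1}\}$, so that
\begin{align*}
A\ge\frac{\gamma}{2n^{2}P\|\wb\|_{\bSigma}^{3}}\sum_{i,i'=1}^{n}\max\{|\ell'_i|,|\ell'_{i'}|\}(u_{i'}-u_i)^{2}.
\end{align*}
Applying Lemma~\ref{lemma:auxiliary_inequality} with $b_i=|\ell'_i|$ and $a_i=u_i=\la\wb,\zb_i\ra$ converts the max into the average, yielding $A\ge(\gamma/(8n^{3}P\|\wb\|_{\bSigma}^{3}))\cdot(\sum_i|\ell'_i|)\cdot\sum_{i,i'}(u_{i'}-u_i)^{2}$. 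Since $B$ already contains the factor $\sum_i|\ell'_i|$, both contributions can now be united.

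The next step is the algebraic identity linking the two sums to the $\bSigma$-norm distance to $\mathrm{span}\{\wb^{*}\}$. A direct expansion using $u_{i'}^{(p')}-u_{i}^{(p)}=(u_{i'}-u_i)/P+(\delta_{i'}^{(p')}-\delta_{i}^{(p)})$, where $\delta_i^{(p)}=u_i^{(p)}-u_i/P$ satisfies $\sum_p\delta_i^{(p)}=0$, gives
\begin{align*}
\sum_{i,i'=1}^{n}\sum_{p,p'=1}^{P}(u_{i'}^{(p')}-u_i^{(p)})^{2}=\sum_{i,i'=1}^{n}(u_{i'}-u_i)^{2}+n\sum_{i=1}^{n}\sum_{p,p'=1}^{P}(u_i^{(p)}-u_i^{(p')})^{2},
\end{align*}
and by (the patch-wise analogue of) Lemma~\ref{lemma:key_identity_CNN} the left-hand side equals $2n^{2}P^{2}\|\wb-\la\wb^*,\wb\ra_{\bSigma}\wb^*\|_{\bSigma}^{2}$. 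Since the $B$-contribution carries a coefficient of $4n$ on the patch-wise sum (versus the $n$ appearing in the identity), dropping the slack gives $A+B\ge(\gamma P\cdot(\frac{1}{n}\sum_i|\ell'_i|)/(4\|\wb\|_{\bSigma}^{3}))\cdot\|\wb-\la\wb^*,\wb\ra_{\bSigma}\wb^*\|_{\bSigma}^{2}$.

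Finally, I will lower bound $\frac{1}{n}\sum_i|\ell'_i|$ by applying Jensen's inequality to the convex map $z\mapsto[1+\exp(z)]^{-1}$ on $[0,\infty)$, as in equation \eqref{eq:apply_jensen}, to obtain $\frac{1}{n}\sum_i|\ell'_i|\ge[1+\exp(\frac{1}{n}\sum_i|F_i|)]^{-1}\ge\exp(-\gamma^{(t)})/4$; multiplying by $\eta$ then delivers the stated bound. \textbf{The main obstacle} is this last step: in the CNN case $\frac{1}{n}\sum_i|F_i|$ only admits the naive Cauchy--Schwarz upper bound $\gamma P$ (via $|u_i|\le\sqrt{P\sum_p(u_i^{(p)})^{2}}$), so the straightforward Jensen calculation yields $\exp(-\gamma^{(t)}P)$ rather than $\exp(-\gamma^{(t)})$. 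The delicate part of the proof is therefore to recover the $\exp(-\gamma^{(t)})$ factor, either by exploiting the patch-wise term $B$ (which already dominates precisely when $|F_i|$ significantly exceeds $\gamma^{(t)}$, since that requires patch-wise non-uniformity) or by a case analysis splitting indices according to whether $|F_i|\le\gamma^{(t)}$.
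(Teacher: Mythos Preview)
Your plan mirrors the paper's proof step for step: invoke Lemma~\ref{lemma:gradient_inner_CNN}, lower-bound the sample-wise term $A$ via the mean-value inequality and Lemma~\ref{lemma:auxiliary_inequality}, bound $\tfrac{1}{n}\sum_i|\ell'_i|$ by Jensen, then merge $A+B$ into $\|\wb-\la\wb^*,\wb\ra_{\bSigma}\wb^*\|_{\bSigma}^{2}$ via a variance decomposition. The obstacle you flagged at the Jensen step is real, and the paper does not resolve it either: its display~\eqref{eq:apply_jensen_CNN} simply copies the linear-case formula, writing the denominator as $\sqrt{n^{-1}\sum_j\la\wb^{(t)},\zb_j\ra^{2}}$ rather than the CNN normalization $\|\wb^{(t)}\|_{\bSigma}=\sqrt{(nP)^{-1}\sum_{j,p}\la\wb^{(t)},\zb_j^{(p)}\ra^{2}}$. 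With the correct denominator one indeed only gets $\tfrac{1}{n}\sum_i|F_i|\le P\gamma^{(t)}$, exactly as you wrote.

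In fact the inequality as stated is false, so neither of your proposed rescues can close the gap. Take equal patches $\xb_i^{(p)}=\xb_i^{(1)}$: then $B=0$ while $F_i=P\gamma^{(t)}v_i/\sqrt{n^{-1}\sum_j v_j^{2}}$ with $v_i=\la\wb^{(t)},y_i\xb_i^{(1)}\ra$. Near the uniform-margin direction every $|\ell_i'|\approx\exp(-P\gamma^{(t)})$, and one checks directly that $\eta\,\la-\nabla_{\wb}L,\wb^{*}\ra$ is smaller than the claimed right-hand side once $(P-1)\gamma^{(t)}$ is moderate (e.g.\ $P=2$, $\gamma^{(t)}=10$). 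This defeats fix~(a) because large $|F_i|$ here arises from patch \emph{alignment} (the Cauchy--Schwarz equality case $u_i^{(p)}$ all equal), not from non-uniformity, so $B$ contributes nothing; and it defeats fix~(b) because every index already has $|F_i|\approx P\gamma^{(t)}>\gamma^{(t)}$. The correct repair is to the statement: replace $\exp(-\gamma^{(t)})$ by $\exp(-P\gamma^{(t)})$, or record the $\min_i|\ell_i'^{(t)}|$ form as in Lemma~\ref{lemma:innerproduct_lowerbound}. Your argument then goes through verbatim, and the two downstream uses---the first-stage lemma where $\gamma^{(t)}\le 3/2$ is bounded, and the second-stage induction which relies on the $\min_i|\ell_i'^{(t)}|$ form---both survive with the corrected version, at the cost of a worse $P$-dependence in $t_0$.
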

\begin{proof}[Proof of Lemma~\ref{lemma:innerproduct_lowerbound_CNN}]

By Lemma~\ref{lemma:gradient_inner_CNN} and the gradient descent update rule, we have
\begin{align}
    &\la \wb^{(t+1)}, \wb^* \ra -\la \wb^{(t)}, \wb^* \ra = \eta\cdot \la -\nabla_{\wb} L(\wb^{(t)},\gamma^{(t)}), \wb^* \ra \nonumber\\
    & = \frac{\eta \gamma^{(t)}}{2 n^2 P \| \wb^{(t)} \|_{\bSigma}^3}\sum_{i,i'=1}^n |\ell_i'^{(t)}|\cdot |\ell_{i'}'^{(t)}| \cdot (\la \wb^{(t)} , \zb_{i'}\ra - \la \wb^{(t)} , \zb_{i}\ra)\cdot ( |\ell_{i'}'^{(t)}|^{-1} \cdot \la \wb^{(t)} , \zb_{i'}\ra - |\ell_{i}'^{(t)}|^{-1} \cdot \la \wb^{(t)} , \zb_{i}\ra )\nonumber\\
    &\quad + \frac{\eta \gamma^{(t)}}{2 n^2 P \| \wb^{(t)} \|_{\bSigma}^3} \cdot \Bigg(\sum_{i=1}^n |\ell_i'^{(t)}| \Bigg) \cdot \sum_{i'=1}^n \sum_{p,p'=1}^P   \big(\la \wb^{(t)} , \zb_{i'}^{(p)}\ra - \la \wb , \zb_{i'}^{(p')}\ra \big)^2\label{eq:nnerproduct_lowerbound_CNN_proof_eq1}
\end{align}
for all $t\geq 0$, $\zb_i^{(p)} = y_i\cdot \xb_i^{(p)}$, and $\zb_i = \sum_{p=1}^P \zb_i^{(p)}$ for $i\in[n]$, $p\in[P]$. Note that 
\begin{align*}
    y_i \cdot f(\wb^{(t)},\gamma^{(t)},\xb_i) = y_i \cdot \sum_{p=1}^P \gamma^{(t)}\cdot \frac{\la \wb^{(t)}, \xb_i^{(p)} \ra}{\| \wb^{(t)} \|_{\bSigma} } = \gamma^{(t)}\cdot  \frac{\la \wb^{(t)}, \zb_i^{(p)} \ra}{\| \wb^{(t)} \|_{\bSigma} },
\end{align*}
and 
$$|\ell_i'^{(t)}|^{-1} = -\{ \ell'[y_i\cdot f(\wb^{(t)},\gamma^{(t)},\xb_i)] \}^{-1} = 1 + \exp( \gamma^{(t)}\cdot \la \wb^{(t)} , \zb_{i}\ra /  \| \wb^{(t)} \|_{\bSigma}).
$$
With the exact same derivation as \eqref{eq:sharper_lower_boudnd_first} in the proof of  Lemma~\ref{lemma:gradient_inner_linear_model}, we have
\begin{align}
    &\sum_{i,i'=1}^n |\ell_i'^{(t)}|\cdot |\ell_{i'}'^{(t)}| \cdot (\la \wb^{(t)} , \zb_{i'}\ra - \la \wb^{(t)} , \zb_{i}\ra)\cdot ( |\ell_{i'}'^{(t)}|^{-1} \cdot \la \wb^{(t)} , \zb_{i'}\ra - |\ell_{i}'^{(t)}|^{-1} \cdot \la \wb^{(t)} , \zb_{i}\ra )\nonumber\\ 
    &\qquad\qquad \geq \frac{1}{8} \exp(-\gamma^{(t)}) \sum_{i,i'=1}^n (\la \wb^{(t)} , \zb_{i'}\ra - \la \wb^{(t)} , \zb_{i}\ra)^2.\label{eq:nnerproduct_lowerbound_CNN_proof_eq2}
\end{align}
Moreover, we also have
\begin{align*}
    \frac{1}{n} \sum_{i=1}^n |\ell_i'^{(t)}| & = \frac{1}{n} \sum_{i=1}^n \Bigg[1 + \exp\Bigg( \frac{\gamma^{(t)} \cdot \la \wb^{(t)}, \zb_j \ra }{\sqrt{\frac{1}{n} \sum_{j=1}^n \la \wb^{(t)}, \zb_j \ra^2 }} \Bigg) \Bigg]^{-1}\\
     & \geq  \frac{1}{n} \sum_{i=1}^n \Bigg[1 + \exp\Bigg( \frac{\gamma^{(t)} \cdot |\la \wb^{(t)}, \zb_j \ra| }{\sqrt{\frac{1}{n} \sum_{j=1}^n \la \wb^{(t)}, \zb_j \ra^2 }} \Bigg) \Bigg]^{-1},
\end{align*}
where the inequality follows by the fact that $[1 + \exp(z)]^{-1}$ is a decreasing function. Further note that $[1 + \exp(z)]^{-1}$ is convex over $z \in[0,+\infty)$. Therefore by Jensen's inequality, we have
\begin{align}
    \frac{1}{n} \sum_{i=1}^n |\ell_i'^{(t)}|
     & \geq  \frac{1}{n} \sum_{i=1}^n \Bigg[1 + \exp\Bigg( \frac{\gamma^{(t)} \cdot |\la \wb^{(t)}, \zb_j \ra| }{\sqrt{\frac{1}{n} \sum_{j=1}^n \la \wb^{(t)}, \zb_j \ra^2 }} \Bigg) \Bigg]^{-1} \nonumber\\
     & \geq \Bigg[1 + \exp\Bigg( \frac{1}{n} \sum_{i=1}^n \frac{\gamma^{(t)} \cdot |\la \wb^{(t)}, \zb_j \ra| }{\sqrt{\frac{1}{n} \sum_{j=1}^n \la \wb^{(t)}, \zb_j \ra^2 }} \Bigg) \Bigg]^{-1} \nonumber\\
     & \geq [1 + \exp(\gamma^{(t)} )]^{-1} \nonumber\\
     & \geq \exp(-\gamma^{(t)} ) / 2.\label{eq:apply_jensen_CNN}
\end{align}
Plugging \eqref{eq:nnerproduct_lowerbound_CNN_proof_eq2} and \eqref{eq:apply_jensen_CNN} into \eqref{eq:nnerproduct_lowerbound_CNN_proof_eq1}, we obtain
\begin{align*}
&\la \wb^{(t+1)}, \wb^* \ra -\la \wb^{(t)}, \wb^* \ra\\ &\geq 
\frac{\eta \gamma^{(t)}}{16 n^2 P \| \wb^{(t)} \|_{\bSigma}^3}\cdot \exp(-\gamma^{(t)}) \sum_{i,i'=1}^n (\la \wb^{(t)} , \zb_{i'}\ra - \la \wb^{(t)} , \zb_{i}\ra)^2 \\
&\quad + \frac{\eta \gamma^{(t)}}{4 n^2 P \| \wb^{(t)} \|_{\bSigma}^3} \cdot \exp(-\gamma^{(t)}) \cdot \sum_{i=1}^n \sum_{p,p'=1}^P   \big(\la \wb^{(t)} , \zb_{i}^{(p)}\ra - \la \wb , \zb_{i}^{(p')}\ra \big)^2\\
&\geq 
\frac{\eta \gamma^{(t)}\cdot \exp(-\gamma^{(t)}) }{16 P \| \wb^{(t)} \|_{\bSigma}^3}\cdot \Bigg[ \frac{1}{n^2}\sum_{i,i'=1}^n (\la \wb^{(t)} , \zb_{i'}\ra - \la \wb^{(t)} , \zb_{i}\ra)^2 + \frac{1}{n} \sum_{i=1}^n \sum_{p,p'=1}^P   \big(\la \wb^{(t)}
, \zb_{i}^{(p)}\ra - \la \wb , \zb_{i}^{(p')}\ra \big)^2 \Bigg].
\end{align*}
Recall that $\zb_i = \sum_{p=1}^P \zb_i^{(p)}$. Denoting 
$$
u_{i,p} = \la \wb^{(t)} , \zb_{i}^{(p)}\ra - \frac{1}{nP} \sum_{i'=1}^n\sum_{p'=1}^P \la \wb^{(t)} , \zb_{i'}^{(p')}\ra
$$ 
for $i\in [n]$ and $p\in [P]$, we have
\begin{align}\label{eq:nnerproduct_lowerbound_CNN_proof_eq3}
    \la \wb^{(t+1)}, \wb^* \ra -\la \wb^{(t)}, \wb^* \ra \geq \frac{\eta \gamma^{(t)}\cdot \exp(-\gamma^{(t)}) }{16 P \| \wb^{(t)} \|_{\bSigma}^3}\cdot (I_1 + I_2),
\end{align}
where 
\begin{align*}
    I_1 = \frac{1}{n^2}\sum_{i,i'=1}^n \Bigg( \sum_{p=1}^p u_{i,p} - \sum_{p=1}^p u_{i',p}\Bigg)^2,\qquad
    I_2 = \frac{1}{n} \sum_{i=1}^n \sum_{p,p'=1}^P  ( u_{i,p} - u_{i,p'} )^2.
\end{align*}
By direct calculation, we have
\begin{align}
    I_1 &=  \frac{1}{n^2}\sum_{i,i'=1}^n \Bigg[ \Bigg( \sum_{p=1}^p u_{i,p}\Bigg)^2 - 2 \cdot \Bigg( \sum_{p=1}^p u_{i,p}\Bigg)\cdot \Bigg(\sum_{p=1}^p u_{i',p}\Bigg) + \Bigg(\sum_{p=1}^p u_{i',p}\Bigg)^2 \Bigg] \nonumber \\
    &= \frac{2}{n}\sum_{i=1}^n\Bigg( \sum_{p=1}^p u_{i,p}\Bigg)^2 - \frac{2}{n^2} \cdot \Bigg( \sum_{i=1}^n\sum_{p=1}^p u_{i,p}\Bigg)^2 \nonumber \\
    &= \frac{2}{n}\sum_{i=1}^n\Bigg( \sum_{p=1}^p u_{i,p}\Bigg)^2,\label{eq:nnerproduct_lowerbound_CNN_proof_eq4}
\end{align}
where the last equality follows by the definition of $u_{i,p}$. Moreover, we have
\begin{align}\label{eq:nnerproduct_lowerbound_CNN_proof_eq5}
    I_2 & = \frac{1}{n} \sum_{i=1}^n \sum_{p,p'=1}^P  ( u_{i,p}^2 -2  u_{i,p}  u_{i,p'} + u_{i,p'}^2 )= \frac{2P}{n} \sum_{i=1}^n \sum_{p=1}^P u_{i,p}^2 - \frac{2}{n} \sum_{i=1}^n \Bigg( \sum_{p=1}^P u_{i,p} \Bigg)^2.
\end{align}
Plugging \eqref{eq:nnerproduct_lowerbound_CNN_proof_eq4}, \eqref{eq:nnerproduct_lowerbound_CNN_proof_eq5} and the definition of $u_{i,p}$ into \eqref{eq:nnerproduct_lowerbound_CNN_proof_eq3} gives
\begin{align*}
    \la \wb^{(t+1)}, \wb^* \ra -\la \wb^{(t)}, \wb^* \ra &\geq \frac{\eta \gamma^{(t)}\cdot \exp(-\gamma^{(t)}) }{8n\cdot \| \wb^{(t)} \|_{\bSigma}^3}\cdot \sum_{i=1}^n \sum_{p=1}^P \Bigg(  \la \wb^{(t)} , \zb_{i}^{(p)}\ra - \frac{1}{nP} \sum_{i'=1}^n\sum_{p'=1}^P \la \wb^{(t)} , \zb_{i'}^{(p')}\ra \Bigg)^2.
\end{align*}
We continue the calculation as follows: 
\begin{align*}
    &\frac{1}{nP}\sum_{i=1}^n \sum_{p=1}^P \Bigg(  \la \wb^{(t)} , \zb_{i}^{(p)}\ra - \frac{1}{nP} \sum_{i'=1}^n\sum_{p'=1}^P \la \wb^{(t)} , \zb_{i'}^{(p')}\ra \Bigg)^2\\
    &\qquad = \frac{1}{nP}\sum_{i=1}^n \sum_{p=1}^P \la \wb^{(t)} , \zb_{i}^{(p)}\ra^2 - \Bigg( \frac{1}{nP} \sum_{i=1}^n\sum_{p=1}^P \la \wb^{(t)} , \zb_{i}^{(p)}\ra \Bigg)^2\\
    &\qquad = \frac{1}{2}\cdot \Bigg[ \frac{2}{nP}\sum_{i=1}^n \sum_{p=1}^P \la \wb^{(t)} , \zb_{i}^{(p)}\ra^2 - 2\cdot\Bigg( \frac{1}{nP} \sum_{i=1}^n\sum_{p=1}^P \la \wb^{(t)} , \zb_{i}^{(p)}\ra \Bigg)^2 \Bigg]\\
    &\qquad = \frac{1}{2}\cdot \Bigg[ \frac{1}{nP}\sum_{i=1}^n \sum_{p=1}^P \la \wb^{(t)} , \zb_{i}^{(p)}\ra^2 - 2\cdot\Bigg( \frac{1}{nP} \sum_{i=1}^n\sum_{p=1}^P \la \wb^{(t)} , \zb_{i}^{(p)}\ra \Bigg)^2 + \frac{1}{nP}\sum_{i'=1}^n \sum_{p'=1}^P \la \wb^{(t)} , \zb_{i'}^{(p')}\ra^2 \Bigg]\\
    &\qquad = \frac{1}{2n^2P^2}\cdot \sum_{i,i'=1}^n \sum_{p,p'=1}^P (\la \wb^{(t)} , \zb_{i}^{(p)}\ra^2 - 2\cdot \la \wb^{(t)} , \zb_{i}^{(p)}\ra \la \wb^{(t)} , \zb_{i'}^{(p')}\ra + \la \wb^{(t)} , \zb_{i'}^{(p')}\ra^2 )\\
    &\qquad = \frac{1}{2n^2P^2}\cdot \sum_{i,i'=1}^n \sum_{p,p'=1}^P (\la \wb^{(t)} , \zb_{i}^{(p)}\ra - \la \wb^{(t)} , \zb_{i'}^{(p')}\ra )^2.
\end{align*}
Therefore we have
\begin{align*}
    \la \wb^{(t+1)}, \wb^* \ra -\la \wb^{(t)}, \wb^* \ra &\geq \frac{\eta \gamma^{(t)}\cdot \exp(-\gamma^{(t)}) }{16n^2P\cdot \| \wb^{(t)} \|_{\bSigma}^3}\cdot \sum_{i,i'=1}^n \sum_{p,p'=1}^P (\la \wb^{(t)} , \zb_{i}^{(p)}\ra - \la \wb^{(t)} , \zb_{i'}^{(p')}\ra )^2.
\end{align*}
Applying Lemma~\ref{lemma:key_identity_CNN} finishes the proof.
\end{proof}

The following lemma is the counterpart of Lemma~\ref{lemma:innerproduct_lowerbound}.
\begin{lemma}\label{lemma:norm_upperbound_CNN}
For all $t \geq 0$, it holds that
\begin{align*}
    \| \wb^{(t)}\|_2^2 \leq \|\wb^{(t+1)}\|_2^2 \leq \|\wb^{(t)}\|_2^2 + 4 \eta^2 \cdot  \frac{ P^2\gamma^{(t)2} \cdot \max_i \| \xb_i \|_2^3 }{ \lambda_{\min}^2 \cdot \| \wb^{(t)} \|_2^2}.
\end{align*}
Moreover, if $ \|  \wb^{(t)} -  \la \wb^{(t)}, \wb^* \ra \cdot \|\wb^*\|_2^{-1} \cdot \wb^* \|_2 \leq \| \wb^{(0)} \|_2 / 2 $, then
\begin{align*}
    \|\wb^{(t+1)}\|_2^2
    &\leq \| \wb^{(t)}\|_2^2 + \eta^2  G\cdot \max\{\gamma^{(t)2}, \gamma^{(t)4}\}\cdot \max\{ |\ell_1'^{(t)}|^2, \ldots, |\ell_n'^{(t)}|^2, \exp(-2\gamma^{(t)}) \}\\
    &\quad\cdot  \bigg\|  \wb^{(t)} -  \frac{\la \wb^*, \wb^{(t)}\ra}{\| \wb^* \|_2^2} \cdot \wb^* \bigg\|_2^2,
\end{align*}
where $G = 64 P^3 \lambda_{\min}^{-3}\cdot \max_i \|\xb_{i}\|_2^6 \cdot \| \wb^{(0)} \|_2^{-4}$. 
\end{lemma}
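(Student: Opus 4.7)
The proof plan is to mirror Lemma~\ref{lemma:norm_upperbound} with careful bookkeeping of the patch-count factor $P$ throughout. First, because $g(\wb,\gamma,\xb)$ is $0$-homogeneous in $\wb$, each gradient $\nabla_\wb g(\wb^{(t)},\gamma^{(t)},\xb_i)$ is orthogonal to $\wb^{(t)}$, hence so is $\nabla_\wb L(\wb^{(t)},\gamma^{(t)})$. The Pythagorean identity then yields $\|\wb^{(t+1)}\|_2^2 = \|\wb^{(t)}\|_2^2 + \eta^2 \|\nabla_\wb L(\wb^{(t)},\gamma^{(t)})\|_2^2$, which already gives monotonicity of $\|\wb^{(t)}\|_2$.

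For the first (coarse) bound, I would write
\begin{align*}
\nabla_\wb L(\wb^{(t)},\gamma^{(t)}) = \frac{\gamma^{(t)}}{n \|\wb^{(t)}\|_\Sigma} \sum_{i=1}^n \ell_i'^{(t)} y_i \sum_{p=1}^P \bigg(\Ib - \frac{\bSigma \wb^{(t)}\wb^{(t)\top}}{\|\wb^{(t)}\|_\Sigma^2}\bigg)\xb_i^{(p)},
\end{align*}
pull the $\sum_p$ through the triangle inequality (paying a factor of $P$), and then repeat the algebra that appears in the proof of Lemma~\ref{lemma:norm_upperbound}: bound the operator norm of the projection-like matrix using $\|\bSigma\wb\|_2 \cdot \|\wb\|_2 \geq \|\wb\|_\Sigma^2$, apply Jensen's inequality to $\frac{1}{nP}\sum_{i,p}|\la \xb_i^{(p)},\wb^{(t)}\ra| \le \sqrt{\frac{1}{nP}\sum_{i,p} \la\xb_i^{(p)},\wb^{(t)}\ra^2} = \|\wb^{(t)}\|_\Sigma$, and use $\|\wb^{(t)}\|_\Sigma^2 \ge \lambda_{\min} \|\wb^{(t)}\|_2^2$. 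The doubling of $P$ inside the sum and squaring of the whole gradient produces the claimed $P^2$ factor.

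For the sharper bound under $\|\wb^{(t)} - \hat\wb^*_t\|_2 \le \|\wb^{(0)}\|_2/2$, set $\hat\wb^*_t = \la\wb^{(t)},\wb^*\ra\|\wb^*\|_2^{-2}\wb^*$ and note that, because $\la\wb^*, \zb_i^{(p)}\ra = 1$ for every $i,p$ and $\|\wb^*\|_\Sigma = 1$, one has $y_i \cdot g(\hat\wb^*_t,\gamma^{(t)},\xb_i) = P\gamma^{(t)}$ for all $i$. Let $\ell_*'^{(t)} = \ell'(P\gamma^{(t)})$ and split $\|\nabla_\wb L\|_2 \le I_1 + I_2$, where $I_1$ replaces each $\ell_i'^{(t)}$ by the common $\ell_*'^{(t)}$ and $I_2$ collects the differences. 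For $I_1$, I would show as in the linear case that the multi-patch sum $\sum_{i,p}(\|\wb^{(t)}\|_\Sigma^2 \Ib - \bSigma\wb^{(t)}\wb^{(t)\top})\zb_i^{(p)}$ vanishes when $\wb^{(t)}$ is replaced by $\hat\wb^*_t$ (since $\la\hat\wb^*_t, \zb_i^{(p)}\ra$ is the same for all $i,p$), then use this cancellation along with the symmetrization $\frac{1}{nP}\sum_{i',p'}(\cdot)\zb_{i'}^{(p')}$ to derive the Lipschitz-type bound $|\la\wb^{(t)},\zb_i^{(p)}\ra^2 - \la\wb^{(t)},\zb_{i'}^{(p')}\ra\la\wb^{(t)},\zb_i^{(p)}\ra| \lesssim P \|\wb^{(t)}\|_2 \max_i\|\xb_i\|_2^2 \|\wb^{(t)} - \hat\wb^*_t\|_2$, yielding an extra $P$ here and $P^2$ in the numerator after squaring. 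For $I_2$, I would apply the mean value theorem to $\ell'$ (using $0\le\ell''\le -\ell'$) and bound $|g(\wb^{(t)},\gamma^{(t)},\xb_i) - g(\hat\wb^*_t,\gamma^{(t)},\xb_i)|$ by the Lipschitz constant of $g$ on the ball $\cB^{(t)}$; computing $\|\nabla_\wb g\|_2$ as in Lemma~\ref{lemma:norm_upperbound} introduces one additional factor of $P$ from the outer sum over patches.

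The main technical obstacle is keeping the factors of $P$ consistent so that they align with the claimed $G = 64 P^3 \lambda_{\min}^{-3} \max_i\|\xb_i\|_2^6 \|\wb^{(0)}\|_2^{-4}$: the $I_1$ bound contributes $P^2$ from the patch-summation and symmetrization, the Lipschitz-constant bound for $I_2$ contributes one further $P$, and squaring and adding these two pieces must not inflate the factor beyond $P^3$. The cleanest route is to bound $|\ell_*'^{(t)}| \le \exp(-P\gamma^{(t)}) \le \exp(-\gamma^{(t)})$ and $|\ell_i'^{(t)} - \ell_*'^{(t)}| \le \max\{|\ell_1'^{(t)}|,\ldots,|\ell_n'^{(t)}|,\exp(-\gamma^{(t)})\} \cdot |g(\wb^{(t)},\gamma^{(t)},\xb_i) - g(\hat\wb^*_t,\gamma^{(t)},\xb_i)|$, then combine and absorb numerical constants, after which the stated bound follows verbatim with the CNN value of $G$.
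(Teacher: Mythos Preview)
Your approach is essentially the same as the paper's: orthogonality of $\nabla_\wb L$ to $\wb^{(t)}$ gives the Pythagorean identity and monotonicity; the coarse bound follows by treating each $\xb_i^{(p)}$ as a separate data point and rerunning the Lemma~\ref{lemma:norm_upperbound} algebra; the sharp bound uses the same $I_1/I_2$ split, with the only new ingredient being the Lipschitz constant of $g$ on $\cB^{(t)}$, which picks up one factor of $P$ from the outer sum over patches. The paper's proof is in fact briefer than yours---it just says ``treat $\xb_i^{(p)}$ as different data points'' for the first part and explicitly computes only $\|\nabla_\wb g\|_2$ for the second---so you have correctly identified all the moving pieces.

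One minor correction to your $P$-accounting: the per-term Lipschitz bound $|\la\wb^{(t)},\zb_i^{(p)}\ra^2 - \la\wb^{(t)},\zb_{i'}^{(p')}\ra\la\wb^{(t)},\zb_i^{(p)}\ra|$ does \emph{not} itself carry a $P$; the single extra $P$ in $I_1$ comes from the fact that the outer sum is over $nP$ patches while the loss normalizes only by $n$. For $I_2$ you need \emph{two} extra factors: one from the Lipschitz constant of $g$ (as you noted) and one from $\big\|\sum_p(\Ib-\cdots)\xb_i^{(p)}\big\|_2$. If you bound the latter by $P\max_{i,p}\|\xb_i^{(p)}\|_2$ you get $I_2^2\sim P^4$, which overshoots; to land on $P^3$ you should instead use $\|\sum_p\xb_i^{(p)}\|_2\le\sqrt{P}\,\|\xb_i\|_2$ (Cauchy--Schwarz, with $\|\xb_i\|_2^2=\sum_p\|\xb_i^{(p)}\|_2^2$), giving $I_2\lesssim P^{3/2}$ and hence $2I_1^2+2I_2^2\le 64P^3(\cdots)$ as claimed. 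You also correctly observe that $y_i\,g(\hat\wb^*_t,\gamma^{(t)},\xb_i)=P\gamma^{(t)}$, so $|\ell_*'^{(t)}|\le\exp(-P\gamma^{(t)})\le\exp(-\gamma^{(t)})$ and the mean-value bound on $|\ell_i'^{(t)}-\ell_*'^{(t)}|$ still lands inside $\max\{|\ell_1'^{(t)}|,\ldots,|\ell_n'^{(t)}|,\exp(-\gamma^{(t)})\}$.
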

\begin{proof}[Proof of Lemma~\ref{lemma:norm_upperbound_CNN}]
Note that
\begin{align*}
    \nabla_{\wb} L(\wb^{(t)},\gamma^{(t)}) &= \frac{1}{n \cdot \| \wb \|_{\bSigma}}\sum_{i=1}^n  \ell'[y_i\cdot f(\wb^{(t)},\gamma^{(t)},\xb_i)] \cdot y_i \cdot \sum_{p=1}^P \gamma^{(t)} \cdot \bigg( \Ib - \frac{\bSigma \wb^{(t)} \wb^{(t)\top}}{\| \wb^{(t)} \|_{\bSigma}^2} \bigg)\xb_i^{(p)}\\
    &= \frac{1}{n \cdot \| \wb \|_{\bSigma}}\sum_{i=1}^n \sum_{p=1}^P \ell'[y_i\cdot f(\wb^{(t)},\gamma^{(t)},\xb_i)] \cdot y_i \cdot  \gamma^{(t)} \cdot \bigg( \Ib - \frac{\bSigma \wb^{(t)} \wb^{(t)\top}}{\| \wb^{(t)} \|_{\bSigma}^2} \bigg)\xb_i^{(p)}
\end{align*}
One can essentially treat $\xb_i^{(p)}$, $i\in[n]$ and $p\in[P]$ as different data points and use the same proof as Lemma~\ref{lemma:norm_upperbound} to prove the first inequality. As for the second inequality, for any $\wb \in 
\cB^{(t)}:= \{\wb \in \RR^d : \| \wb - \wb^{(t)} \|_2 \leq \| \wb^{(t)} \|_2/2 \}$, we have
\begin{align*}
    \| \nabla_{\wb} f(\wb, \gamma^{(t)}, \xb_i)\|_2 &=  \Bigg\|\sum_{p=1}^P \frac{ \gamma^{(t)}}{ \| \wb\|_{\bSigma}} \cdot \bigg( \Ib - \frac{\bSigma \wb \wb^{\top}}{\| \wb \|_{\bSigma}^2} \bigg)\xb_i^{(p)} \Bigg\|_2 \\
    &\leq \frac{ P \gamma^{(t)}}{ \| \wb \|_{\bSigma}} \cdot \max_{i,p} \| \xb_i^{(p)} \|_2 \cdot \bigg( 1 +  \bigg\| \frac{\bSigma \wb \wb^{\top}}{\| \wb \|_{\bSigma}^2}  \bigg\|_2 \bigg) \\
    &= \frac{ P\gamma^{(t)}}{ \| \wb \|_{\bSigma}} \cdot \max_{i,p} \| \xb_i^{(p)} \|_2 \cdot \bigg( 1 +   \frac{\| \bSigma \wb\|_2\cdot \| \wb\|_2}{\| \wb \|_{\bSigma}^2} \bigg) \\
    &\leq  \frac{2P \gamma^{(t)}}{ \| \wb \|_{\bSigma}} \cdot \max_{i,p} \| \xb_i^{(p)} \|_2 \cdot \frac{\| \bSigma \wb\|_2\cdot \| \wb\|_2}{\| \wb \|_{\bSigma}^2} ,
\end{align*}
where the last inequality follows by the fact that $\| \wb \|_{\bSigma}^2 = \la \wb, \bSigma \wb \ra \leq \| \bSigma \wb\|_2\cdot \| \wb\|_2$. Further plugging in the definition of $\bSigma$ gives
\begin{align*}
\| \nabla_{\wb} f(\wb, \gamma^{(t)}, \xb_i)\|_2
    &\leq \frac{2P \gamma^{(t)}}{ \| \wb \|_{\bSigma}^3} \cdot \max_{i,p} \| \xb_i^{(p)} \|_2 \cdot \| \wb\|_2 \cdot \Bigg\| \frac{1}{nP}\sum_{i'=1}^n \sum_{p'=1}^P \xb_{i'}^{(p)} \cdot \la \wb, \xb_{i'}^{(p)} \ra \Bigg\|_2  \\
    &\leq \frac{2 P\gamma^{(t)}}{ \| \wb \|_{\bSigma}^3} \cdot \max_{i,p} \| \xb_i^{(p)} \|_2^2 \cdot \| \wb\|_2 \cdot \frac{1}{nP}\sum_{i'=1}^n \sum_{p'=1}^P |\la \wb, \xb_{i'}^{(p)} \ra|  \\
    &\leq \frac{2 P\gamma^{(t)}}{ \| \wb \|_{\bSigma}^3} \cdot \max_{i,p} \| \xb_i^{(p)} \|_2^2 \cdot \sqrt{\frac{1}{nP}\sum_{i'=1}^n \sum_{p'=1}^P |\la \wb, \xb_{i'}^{(p)} \ra|^2 } \\
    &= \frac{2 P\gamma^{(t)}}{ \| \wb \|_{\bSigma}^2} \cdot \max_{i,p} \| \xb_i^{(p)} \|_2^2 \cdot \| \wb\|_2 \\
    &\leq \frac{2 P\gamma^{(t)}}{ \lambda_{\min}\cdot \| \wb \|_2^2} \cdot \max_{i,p} \| \xb_i^{(p)} \|_2^2 \cdot \| \wb\|_2 \\
    &\leq \frac{4 P\gamma^{(t)}}{ \lambda_{\min}\cdot \| \wb^{(t)} \|_2} \cdot \max_{i,p} \| \xb_i^{(p)} \|_2^2,
\end{align*}
where the third inequality follows by Jensen's inequality, and the last inequality follows by $\|\wb\|_2 \geq \| \wb^{(t)} \|_2 / 2$ for all $\wb \in \cB^{(t)}$. Therefore $ f(\wb, \gamma^{(t)}, \xb_i)$ is $(4 \gamma^{(t)} \lambda_{\min}^{-1}\cdot \| \wb^{(t)} \|_2^{-1} \cdot \max_{i,p} \| \xb_i^{(p)} \|_2^2)$-Lipschitz. The rest of the proof is the same as the proof of Lemma~\ref{lemma:norm_upperbound}. 
\end{proof}

\subsection{Proof of Theorem \ref{thm:data_example1}}
\begin{proof}[Proof of Theorem~\ref{thm:data_example1}] We first show the existence and uniqueness results for the maximum margin and patch-wise uniform margin classifiers. By Example~\ref{def:data_example1}, the training data patches are given as $\xb_i^{(p)} = y_i\cdot \ub + \bxi_i^{(p)}$ for $i\in [n]$, $p\in [P]$, where $\bxi_i^{(p)} \sim N(\mathbf{0}, \sigma^2 (\Ib - \ub \ub^\top / \|\ub \|_2^2) ))$ are orthogonal to $\ub$. Therefore it is clear that the linear classifier defined by $\ub$ can linearly separate all the data points $(\overline\xb_i, y_i)$, $i\in[n]$. Therefore the maximum margin solution exists. Its uniqueness then follows by the definition of the maximum margin problem \eqref{eq:max_margin_def}, which has a strongly convex objective function and linear constraints.

As for the  patch-wise uniform margin classifier, the existence also follows by the observation that $\ub$ gives such a  classifier with patch-wise uniform margin. Moreover, for a patch-wise uniform margin classifier $\wb$, by definition we have
\begin{align*}
    y_{i'}\cdot \la \wb , \xb_{i'}^{(p')}\ra = y_{i}\cdot \la \wb , \xb_{i}^{(p)}\ra
\end{align*}
for all $i,i'\in [n]$ and $p,p'\in [P]$. Plugging in the data model $\xb_i^{(p)} = y_i\cdot \ub + \bxi_i^{(p)}$ then gives
\begin{align}\label{eq:example1_uniform_margin_equations}
    \la \wb , y_{i'}\cdot \bxi_{i'}^{(p')} - y_{i}\cdot\bxi_{i}^{(p)}\ra = 0 
\end{align}
for all $i,i'\in [n]$ and $p,p'\in [P]$. Note that $nP \geq 4n = 2d$. Therefore it is easy to see that with probability $1$, 
\begin{align*}
    \mathrm{span}\big\{ y_{i'}\cdot \bxi_{i'}^{(p')} - y_{i}\cdot\bxi_{i}^{(p)}: i,i'\in [n],~p,p'\in [P] \big \} = \mathrm{span}\{ \ub \}^{\perp}.
\end{align*}
Therefore by \eqref{eq:example1_uniform_margin_equations}, we conclude that $\wb$ is parallel to $\ub$, and thus the patch-wise uniform margin classifier is unique up to a scaling factor. Moreover, this immediately implies that 
\begin{align*}
    \PP_{(\xb_{\mathrm{test}},y_{\mathrm{test}})\sim \cD}( y_{\mathrm{test}}\cdot \la \wb_{\mathrm{uniform}}, \overline\xb_{\mathrm{test}} \ra < 0 ) &= \PP_{(\xb_{\mathrm{test}},y_{\mathrm{test}})\sim \cD}\Bigg[ y_{\mathrm{test}}\cdot \Bigg\la \ub, \sum_{p=1}^P( y_{\mathrm{test}}\cdot \ub + \bxi_{\mathrm{test}}^{(p)} ) \Bigg\ra < 0 \Bigg] \\
    &= \PP_{(\xb_{\mathrm{test}},y_{\mathrm{test}})\sim \cD}(\|\ub \|_2^2 < 0) = 0.
\end{align*}
This proves the first result.

As for the maximum margin classifier $\wb_{\max}$, we first denote $\overline{\bxi}_i = y_i\cdot \sum_{p=1}^P \bxi_i^{(p)} $. Then  $\overline{\bxi}_i$, $i\in [n]$ are independent Gaussian random vectors from $N(\mathbf{0}, \sigma^2P (\Ib - \ub \ub^\top / \|\ub \|_2^2) )$. Define $\bGamma = [ \overline{\bxi}_1, \overline{\bxi}_2,\ldots,\overline{\bxi}_n ] \in \RR^{d\times n}$. Then focusing on the subspace $\mathrm{span}\{\ub\}^{\perp}$, by Corollary~5.35 in \citet{vershynin2010introduction}, with probability at least $1 - \exp(- \Omega(\sigma^2Pd))$ we have
\begin{align}
    & \sigma_{\min}( \bGamma ) \geq (\sqrt{d-1} - \sqrt{n} - \sqrt{d} / 10)\cdot \sigma\sqrt{P} \geq \sigma \sqrt{Pd} / 10, \label{eq:Gamma_singular_lower_bound} \\
    & \sigma_{\max}( \bGamma ) \leq (\sqrt{d-1} + \sqrt{n} + \sqrt{d} / 10)\cdot \sigma\sqrt{P} \leq 2\sigma\sqrt{Pd},\label{eq:Gamma_singular_upper_bound}
\end{align}
where we use $\sigma_{\min}(\cdot)$ and $\sigma_{\max}(\cdot)$ to denote the smallest and largest non-zero singular values of a matrix. Note that by $d = 2n > n+1$, $\bGamma$ has full column rank. Let $\hat\wb = \bGamma(\bGamma^\top \bGamma)^{-1}\mathbf{1}$, then we have $\hat\wb \in \mathrm{span}\{\ub\}^{\perp}$ and $\bGamma^\top \hat\wb = \mathbf{1}$. Therefore 
\begin{align*}
    \la \hat\wb, \overline{\bxi}_i \ra = \la \hat\wb, P\cdot y_i\cdot \ub + \overline{\bxi}_i \ra = 0 + \la \hat\wb, \overline{\bxi}_i \ra = \eb_i^\top \bGamma^\top \hat\wb = 1
\end{align*}
for all $i\in [n]$. This implies that $\hat\wb$ is a feasible solution to the maximum margin problem \eqref{eq:max_margin_def}. Therefore by the optimality of $\wb_{\max}$, we have
\begin{align*}
    \la \wb_{\max}, \ub \ra \cdot \| \ub \|_2^{-1} \leq \| \wb_{\max} \|_2 \leq \| \hat\wb \|_2 = \| \bGamma(\bGamma^\top \bGamma)^{-1}\mathbf{1} \|_2 = \sqrt{ \mathbf{1}^\top (\bGamma^\top \bGamma)^{-1}\mathbf{1} } \leq n\cdot \sigma_{\min}^{-1}(\bGamma).
\end{align*}
Then by \eqref{eq:Gamma_singular_lower_bound}, we have
\begin{align*}
    \la \wb_{\max}, \ub \ra \leq \| \ub \|_2 \cdot n\cdot \sigma_{\min}^{-1}(\bGamma) \leq 10 \| \ub \|_2 \cdot n P^{-1/2} d^{-1/2} \sigma^{-1}.
\end{align*}
Now by the assumption that $\sigma \geq 20 \| \ub \|_2 \cdot n P^{1/2} d^{-1/2} $, we have
\begin{align}\label{eq:example1_max_margin_feature_upper_bound}
    \la \wb_{\max}, \ub \ra  \leq 1/(2P),
\end{align}
and therefore
\begin{align}\label{eq:example1_max_margin_noise_lower_bound_1}
    \la \wb_{\max} , \overline{\bxi}_i \ra \geq 1 - \la \wb_{\max}, P\cdot \ub \ra \geq 1/2. 
\end{align}
Further note that \eqref{eq:max_margin_def} indicates that $\wb_{\max} \in \mathrm{span}\{ \ub, \overline{\bxi}_1, \overline{\bxi}_2,\ldots,\overline{\bxi}_n \}$. Therefore we have
\begin{align}
    \| (\Ib - \ub\ub^\top / \|\ub \|_2^2)\wb_{\max} \|_2
    &= \|\bGamma(\bGamma^\top \bGamma)^{-1} \bGamma^\top \wb_{\max} \|_2 \nonumber\\
    &= \sqrt{(\bGamma^\top \wb_{\max})^\top (\bGamma^\top \bGamma)^{-1} (\bGamma^\top \wb_{\max})} \nonumber\\
    &\geq \| \bGamma^\top \wb_{\max} \|_2 \cdot \sigma_{\max}^{-1}(\bGamma) \nonumber\\
    &\geq \sqrt{n} \cdot (1/2) \cdot (2\sigma\sqrt{Pd})^{-1} \nonumber\\
    &\geq \sigma^{-1} P^{-1/2} /8,\label{eq:example1_max_margin_noise_lower_bound}
\end{align}
where the second inequality follows by \eqref{eq:example1_max_margin_noise_lower_bound_1} and \eqref{eq:Gamma_singular_upper_bound}. 
Now for a new test data point $(\xb_{\mathrm{test}},y_{\mathrm{test}})$, we have
$\overline\xb_{\mathrm{test}} = P\cdot y_{\mathrm{test}} \cdot \ub + \overline{\bxi}_{\mathrm{test}}$, where $\overline{\bxi}_{\mathrm{test}} = y_{\mathrm{test}}\cdot \sum_{p=1}^P \bxi_{\mathrm{test}}^{(p)} \sim N(\mathbf{0},  \sigma^2P (\Ib - \ub \ub^\top / \|\ub \|_2^2))$. Then by \eqref{eq:example1_max_margin_feature_upper_bound}, we have
\begin{align*}
    &\la \wb_{\max}, P\ub \ra \leq 1/2.
\end{align*}
Moreover, by \eqref{eq:example1_max_margin_noise_lower_bound} we also have
\begin{align*}
    &\la \wb_{\max}, \overline{\bxi}_{\mathrm{test}} \ra \sim N( 0, \overline\sigma_{\mathrm{test}}^2), ~\overline\sigma_{\mathrm{test}} \geq  \| (\Ib - \ub\ub^\top / \|\ub \|_2^2)\wb_{\max} \|_2\cdot \sigma\sqrt{P} \geq 1/8.
\end{align*}
Therefore 
\begin{align*}
    \PP_{(\xb_{\mathrm{test}},y_{\mathrm{test}})\sim \cD}( y_{\mathrm{test}}\cdot \la \wb_{\max}, \overline\xb_{\mathrm{test}} \ra < 0 ) = \PP_{(\xb_{\mathrm{test}},y_{\mathrm{test}})\sim \cD}( \la \wb_{\max}, P\ub \ra + \la \wb_{\max}, \overline{\bxi}_{\mathrm{test}} \ra  < 0 )= \Theta(1).
\end{align*}
This finishes the proof. 
\end{proof}

\subsection{Proof of Theorem \ref{thm:data_example2}}

\begin{proof}[Proof of Theorem \ref{thm:data_example2} ]
\noindent\textbf{Proof for uniform margin solution $\wb_{\mathrm{uniform}}$.}
Without loss of generality, we assume $\|\wb_{\mathrm{uniform}}\|_2=\|\wb_{\max}\|_2=1$. Then according to the definition of uniform margin, we can get that for all $i\in[n]$, it holds that for all strong signal data
\begin{align}\label{eq:uniform_strong}
|\la\wb_{\mathrm{uniform}},\ub\ra|=|\la\wb_{\mathrm{uniform}},\vb\ra| = |\la\wb_{\mathrm{uniform}},\bxi_i\ra|;
\end{align}
and for all weak signal data:
\begin{align}\label{eq:uniform_weak}
|\la\wb_{\mathrm{uniform}},\ub\ra|=|\la\wb_{\mathrm{uniform}},\vb\ra| =|\la\wb_{\mathrm{uniform}},\bxi_i+\alpha\zeta_i\ub\ra|.
\end{align}
Then note that $\wb_{\mathrm{uniform}}$ lies in the span of $\{\ub,\vb\}\cup\{\bxi_i\}_{i=1,\dots,n}$, we can get,
\begin{align*}
\bigg|\bigg\la \wb_{\mathrm{uniform}}, \frac{\ub}{\|\ub\|_2}\bigg\ra\bigg|^2 + \bigg|\bigg\la \wb_{\mathrm{uniform}}, \frac{\vb}{\|\vb\|_2}\bigg\ra\bigg|^2 + \sum_{i=1}^n \bigg|\bigg\la \wb_{\mathrm{uniform}}, \frac{\bxi_i+\alpha\zeta_i\ub}{\|\bxi_i+\alpha\zeta_i\ub\|_2}\bigg\ra\bigg|^2 \ge \|\wb_{\mathrm{uniform}}\|_2^2=1,
\end{align*}
where we slightly abuse the notation by setting $\zeta_i=0$ if $(\xb_i,y_i)$
is a strong signal data. Then use the fact that $\|\ub\|_2=1$, $\|\vb\|_2=\alpha^2$, and $\sigma=d^{-1/2}$, we can get that with probability at least $1-\exp(-\Omega(d))$ with respect to the randomness of training data,
\begin{align*}
|\la\wb_{\mathrm{uniform}}, \ub\ra|^2 + \alpha^{-4}|\la\wb_{\mathrm{uniform}}, \vb\ra|^2 + \sum_{i=1}^n |\la\wb_{\mathrm{uniform}}, \bxi_i+\alpha\zeta_i\ub\ra|^2 \ge c
\end{align*}
for some absolute positive constant $c$. Then by \eqref{eq:uniform_strong} and \eqref{eq:uniform_weak} and using the fact that $\alpha=n^{-1/4}$, we can immediately get that 
\begin{align*}
\la\wb_{\mathrm{uniform}},\ub\ra = \la\wb_{\mathrm{uniform}},\vb\ra = \Omega(n^{-1/2}).
\end{align*}

We will then move on to the test phase. Consider a new strong signal data $(\xb ,y)$ with $\xb = [\ub, \bxi]$ and $y=1$, we have
\begin{align*}
\la\wb_{\mathrm{uniform}}, \xb^{(1)}\ra + \la\wb_{\mathrm{uniform}}, \xb^{(2)}\ra = \la\wb_{\mathrm{uniform}}, \ub\ra + \la\wb_{\mathrm{uniform}}, \bxi\ra = \Omega(n^{-1/2})  + \xi,
\end{align*}
where $\xi$ is an independent Gaussian random variable with variance smaller than $\sigma^2$. Additionally, given a weak signal data $(\xb ,y)$ with $\xb = [\vb, \bxi]$ and $y=1$, we have
\begin{align*}
\la\wb_{\mathrm{uniform}}, \xb^{(1)}\ra + \la\wb_{\mathrm{uniform}}, \xb^{(2)}\ra &= \la\wb_{\mathrm{uniform}}, \vb\ra + \la\wb_{\mathrm{uniform}}, \bxi+\alpha\zeta\ub\ra \notag\\
&= \la\wb_{\mathrm{uniform}}, \vb\ra\cdot (1 + \alpha\zeta)  + \xi\notag\\
&= \Omega(n^{-1/2})+\xi,
\end{align*}
where the second equation is due to $\la\wb_{\mathrm{uniform}}, \ub\ra=\la\wb_{\mathrm{uniform}}, \vb\ra$ and $\xi$ is an independent Gaussian random variable with variance smaller than $\sigma^2$. Further note that $d=\omega(n\log(n))$ and $\sigma = d^{-1/2}$, we can immediately get that with probability at most $1-1/\poly(n)\ge 1-1/n^{10}$, the random variable $\xi$ will exceed $n^{-1/2}$. This completes the proof for the uniform margin solution.

\noindent\textbf{Proof for maximum margin solution $\wb_{\max}$.}
For the maximum margin solution, we consider 
\begin{align}\label{eq:def_maxmargin_proof}
\wb_{\max} = \arg\min_{\wb} \|\wb\|_2\quad \text{s.t. } y_i\wb^\top[\xb_i^{(0)}+ \xb_i^{(1)}]\ge 1.
\end{align}
We first prove an upper bound on the norm of $\wb_{\max}$ as follows. Based on the above definition, the upper bound can be obtained by simply finding a $\wb$ that satisfies the margin requirements. Therefore, let $\zb_i = y_i\cdot [\xb_i^{(0)}+\xb_i^{(1)}]$, we consider a candidate solution $\hat\wb$ that satisfies 
\begin{align*}
&\text{ for all strong signal data:}\quad\hat\wb^\top\ub = 1, y_i\hat\wb^\top\bxi_i=0 ;\\
&\text{ for all weak signal data:}\quad\hat\wb^\top\vb=0, \wb^\top y_i(\bxi_i+\alpha\zeta_i\ub)=1 
\end{align*}
Then, let $\Pb_{\cE^c}$ be the projection on the subspace that is orthogonal to all noise vectors of the strong signal data. Then the above condition for weak signal data requires
\begin{align*}
y_i\la\hat\wb,\Pb_{\cE^c}\bxi_i\ra = 1 - \alpha y_i\zeta_i,
\end{align*}
where we use the fact that $\la\hat\wb,\ub\ra=1$. 
Let $n_1$ and $n_2$ be the numbers of strong signal data and weak signal data respectively, 
 which clearly satisfy $n_1=\Theta(n)$ and $n_2=\Theta(\rho n)=\Theta(n^{1/4})$ with probability at least $1-\exp(-\Omega(n^{1/4}))$. Define $\rb \in\RR^{n_2\times 1}$ be the collection of $y_i(1 - \alpha y_i\zeta_i)$ for all weak signal data, let $\bomega_i = \Pb_{\cE^c}\bxi_i$ and set $\bOmega\in\RR^{n_2\times d}$ as the collection of $\bomega_i$'s. Then the above equation can be rewritten as $\bOmega\hat\wb = \rb$, which leads to a feasible solution 
\begin{align*}
\hat\wb = \ub +\bOmega^\top(\bOmega\bOmega^\top)^{-1}\rb.
\end{align*}
Then note that the noise vectors for all data points are independent, conditioning on $\Pb_{\cE^c}$, the random vector $\omega_i=\Pb_{\cE^c}\bxi_i$ can be still regarded as a Gaussian random vector in $d-n_1-2$ dimensional space. Then by standard random matrix theory, we can obtain that  we can get that with probability at least $1-\exp(-\Omega(d))$ with respect to the randomness of training data, $\lambda_{\min}(\bOmega\bOmega^\top),\lambda_{\min}(\bOmega\bOmega^\top)=\Theta\big(\sigma^2\cdot (d-n_1-2)\big)=\Theta(1)$. Further note that $\|\rb\|_2 = \Theta(n_2^{1/2})$, we can finally obtain that
\begin{align}\label{eq:upperbound_wmax}
\|\wb_{\max}\|_2^2\le \|\hat\wb\|_2^2 = 1 + \big\|\bOmega^\top(\bOmega\bOmega^\top)^{-1}\rb\big\|_2^2 = \Theta(n_2)=\Theta(n^{1/4}).
\end{align}
Therefore, we can further get
\begin{align*}
\la\wb_{\max},\vb\ra\le \|\wb_{\max}\|_2\cdot\|\vb\|_2=O\big(\alpha^2 n^{1/8}\big)=O(n^{-7/8}).
\end{align*}

Next, we will show that $\la\wb_{\max},\ub\ra>1/2$. In particular, under the margin condition in \eqref{eq:def_maxmargin_proof}, we have for all strong signal data
\begin{align*}
\la\wb_{\max},\ub\ra + \la\wb_{\max},y_i\bxi_i\ra\ge 1.
\end{align*}
Then if $\la\wb_{\max},\ub\ra\le 1/2$, we will then get $\la\wb_{\max},y_i\bxi_i\ra\ge 1/2$ for all strong signal data. Let $\cS_{\ub}$ be the collection of indices of strong signal data, we further have
\begin{align*}
\bigg\la\wb_{\max},\sum_{i\in\cS_{\ub}}y_i\bxi_i\bigg\ra\ge n_1.
\end{align*}
Let $\bxi' = \sum_{i\in\cS_{\ub}}y_i\bxi_i$, it can be seen that $\bxi'$ is also a Gaussian random vector with covariance matrix $n_1\sigma^2\big(\Ib-\ub\ub^\top/\|\ub\|_2^2-\vb\vb^\top/\|\vb\|_2^2\big)$. This implies that conditioning on $\cS_\ub$, with probability at least $1-\exp(-\Omega(d))$, it holds that $\|\bxi'\|_2=\Theta(n_1^{1/2})$. Then using the fact that $n_1=\Theta(n)$, it further suggests that
\begin{align*}
\|\wb_{\max}\|_2\ge \frac{n_1}{\|\bxi'\|_2}=\Theta(n^{1/2}),
\end{align*}
which contradicts the upper bound on the norm of maximum margin solution we have proved in \eqref{eq:upperbound_wmax}. Therefore we must have $\la\wb_{\max},\ub\ra>1/2$. 

Finally, we are ready to evaluate the test error of $\wb_{\max}$. In particular, since we are proving a lower bound on the test error, we will only consider the weak signal data while assuming all strong feature data can be correctly classified. Consider a weak signal data $(\xb, y)$ with with $\xb = [\vb, \bxi]$ and $y=1$, we have
\begin{align*}
\la\wb_{\max}, \xb^{(1)}\ra + \la\wb_{\max}, \xb^{(2)}\ra &= \la\wb_{\max}, \vb\ra + \la\wb_{\max}, \bxi+\alpha\zeta\ub\ra \notag\\
&= \la\wb_{\max}, \vb\ra  + \zeta\cdot\alpha\la\wb_{\max},\ub\ra+\|\wb_{\max}\|_2\cdot\xi,
\end{align*}
where $\xi$ is a random variable with variance smaller than $\sigma_0$.
Note that $\zeta=-1$ with half probability. In this case, using our previous results on $\la\wb_{\max},\vb\ra$ and $\la\wb_{\max},\ub\ra$, and the fact that $\xi$ is independent of $\vb$ and $\ub$, we can get with probability at least $1/2$,
\begin{align*}
\la\wb_{\max}, \xb^{(1)}\ra + \la\wb_{\max}, \xb^{(2)}\ra \le \Theta(n^{-7/8}) - \Theta(n^{-1/2}) <0
\end{align*}
which will lead to an incorrect prediction. This implies that for weak signal data, the population prediction error will be at least $1/4$. Noting that the weak signal data will appear with probability $\rho$, combining them will be able to complete the proof.
\end{proof}

\section{Proofs of Technical Lemmas}

\subsection{Proof of Lemma~\ref{lemma:auxiliary_inequality}}

\begin{proof}[Proof of Lemma~\ref{lemma:auxiliary_inequality}]
First, it is easy to see that
\begin{align}\label{eq:half_sum}
\sum_{i,i'=1}^n \max\{b_i,b_{i'}\}\cdot (a_i - a_{i'})^2  = 2\sum_{i=1}^n\sum_{i'>i}b_i\cdot (a_{i'} - a_i)^2.
\end{align}
Then, we can also observe that for any $i<j$, we have
\begin{align}\label{eq:increasing_ai}
\sum_{i'>i}(a_{i'}-a_i)^2 \ge \sum_{i'>j} (a_{i'}-a_i)^2 \ge \sum_{i'>j}(a_{i'}-a_j)^2,
\end{align}
where we use the fact that $a_i\le a_j$.
Then let $\bar b = \frac{\sum_{i=1}^n b_i}{n}$ and $k^*$ be the index satisfying $b_{k^*-1}\ge \bar b/2>b_{k^*}$, we can immediately get that
\begin{align}\label{eq:bound_sum_bis}
\sum_{i\ge k^*}b_i \le \frac{\sum_{i=1}^n b_i}{2n}\cdot n = \frac{\sum_{i=1}^n b_i}{2}, 
\end{align}
which further implies that $
\sum_{i< k^*}b_i \ge n\bar b/2$.
Then by \eqref{eq:increasing_ai}, we can get
\begin{align*}
\frac{\bar b}{2}\cdot\sum_{i\ge k^*}\sum_{i'>i}  (a_{i'}-a_i)^2 \le \frac{\bar b}{2}\cdot\sum_{i\ge k^*} \sum_{i'>k^*} (a_{i'}-a_{k^*})^2 \le \frac{n\bar b}{2}\cdot \sum_{i'>k^*}(a_{i'}-a_{k^*})^2.
\end{align*}
Besides, we also have
\begin{align*}
\sum_{i< k^*}\sum_{i'>i} b_i\cdot (a_{i'}-a_i)^2 \ge\bigg(\sum_{i<k^*} b_i\bigg)\cdot\sum_{i'>k^*}  (a_{i'}-a_{k^*})^2 \ge \frac{n\bar b}{2}\cdot \sum_{i'>k^*}(a_{i'}-a_{k^*})^2,
\end{align*}
which immediately implies that
\begin{align*}
\sum_{i< k^*}\sum_{i'>i} b_i\cdot (a_{i'}-a_i)^2\ge\frac{\bar b}{2}\cdot\sum_{i\ge k^*}\sum_{i'>i}  (a_{i'}-a_i)^2.
\end{align*}
Therefore, we can get
\begin{align*}
\sum_{i=1}^n\sum_{i'>i}b_i\cdot (a_{i'} - a_i)^2&\ge \sum_{i< k^*}\sum_{i'>i} b_i\cdot (a_{i'}-a_i)^2\notag\\
&\ge \sum_{i< k^*}\sum_{i'>i} \frac{b_i}{2}\cdot (a_{i'}-a_i)^2 + \frac{\bar b}{4}\cdot \sum_{i\ge k^*}\sum_{i'>i}(a_{i'}-a_i)^2\notag\\
&\ge\frac{\bar b}{4}\sum_{i=1}^n\sum_{i'>i}(a_{i'}-a_i)^2,
\end{align*}
where we use the fact that $b_i\ge \bar b/2$ for all $i< k^*$.
Finally, putting the above inequality into \eqref{eq:half_sum} and applying the definition of $\bar b$, we are able to complete the proof.
\end{proof}

\subsection{Proof of Lemma~\ref{lemma:gamma_comparison}}

\begin{proof}[Proof of Lemma~\ref{lemma:gamma_comparison}]
We first show the lower bound of $ a^{(t)}$. Consider a continuous-time sequence $\underline{a}^{(t)} $, $t\geq 0$ defined by the integral equation
\begin{align}\label{eq:integral_equation}
    \underline{a}^{(t)} = \underline{a}^{(0)} + c\cdot  \int_{0}^{t} \exp(-\underline{a}^{(\tau)} ) \mathrm{d} \tau,\quad \underline{a}^{(0)} = a^{(0)}.   
\end{align}
Note that $ \underline{a}^{(t)}$ is obviously an increasing function of $t$. Therefore we have
\begin{align*}
    \underline{a}^{(t+1)} &= \underline{a}^{(t)} + c\cdot \int_{t}^{t+1} \exp(-\underline{a}^{(\tau)} ) \mathrm{d} \tau\\
    &\leq \underline{a}^{(t)} + c\cdot \int_{t}^{t+1} \exp(-\underline{a}^{(t)} ) \mathrm{d} \tau\\
    &= \underline{a}^{(t)} + c\cdot \exp(-\underline{a}^{(t)} )
\end{align*}
for all $t\in \NN$. 
Comparing the above inequality with the iterative formula $ a^{(t+1 )} = a^{(t)} + c \cdot \exp(-a^{(t)})$, we conclude by the comparison theorem that $a^{(t)} \geq \underline{a}^{(t)}$ for all $t\in \NN$. Note that \eqref{eq:integral_equation} has an exact solution
\begin{align*}
     \underline{a}^{(t)} = \log( c\cdot t + \exp(a^{(0)}) ).
\end{align*}
Therefore we have
\begin{align*}
    a^{(t)} \geq \log( c\cdot t + \exp(a^{(0)}) )
\end{align*}
for all $t\in \NN$, which completes the first part of the proof. Now for the upper bound of $a^{(t)}$, we have
\begin{align*}
    a^{(t)} &= a^{(0)} +c\cdot \sum_{\tau=0}^{t} \exp(- a^{(\tau)} ) \\
    &\leq a^{(0)} +c\cdot \sum_{\tau=0}^{t} \exp[- \log( c\cdot \tau + \exp(a^{(0)}) ) ]\\
    &= a^{(0)} +c\cdot \sum_{\tau=0}^{t} \frac{1}{  c\cdot \tau + \exp(a^{(0)})  }\\
    &= a^{(0)} + \frac{c}{ \exp(a^{(0)})  } +  c\cdot \sum_{\tau=1}^{t} \frac{1}{  c\cdot \tau + \exp(a^{(0)})  }\\
    &\leq a^{(0)} + \frac{c}{ \exp(a^{(0)})  } +  c\cdot\int_{0}^t \frac{1}{  c\cdot \tau + \exp(a^{(0)})  }\mathrm{d}\tau, 
\end{align*}
where the second inequality follows by the lower bound of $a^{(t)}$ as the first part of the result of this lemma. Therefore we have
\begin{align*}
    a^{(t)}&\leq a^{(0)} + \frac{c}{ \exp(a^{(0)})  } + \log( c\cdot t +\exp(a^{(0)}) ) - \log(\exp(a^{(0)}) )\\
    &= c \exp(-a^{(0)}) + \log( c\cdot t +\exp(a^{(0)}) ).
\end{align*}
This finishes the proof.
\end{proof}

\bibliography{deeplearningreference}
\bibliographystyle{ims}

\end{document}